\newcommand{\REM}[1]{}
\DeclareMathOperator*{\argmin}{argmin} 
\DeclareMathOperator*{\argmax}{argmax} 
\newtheorem{theorem}{Theorem}
\newtheorem{lemma}[theorem]{Lemma}
\newtheorem{corollary}[theorem]{Corollary}
\newenvironment{customthm}[1]
  {\innercustomthm}
  {\endinnercustomthm}
\newenvironment{customthml}[1]
  {\innercustomthml}
  {\endinnercustomthml}
\newenvironment{customthmb}[1]
{\innercustomthmb}
 {\endinnercustomthmb}
\title{A Surrogate Objective Framework for Prediction+Optimization with Soft Constraints}
\newcommand{\eg}{\emph{e.g.,}\xspace}
\newcommand{\ie}{\emph{i.e.,}\xspace}
\newcommand{\etaldot}{\emph{et al.}\xspace}
\author{
  Kai Yan\thanks{Contributed during internship at Microsoft Research. $\dagger$ Corresponding Author.} \\
  Department of Computer Science\\
  University of Illinois at Urbana-Champaign\\
  Urbana, IL 61801 \\
  \texttt{kaiyan3@illinois.edu} \\
   \And
   Jie Yan \\
   Microsoft Research \\
   Beijing, China \\
   \texttt{jiey@microsoft.com} \\
   \And
   Chuan Luo \\
   Microsoft Research \\
   Beijing, china \\
   \texttt{chuan.luo@microsoft.com} \\
   \And
   Liting Chen$^*$ \\
   Microsoft Research \\
   Beijing, China \\
   \texttt{98chenliting@gmail.com} \\
   \And
   Qingwei Lin$^\dagger$\\
   Microsoft Research \\
   Beijing, China \\
   \texttt{qlin@microsoft.com} \\
   \And
   Dongmei Zhang \\
   Microsoft Research \\
   Beijing, China \\
   \texttt{dongmeiz@microsoft.com}
}
\begin{document}

\maketitle
 
\begin{abstract}
 Prediction+optimization is a common real-world paradigm where we have to predict problem parameters before solving the optimization problem. However, the criteria by which the prediction model is trained are often inconsistent with the goal of the downstream optimization problem. Recently, decision-focused prediction approaches, such as SPO+ and direct optimization, have been proposed to fill this gap. However, they cannot directly handle the soft constraints with the $max$ operator required in many real-world objectives. This paper proposes a novel analytically differentiable surrogate objective framework for real-world linear and semi-definite negative quadratic programming problems with soft linear and non-negative hard constraints. This framework gives the theoretical bounds on constraints' multipliers, and derives the closed-form solution with respect to predictive parameters and thus gradients for any variable in the problem. We evaluate our method in three applications extended with soft constraints: synthetic linear programming, portfolio optimization, and resource provisioning, demonstrating that our method outperforms traditional two-staged methods and other decision-focused approaches. 
\end{abstract}
\section{Introduction} 

Mathematical optimization (a.k.a. mathematical programming), e.g., linear and quadratic programming, has been widely applied in decision-making processes, such as resource scheduling \cite{LuoEtAl20}, goods production planning \cite{goodsprogramming}, portfolio optimization \cite{surrogate-melding}, and power scheduling \cite{DPpred+op}. In practice, problem parameters (e.g., goods demands, and equity returns) are often contextual and predicted by models with observed features (e.g., history time series). With the popularity of machine learning techniques and increasing available data, prediction+optimization has become a normal paradigm \cite{pmlr-v28-ganeshapillai13}. 

Prediction becomes critical to the performance of the full prediction+optimization workflow since modern optimization solvers (e.g., Gurobi \cite{gurobi} and CPLEX \cite{cplex}) can already efficiently find optimal solutions for most large scale optimization problems. Traditionally, prediction is treated separately as a general supervised learning problem and learned through minimizing a generic loss function (e.g., mean squared error for regression). However, studies have shown that minimization of the fitting errors does not necessarily lead to better final decision performance \cite{ijns-Bengio97, aaai-WilderDT19, elmachtoub2020smart, task-opt-nips2017}.

Recently, a lot of efforts on using optimization objective to guide the learning of prediction models have been made, which are \textit{decision-focused} when training prediction models instead of using traditional prediction metrics, e.g. mean squared error losses. For linear objectives, the `Smart Predict then Optimize' (\cite{elmachtoub2020smart}) proposes the SPO+ loss function to measure the prediction errors against optimization objectives, while direct optimization (\cite{pmlr-v48-songb16}) updates the prediction model's parameters by perturbation. For quadratic objectives, OptNet \cite{optnet, amos2019differentiable} implements the optimization as an implicit layer whose gradients can be computed by differentiating the KKT conditions and then back propagates to prediction neural network. CVXPY \cite{cvxpy} uses similar technologies with OptNet but extends to more general cases of convex optimization. However, all the above state-of-the-art approaches do not contain the soft constraints in their objectives. In this paper, We consider linear `soft-constraints', a penalty in the form of $\max(z, 0)$, where $z=Cx-d$ is a projection of decision variable $x \in \mathbb{R}^n$ and context variables $C \in \mathbb{R}^{m \times n}, d \in \mathbb{R}^m$. Such soft constraints are often required in practice: for example, they could be the waste of provisioned resources over demands or the extra tax paid when violating regulatory rules. Unfortunately, the $\max(\cdot, 0)$ operator is not differentiable and thus cannot be directly handled by these existing approaches. To differentiate soft constraints is a primary motivation of this paper.

\REM{
Recently, efforts of decision-focused prediction have been made to align the prediction objectives with decision performance. For linear objective, the 'Smart Predict then Optimize' (\cite{elmachtoub2020smart}) propose a special loss function SPO+ measuring the prediction errors for the family of linear programming objectives, and direct optimization (\cite{pmlr-v48-songb16}) updates the prediction models' parameters towards optimal decision objectives by perturbation. Wilder et.al (\cite{aaai-WilderDT19}), proposed an end-to-end framework targeted a set of combinatorial optimization problems, that makes decision variables differentiable with respect to the prediction models, which has several following improvements \cite{surrogate-melding, HSD}. However, such models have two limits: 1) they are based on solving KKT systems or their equivalents, which unnecessarily introduces the gradient of dual variable $\lambda$ with respect to the predicted parameters. 2) None of the current state-of-the-art methods considers soft-constraint in their objective, which is a common and sometimes critical practice in real-world scenarios: for example, providing items over the customers' need will yield zero or even negative utility, yet it is still a feasible solution which cannot be integrated into restriction. Other decision-focused prediction methods, such as generalized gradient \cite{Gao19} and dynamic programming \cite{dp-differentiable}, are restricted to limited types of problem formulations.
}



In this paper, we derive a surrogate objective framework for a broad set of real-world linear and quadratic programming problems with linear soft constraints and implement decision-focused differentiable predictions, with the assumption of non-negativity of hard constraint parameters. The framework consists of three steps: 1) rewriting all hard constraints into piece-wise linear soft constraints with a bounded penalty multiplier; 2) using a differentiable element-wise surrogate to substitute the piece-wise objective, and solving the original function numerically to decide which segment the optimal point is on; 3) analytically solving the local surrogate and obtaining the gradient; the gradient is identical to that of the piecewise surrogate since the surrogate is convex/concave such that the optimal point is unique.

Our main contributions are summarized as follows. First, we propose a differentiable surrogate objective function that incorporates both soft and hard constraints. As the foundation of our methodology, in Section \ref{sec:methodology},  we prove that, with reasonable assumptions generally satisfied in the real world, for linear and semi-definite negative quadratic programming problems, the constraints can be transformed into soft constraints; then we propose an analytically differentiable surrogate function for the soft constraints $\max(\cdot, 0)$. Second, we present the derived analytical and closed-form solutions for three representative optimization problems extended with soft constraints in Section 4 -- linear programming with soft constraints, quadratic programming with soft constraints, and asymmetric soft constraint minimization. Unlike KKT-based differentiation methods, our method makes the calculation of gradients straightforward for predicting context parameters in any part of the problem. Finally, we apply with theoretical derivations and evaluate our approach in three scenarios, including synthetic linear programming, portfolio optimization, and resource provisioning in Section 4, empirically demonstrate that our method outperforms two-stage and other predict+optimization approaches.

\section{Preliminaries}\label{sec:formulation}

%
\subsection{Real-world Optimization Problems with Soft Constraints}
\label{sub:formulation-opt}

Our target is to solve the broad set of real-world mathematical optimization problems extended with a $max(z, 0)$ term in their objectives where $z$ depends on decision variables and predicted context parameters. In practice, $max(z, 0)$ is very common; for example, it may model overhead of under-provisioning, over-provisioning of goods, and penalty of soft regulation violations in investment portfolios. We call the above $max(z, 0)$ term in an objective as \textit{soft constraints}, where $z \leq 0$ is allowed to violate as long as the objective improves.

\par The general problem formulation is 
\begin{equation}\label{eq:prob}
\small
\max_x g(\theta, x)-\alpha^T\max(z, 0),\ 
z = Cx-d,\
\text{s.t. } Ax\leq b, Bx=c,\ x\geq 0
\end{equation}
where $g$ is a utility function, $x\in \mathbb{R}^{n}$ the decision variable, and $\theta\in\mathbb{R}^{n}$ the predicted parameters. 

Based on observations on a broad set of practical problem settings, we impose two assumptions on the formulation, which serves as the basis of following derivations in this paper. First, we assume $A\in\mathbb{R}^{m_1\times n}\geq 0$, $b\in\mathbb{R}^{m_1}\geq 0$, $B\in\mathbb{R}^{m_2 \times n}\geq 0$ ,and $c\in\mathbb{R}^{m_2}\geq 0$ hold. This is because for problems with constraint on weights, quantities or their thresholds, these parameters are naturally non-negative.
\REM{In the objective, $g(\theta, x)$ is a linear or semi-definite negative quadratic function} 
Second, we assume the linearity of soft constraints, that is $z=Cx-d$, where $z\in\mathbb{R}^{m_3}$, $C\in\mathbb{R}^{m_3\times n}$, and $d\in\mathbb{R}^{m_3}$. This form of soft constraints make sense in wide application situations when describing the penalty of goods under-provisioning or over-provisioning, a vanish in marginal profits, or running out of current materials. 

\par Now we look into three representative instances of Eq.\ref{eq:prob}, extracted from real-world applications.

\textbf{Linear programming with soft constraints}, where $g(x,\theta)=\theta^Tx$. The problem formulation is
\begin{equation}\label{eq:prob1}
\small
 \max_x \theta^Tx-\alpha^T\max(Cx-d, 0),
\text{s.t. } Ax\leq b, Bx=c, x\geq 0
\end{equation}
where $\alpha \geq 0$. Consider the application of logistics where $\theta$ represents goods' prices, \{$A$, $B$, $C$\} are the capability of transportation tools, and \{$b$, $c$\} are the thresholds. Obviously, $A, b, B, c \geq 0$ hold.

\textbf{Quadratic programming with soft constraints}, where $g(\theta, x)=\theta^Tx-x^TQx$. One example is the classic minimum variance portfolio problem \cite{portfolio} with semi-definite positive covariance matrix $Q$ and expected return $\theta$ to be predicted. We extend it with soft constraints which, for example, may represent regulations on portions of equities for some fund types. Formally with $\alpha \geq 0$, we have:
\begin{equation}
\small
\begin{aligned}\label{eq:prob2}
& \max_{x}\theta^Tx-x^TQx-\alpha^T\max(Cx-d, 0),
\text{s.t. } Bx=c,~x\geq 0.
\end{aligned} 
\end{equation}

\textbf{Optimization of asymmetric soft constraints.} This set of optimization problems have the objective to match some expected quantities by penalizing excess and deficiency with probably different weights. Such formulation represents widespread resource provisioning problems, e.g., power\cite{DPpred+op} and cloud resources\cite{LuoEtAl20}, where we minimize the cost of under-provisioning and over-provisioning against demands. Formally with $\alpha_1, \alpha_2>0$, we have:
\begin{equation}
\small
\begin{aligned}\label{eq:prob3}
& \max_{x}-(\alpha_1^T\max(Cx-d, 0)+\alpha_2^T\max(d-Cx, 0)),\text{ s.t. }Bx=c,~x\geq 0.
\end{aligned}
\end{equation}
In this paper, we consider a challenging task where $C$ is a matrix to be predicted with known constants $d$. In reality, the $Cx-d$ term may represent the "wasted" part when satisfying the actual need of $d$. \REM{ Usually in real world, we have $\alpha_1=k_1\mathbf{1}, \alpha_2=k_2\mathbf{1}$, and $k_1>>k_2>0$.}

\REM{
We shall solved the above three problems theoretically in Section 5 and validated empirically in Section 6. For the rest of this paper, we will focus on the general solution of the three problems Eq. 2,3 and 4 above.}


\subsection{Prediction+Optimization}\label{sub:pred-opt}

For compactness we write Eq.\ref{eq:prob} as $\max_{x \in \mathcal{X}} f(x, \theta)$, where $f$ is the objective function and $\mathcal{X}$ is the feasible domain. The solver for $f$ is to solve $x^* = \text{argmax}_{x \in \mathcal{X}} f(x, \theta)$. With parameters $\theta$ known, Eq.\ref{eq:prob1}--\ref{eq:prob3} can be solved by mature solvers like Gurobi \cite{gurobi} and CPLEX \cite{cplex}. 

In prediction+optimization, $\theta$ is unknown and needs to be predicted from some observed features $\xi \in \Xi$. The prediction model is trained on a dataset $D = \{(\xi_i, \theta_i)\}_{i=1}^N$. In this paper, we consider the prediction model, namely $\Phi$, as a neural network parameterized with $\psi$. In traditional supervised learning, $\Phi_{\psi}$ is learned by minimizing a generic loss, e.g., L1 (Mean Absolute Error) or L2 (Mean Squared Error), which measures the expected distance between predictions and real values. However, such loss minimization is often inconsistent with the optimization objective $f$, especially when the prediction model is biased \cite{task-opt-nips2017}. 


\begin{figure}
    \centering
    \includegraphics[width=0.5\linewidth]{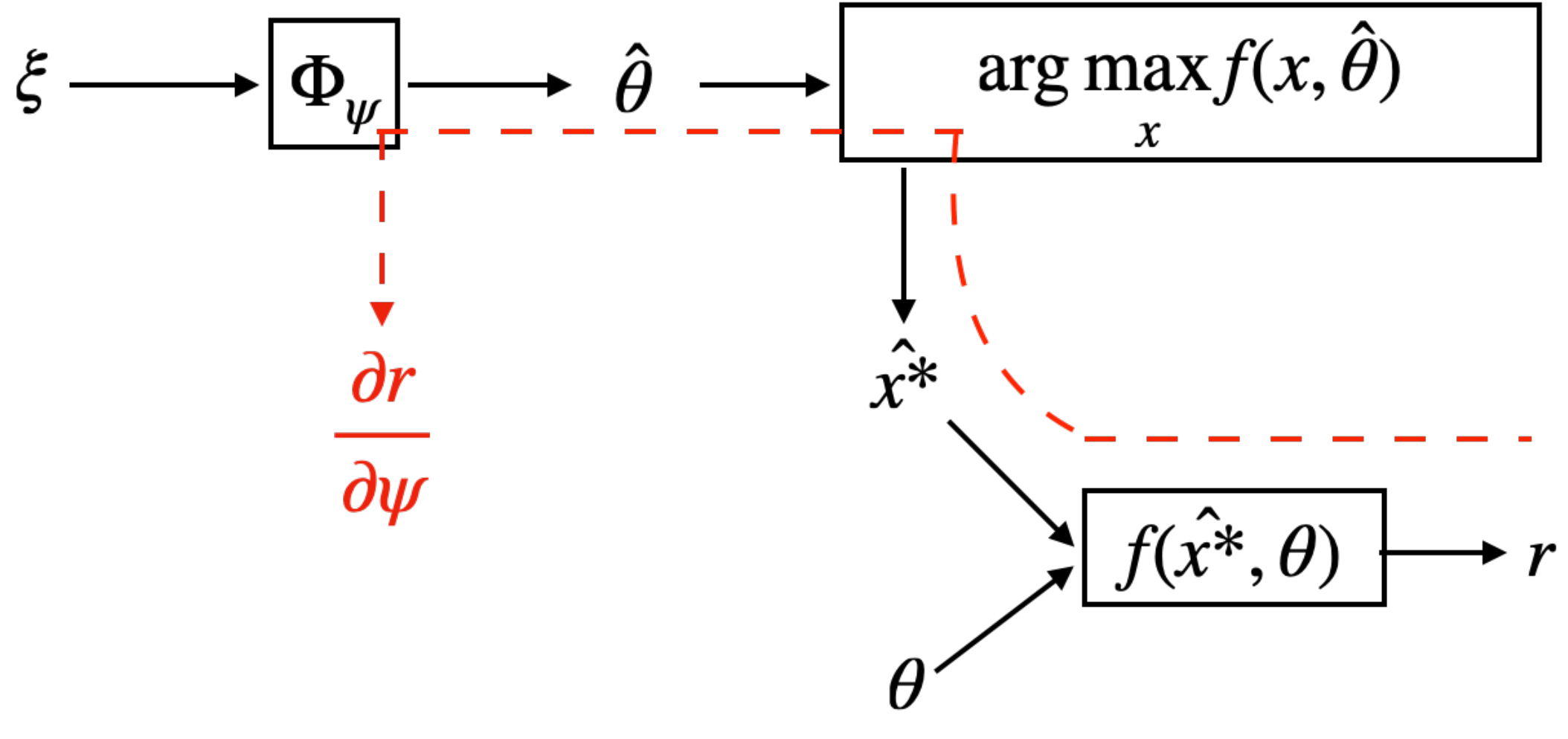}
    \caption{Computation graph of the decision-focused prediction methods.}
    \label{fig:computational_graph}
\end{figure}



Instead, decision-focused learning directly optimizes $\psi$ with respect to the optimization objective $f$, that is, $\max_{\psi} {\mathbf E}_{(\xi, \theta) \sim D}[f(\hat{x^*}(\Phi_{\psi}(\xi)), \theta)]$. The full computational flow is illustrated in Fig.\ref{fig:computational_graph}. In the gradient-based learning, update of $\psi$'s gradient is $\frac{\partial  r}{\partial \psi} = \frac{\partial \hat{\theta}}{\partial \psi} ~ \frac{\partial \hat{x^*}}{\partial \hat{\theta}} ~ \frac{\partial  r}{\partial \hat{x^*}}$, where utility $r=f(\hat{x^*},\theta)$. The Jacobian ${\partial \hat{\theta}}/{\partial \psi}$ is computed implicitly by auto-differentiation of deep learning frameworks (e.g, PyTorch \cite{pytorch}), and ${\partial  r}/{\partial \hat{x^*}}$ is analytical. The main challenge is to compute ${\partial \hat{x^*}}/{\partial \hat{\theta}}$, which depends on differentiating the $\argmax$ operation. One recent approach is to rewrite the objective to be convex (by adding a quadratic regularizer if necessary), build and differentiate the optimality conditions (e.g., KKT conditions) \cite{diffbook} which map $\hat{\theta}$ to the solution $\hat{x^*}$, and then apply implicit function theorem to obtain ${\partial \hat{x^*}}/{\partial \hat{\theta}}$. Alternatively, in this paper we propose a novel approach that rewrites the problem as an unconstrained problem with soft constraints and derives analytical solutions, based on our observations on the real-world problem structures and coefficient properties.





\section{Methodology}\label{sec:methodology}
Our main idea is to derive a surrogate function for $f$ with a closed-form solution such that the Jacobian $\frac{\partial x}{\partial \theta}$ is analytical, making the computation of gradient straightforward. Unlike other recent work \cite{optnet, amos2019differentiable, cvxpy, surrogate-melding}, our method does not need to solve KKT optimality condition system. Instead, by adding reasonable costs for infeasibility, we convert the constrained problem into an unconstrained one. With the assumption of concavity, we prove that there exist constant vectors $\beta_1, \beta_2, \beta_3$, such that Eq.\ref{eq:prob} can be equivalently transformed into an unconstrained problem:
\begin{equation}\label{eq:lagrangian}
\small
  \max_x L(x) = \max_x g(x, \theta) - \alpha^T \max(Cx-d, 0) -  \beta_1^T \max(Ax-b, 0) - \beta_2^T|Bx-c| - \beta_3^T \max(-x, 0)
 \end{equation}
The structure of this section is as follows. Section \ref{sub:hconstraints} proves that the three types of hard constraints can be softened by deriving bounds of $\beta_1, \beta_2, \beta_3$; for this paper, we will assign each entry of the three vectors the equal value (with a slight abuse of notation, we denote $\beta_1=\beta_2=\beta_3=\beta$ for the proofs of bounds; we align with the worst bound applicable to the problem formulation.) Section \ref{sub:sconstraints} proposes a novel surrogate function of $\max(\cdot, 0)$, such that the analytical form of $\frac{\partial x}{\partial{\theta}}$ can be easily derived via techniques of implicit differentiation \cite{diffbook} and matrix differential calculus \cite{matrixdiffcalculus} on equations derived by convexity \cite{cvxbook}. Based on such derived $\frac{\partial x}{\partial \theta}$, we develop our end-to-end learning algorithm of prediction+optimization whose detailed procedure is described in Appendix C. 




\subsection{Softening the Hard Constraints}\label{sub:hconstraints}
For any hard constraints $w=Ax-b \leq 0$, we denote its equivalent soft constraints as $H(w)=\beta^T \max(w, 0)$. $H(w)$ should satisfy two conditions: 1) for $w \leq 0$ (feasible $x$), $H(w)=0$; 2) for $w\geq 0$ (infeasible $x$), $H(w)$ is larger than \textit{the utility gain}(i.e., improvement of the objective value) $R=f(x,\theta)-\max_{x_1:Ax_1\leq b}f(x_1,\theta)$ by violating $Ax-b\leq 0$. Intuitively, the second condition requires a sufficiently large-valued $\beta>0$ to ensure that the optimization on the unconstrained surrogate objective never violates the original $Ax\leq b$; to make this possible, we assume that the $l2$-norm of the derivative of the objective $f$ before conversion is bounded by constant $E$. The difficulty of requirement 2) is that the distance of a point to the convex hull $l$ is not bounded by the sum of distances between the point and each hyper-plane in general cases, so the utility gain obtained from violating constraints is unbounded. Fig.~\ref{fig:fig1}-(a) shows such an example which features the small angle between hyper-planes of the activated cone. We will refer such kind of 'unbounding' as "\textit{acute angles}" below.


\begin{figure}
\centering
\subfigure[2D acute angle]{
\begin{minipage}[b]{0.25\linewidth}
\includegraphics[width=0.88\linewidth]{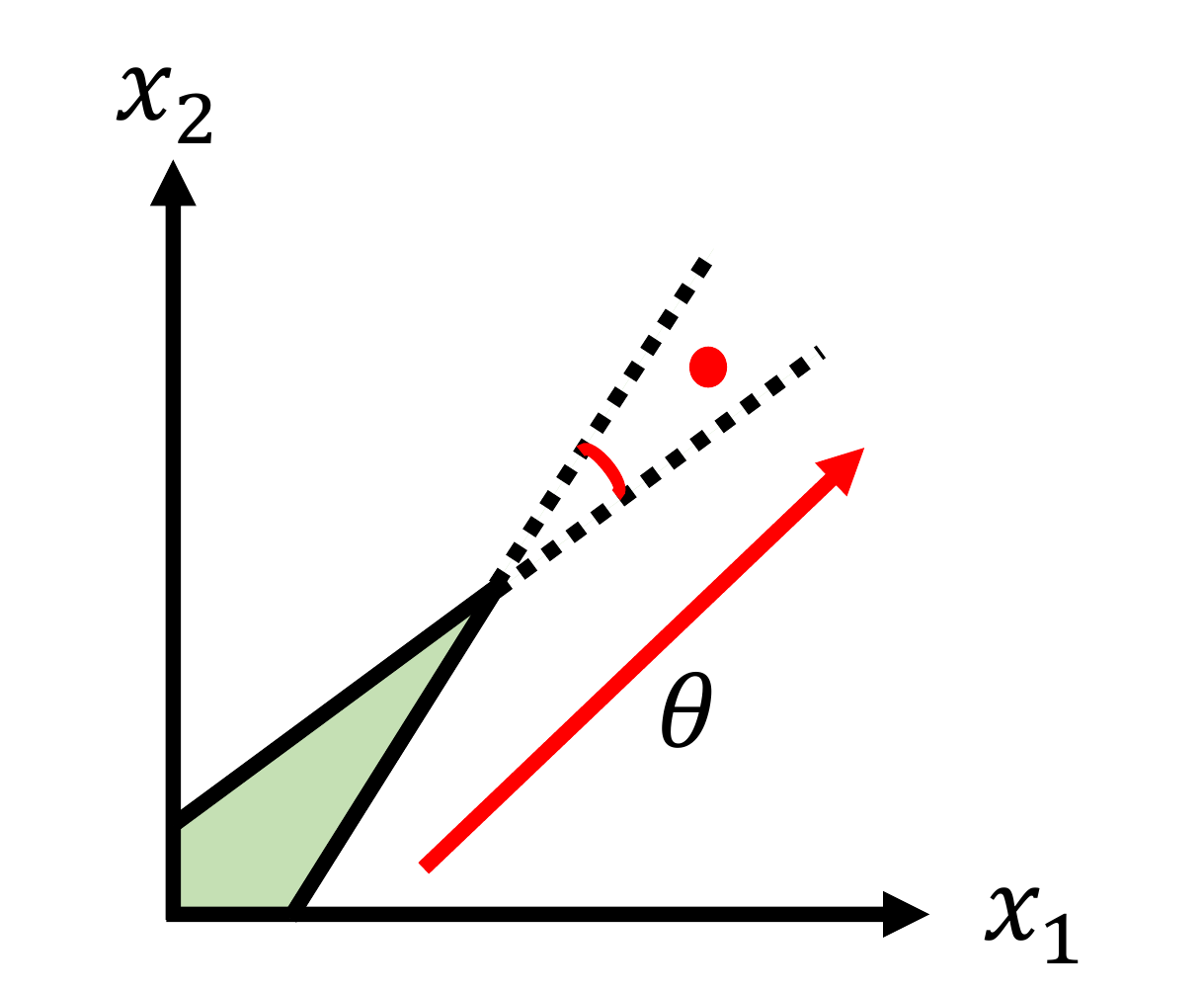}
\end{minipage}}
\subfigure[3D acute angle]{
\begin{minipage}[b]{0.23\linewidth}
\centering
\includegraphics[height=2.4cm]{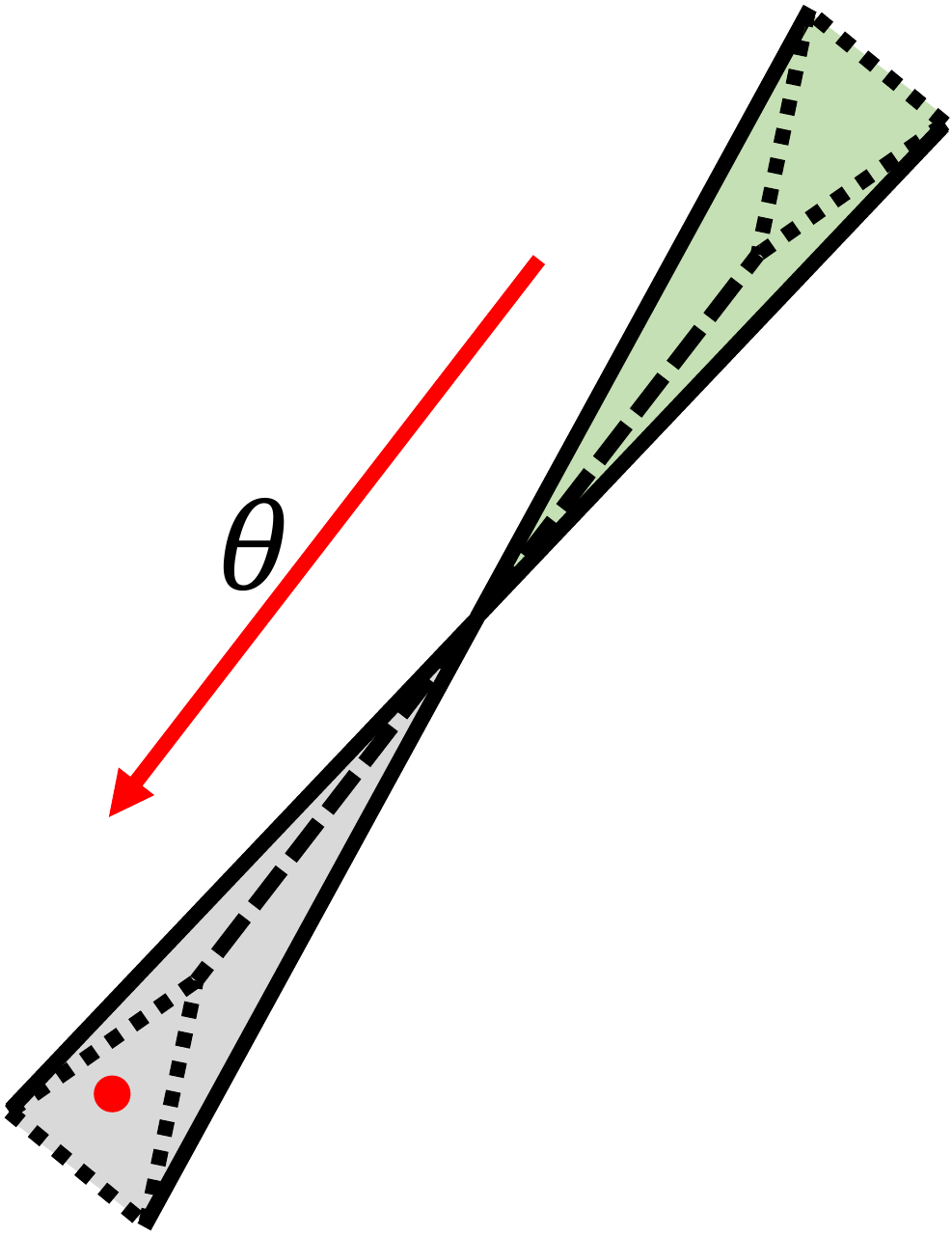}
\end{minipage}}
\subfigure[Non-degraded cone]{
\begin{minipage}[b]{0.23\linewidth}
\centering
\includegraphics[height=2.4cm]{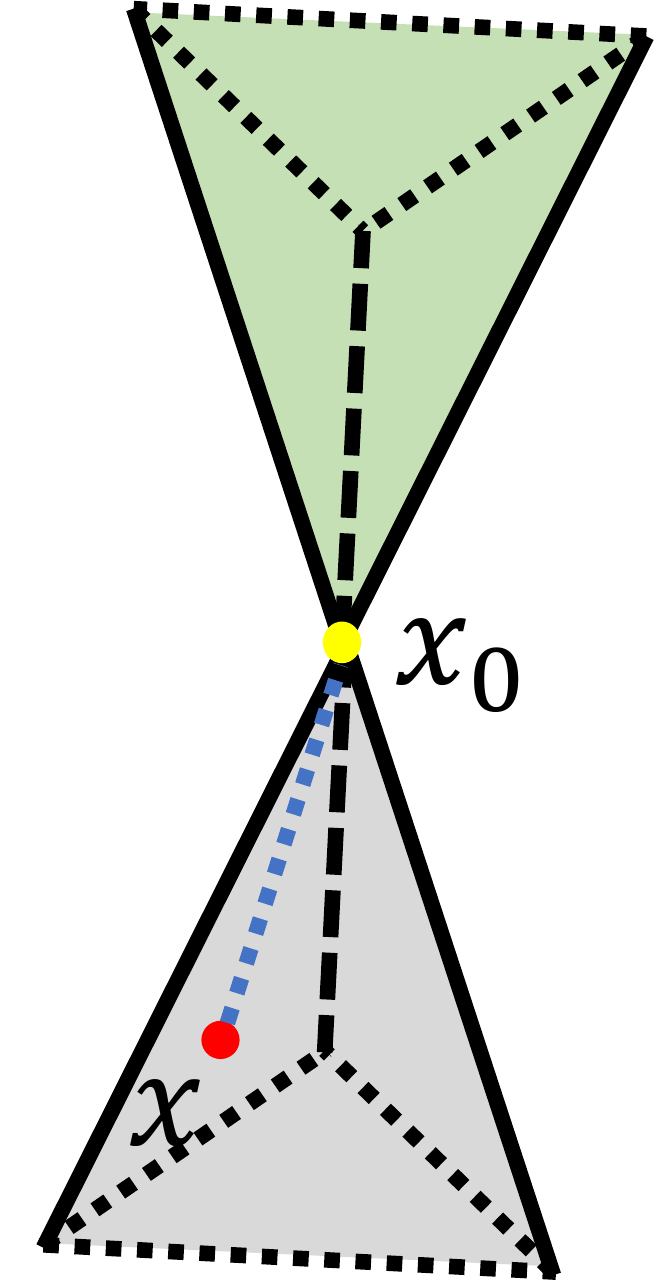}\end{minipage}}
\subfigure[Degraded cone]{
\begin{minipage}[b]{0.23\linewidth}
\centering
\includegraphics[height=2.4cm]{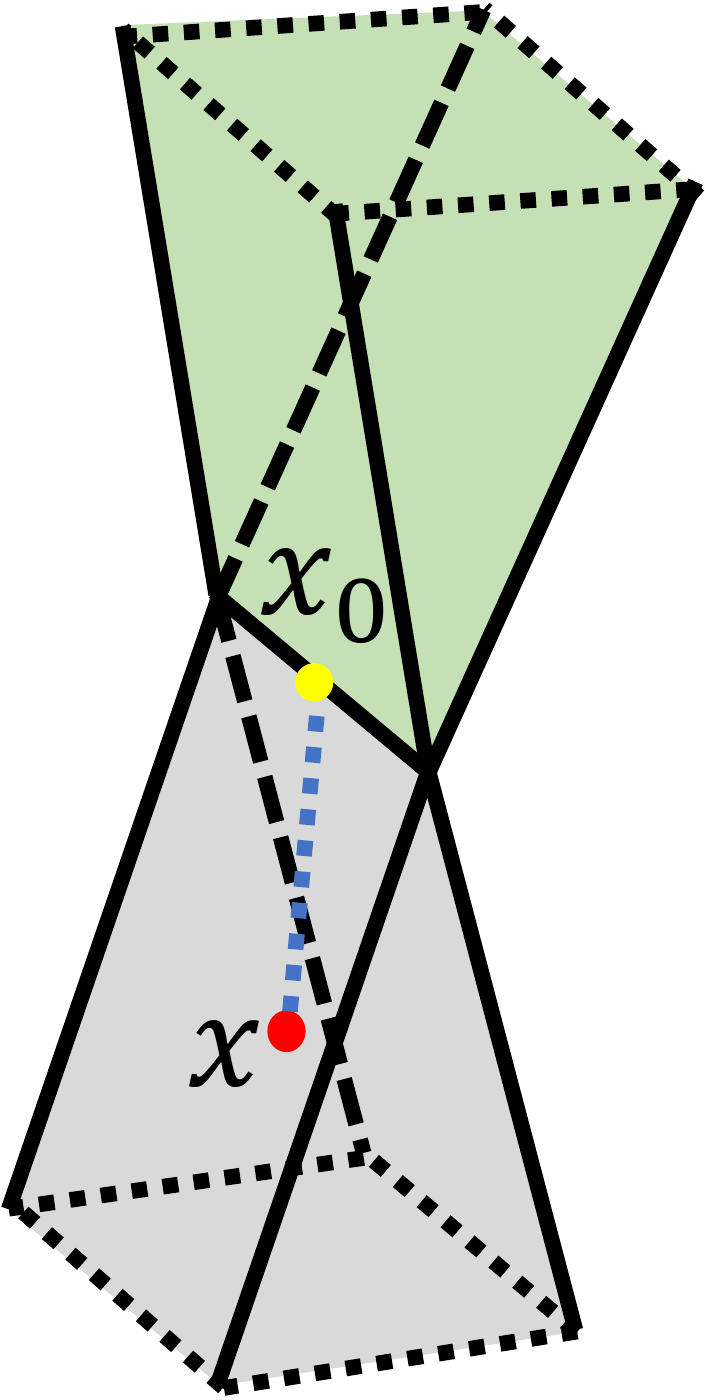}
\end{minipage}}
\caption{(a) and (b) are $2$ and $3$-dimensional "\textit{acute angles}"; (c) and (d) shows two corresponding \textit{activated cones} for given acute angles. The green area is the feasible region, $x$ is the red point and $x_0$ is the yellow point; the red $\theta$ is the derivative of an objective $g(x,\theta)=\theta^Tx$.} 
\label{fig:fig1}
\end{figure}

The main effort of this subsection is to analyze and bound the effect caused by such acute angles. Given a convex hull $\mathcal{C}=\{z \in \mathbb{R}^n | Az \leq b\}$ ($A\geq 0$ is not required here) and any point $x\not\in \mathcal{C}$, let $x_0\in \mathcal{C}$ be the nearest point of $\mathcal{C}$ to $x$, and $A'x\geq b'$ represent all active constraints at $x$, then all such active constraints must pass through $x_0$.\footnote{Otherwise, there must exist an active constraint $i$ and a  small neighbourhood of $x_0$, say $B(x_0, \epsilon)$, such that $\forall z\in B(x_0, \epsilon)$, $A_iz-b_i\neq 0$, which implies that either constraint $i$ is inactive ($<0$) or $x_0 \not\in \mathcal{C}$ ($>0$).} The domain $\mathcal{K} = \{z \in \mathbb{R}^n | A'z \geq b'\}$ is a cone or degraded cone where the tip of the cone is a subspace of $R^n$. For the rest of the paper, we will call $\mathcal{K}$ \textit{activated cone}, as shown in Fig.~\ref{fig:fig1}. Note that for any degraded activated cone, $x-x_0$ is always perpendicular to the tip subspace; therefore, we may project the cone onto the complementary space of the tip and get the same multiplier bound on the projected activated cone with lower dimensions.  


Ideally, we aim to eliminate the utility gain $R$ obtained from violating $A'x\leq b'$ with the penalty $\beta^T(A'x-b')$, \ie{} ensure $\beta^T(A'x-b')\geq R$ hold for any $x$. For the compactness of symbols and simplicity, we will assume that the main objective is $g(x,\theta)=\theta^Tx$ in this section; however note that our proofs apply with the existence of soft constraints and quadratic terms in $g$, which is discussed at the beginning of Appendix A. With such assumption, we now give a crucial lemma, which is the core of our proof for most of our theorem:
\begin{lemma}
(Bounding the utility gain) Let $R=f(x,\theta)-\max_{x_1\in\mathcal{C}}f(x_1,\theta)$ be the utility gain, then $R\leq f(x,\theta)-f(x_0, \theta)\leq \frac{E}{\cos p_0}\sum_{i=1}^{n}(A'_ix-b'_i)$, where $x$ is an infeasible point, $A'x \leq b'$ the active constraints at $x$, $p_0 = \angle({A'_i}^*, \theta^*)$ where ${A'_i}^*$ and $\theta^*$ are the optimal solution of $\max_{\theta}\min_{A'_i} cos \angle(A'_{i}, \theta)$ (i.e., the maximin angle between $\theta$ and any hyperplane of the activated cone $\mathcal{K} = \{z \in \mathbb{R}^n | z-x_0\in cone(A')\}$), and $x_0$ the projection of $x$ to the tip of cone $\mathcal{C} = \{ z \in \mathbb{R}^n | A'z \leq b'\}$. $E$ is the upper bound of $||\theta||_2$. $\angle(\cdot, \cdot)$ denotes the angle of two vectors. 
\end{lemma}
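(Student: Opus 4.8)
The plan is to establish the two inequalities in the chain separately, since they have completely different characters. The left inequality, $R\le f(x,\theta)-f(x_0,\theta)$, is immediate and requires no geometry: because $x_0\in\mathcal{C}$ is a feasible point, we have $f(x_0,\theta)\le \max_{x_1\in\mathcal{C}}f(x_1,\theta)$, and subtracting both sides from $f(x,\theta)$ gives $R=f(x,\theta)-\max_{x_1\in\mathcal{C}}f(x_1,\theta)\le f(x,\theta)-f(x_0,\theta)$. This only uses that $x_0$ (the metric projection of $x$ onto $\mathcal{C}$) is feasible.

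For the right inequality I would first peel off the dependence on $\theta$. Under the working assumption $g(x,\theta)=\theta^Tx$, we have $f(x,\theta)-f(x_0,\theta)=\theta^T(x-x_0)$, so by Cauchy--Schwarz and $\|\theta\|_2\le E$ it follows that $f(x,\theta)-f(x_0,\theta)\le E\,\|x-x_0\|_2$. It then suffices to prove the purely geometric bound $\|x-x_0\|_2\le \frac{1}{\cos p_0}\sum_{i}(A'_ix-b'_i)$, taking the rows $A'_i$ to be unit vectors (this is without loss of generality since both sides scale with $\|A'_i\|_2$). The structural fact I would exploit is that $x_0$ is the projection of $x$ onto $\mathcal{C}$, so by the variational characterization of the projection the residual $v:=x-x_0$ lies in the normal cone of $\mathcal{C}$ at $x_0$; that is, $v=\sum_i\mu_i A'_i$ with $\mu_i\ge 0$, the cone generated by the active outward normals. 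Moreover, because the active constraints pass through $x_0$ (the footnote argument), $A'_ix-b'_i=A'_i(x-x_0)=A'_i v\ge 0$ for every active $i$, so each summand is the nonnegative inner product $\langle A'_i,v\rangle$.

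The heart of the proof --- and the step I expect to be the main obstacle --- is bounding $\|v\|_2$ above by $\frac{1}{\cos p_0}\sum_i\langle A'_i,v\rangle$, equivalently showing $\sum_i\cos\angle(A'_i,v)\ge \cos p_0$ for every $v$ in the activated cone. This is precisely the ``acute angle'' difficulty illustrated in Fig.~\ref{fig:fig1}: when the bounding hyperplanes meet at a sharp angle, a point can be far from $x_0$ while staying close to each individual hyperplane, so $\|v\|_2$ is not controlled by any single violation and the blow-up factor $1/\cos p_0$ is genuinely needed. I would lower-bound the sum by its largest term, $\sum_i\cos\angle(A'_i,v)\ge \max_i\cos\angle(A'_i,v)=\cos\bigl(\min_i\angle(A'_i,v)\bigr)$, and then argue that any direction inside the cone lies within angle $p_0$ of its nearest generator, i.e. $\min_i\angle(A'_i,v)\le p_0$, where $p_0=\min_\theta\max_i\angle(A'_i,\theta)$ is the maximin (circumscribing) angle of the normal family $\{A'_i\}$ realized by the central direction $\theta^*$. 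Verifying this last inequality --- that the incenter-type quantity $\max_{v\in\mathcal{K}}\min_i\angle(A'_i,v)$ is dominated by the maximin angle $p_0$ --- is the delicate part, and I would attack it through the maximin/duality characterization of $\theta^*$ rather than by direct computation.

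Two reductions keep the argument clean. For a degenerate activated cone whose tip is a subspace, $v=x-x_0$ is orthogonal to the tip, so I would project onto the complement of the tip and apply the bound in the resulting lower-dimensional non-degenerate cone, exactly as noted in the paragraph following the footnote. And since the only place $\theta$ re-enters after Cauchy--Schwarz is through its norm bound $E$, the worst case over admissible objective directions is captured entirely by the cone geometry, which is why the final constant depends on the configuration of the normals through $p_0$ alone. Chaining the feasibility inequality, Cauchy--Schwarz, and the cone bound then yields $R\le f(x,\theta)-f(x_0,\theta)\le \frac{E}{\cos p_0}\sum_{i=1}^n(A'_ix-b'_i)$.
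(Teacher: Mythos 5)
Your argument is correct and is essentially the paper's own proof in different clothing: the paper likewise disposes of the left inequality by feasibility of $x_0$, applies Cauchy--Schwarz to $\theta^T(x-x_0)$, identifies $A'_ix-b'_i$ with the distance $d_i$ from $x$ to the $i$-th active hyperplane (unit normals assumed), and controls $\|x-x_0\|_2$ via $d_j/\sin\tau_j$ for the best-aligned normal $j$ --- its ``law of sines'' step --- which is exactly your chain $\|v\|_2\max_i\cos\angle(A'_i,v)=\max_i d_i\le\sum_i d_i$ phrased through inner products, with your normal-cone observation $v=\sum_i\mu_iA'_i$, $\mu_i\ge 0$ playing the role of the paper's constraint $\theta\in\mathcal{K}$.

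The one place you diverge is in what you take $p_0$ to be, and it leads you to flag a difficulty that is not actually there. You read $p_0$ as the circumscribing (minimax) angle $\min_\theta\max_i\angle(A'_i,\theta)$ and mark the comparison $\max_{v\in\mathcal{K}}\min_i\angle(A'_i,v)\le p_0$ as the delicate step, to be attacked by duality. But the operative definition --- per the parenthetical ``maximin angle'' in the statement and per the quantity $F=\min_{\theta}\max_i A'_i\theta$ subject to $\theta\in\mathcal{K}$ that the appendix actually lower-bounds afterwards --- is the maximin angle over directions \emph{in the activated cone}, i.e. $p_0=\max_{\theta\in\mathcal{K}}\min_i\angle(A'_i,\theta)$; the formula in the statement has a cosine where an angle is meant, since $\max_\theta\min_i\cos\angle(A'_i,\theta)$ would be the circumcenter problem, whereas $\cos p_0=\min_{\theta\in\mathcal{K}}\max_i\cos\angle(A'_i,\theta)$ is what the proof uses. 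With that reading, $\min_i\angle(A'_i,v)\le p_0$ for $v=x-x_0\in\mathcal{K}$ holds by definition of the maximum --- no duality argument is needed, the whole nontrivial content of the lemma is the chaining you already carried out, and the genuinely hard work of lower-bounding $\cos p_0$ from the structure $A\ge 0$, $x\ge 0$, $Bx=c$ is deliberately deferred to Theorems 2--6. (Your circumradius reading would in fact also yield a valid, slightly weaker bound, because a point of a conic hull is never farther in angle from all generators than the circumscribing radius --- provable by a bisecting-halfspace argument when that radius is acute --- but it is not what the paper means, and it would saddle you with proving that covering fact for no gain.)
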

Thus, $\frac{E}{\cos p_0}\mathbf{1}$ is a feasible choice of $\beta$, and it suffices by finding the lower bound for $\cos p_0$. For the rest of the section, we find the lower bound of $\cos p_0$ by exploiting the assumed properties of $A'$, \eg{} $A'\geq 0$; we give the explanation of the full proofs for all theoretical results in Appendix A.

\subsubsection{Conversion of Inequality Constraints $Ax \leq b$}
Let us first consider the constraints $w = Ax-b \leq 0$, where $A\geq 0$, $b\geq 0$. It is easy to prove that given $A\geq 0$ and $b\geq 0$, the distance of a point to the convex hull $Ax \leq b$ is bounded. More rigorously, we have the following theorem, which guarantees the feasibility of softening the hard constraints of inequalities $Ax\leq b$: 


\REM{ 
\begin{lemma}
For two $(n-1)$-dimensional hyper-planes $\alpha_1^Tx=b_1$, $\alpha_2^Tx=b_2$ in $\mathbb{R}^n$, if $\alpha_1,\alpha_2,b_1,b_2\geq 0$, then their inner product of their outward normal vectors (assume the first non-zero entry is positive) $n_1^Tn_2\geq 0$. 
\end{lemma}
}

\begin{theorem}
Assume the optimization objective $\theta^Tx$ with constraints $Ax\leq b$, where $A\geq 0$, and $b\geq 0$. Then, the utility gain $R$ obtained from violating $Ax-b \leq 0$ has an upper bound of $O(\sum_i\max(w_i, 0)E)$, where $w=A'x-b'$, and $A'x \leq b'$ is the active constraints.
\end{theorem}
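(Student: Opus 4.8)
The plan is to treat this essentially as a corollary of Lemma~1, reducing the entire claim to a single geometric estimate: a uniform lower bound on $\cos p_0$ that depends only on the ambient dimension. By Lemma~1, for any infeasible $x$ with active set $A'x\le b'$ and nearest feasible point $x_0$,
\[
R \;\le\; f(x,\theta)-f(x_0,\theta) \;\le\; \frac{E}{\cos p_0}\sum_{i}\bigl(A'_i x - b'_i\bigr) \;=\; \frac{E}{\cos p_0}\sum_{i}\max(w_i,0),
\]
where the last equality holds because an active (violated) constraint contributes $A'_i x - b'_i = \max(w_i,0)\ge 0$. Hence it suffices to show $1/\cos p_0 = O(1)$; everything else is already supplied by the lemma.

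To bound $\cos p_0 = \max_{\theta}\min_{i}\cos\angle(A'_i,\theta)$ from below, I would exhibit an explicit witness rather than solve the maximin. Since $A\ge 0$, each active row $A'_i$ lies in the non-negative orthant (and is nonzero, as a zero row would give the always-satisfiable constraint $0\le b_i$ and could never be active). I take $\theta=\mathbf{1}$, and for any $v\ge 0$ compute
\[
\cos\angle(v,\mathbf{1}) \;=\; \frac{v^{T}\mathbf{1}}{\|v\|_2\,\|\mathbf{1}\|_2} \;=\; \frac{\|v\|_1}{\sqrt{n}\,\|v\|_2} \;\ge\; \frac{1}{\sqrt{n}},
\]
using $\|v\|_1\ge\|v\|_2$. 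Applying this with $v=A'_i$ for every active $i$ gives $\min_i\cos\angle(A'_i,\mathbf{1})\ge 1/\sqrt{n}$, and because $\cos p_0$ is a maximum over $\theta$, the single choice $\theta=\mathbf{1}$ already certifies $\cos p_0\ge 1/\sqrt{n}$.

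Substituting back yields $R\le \sqrt{n}\,E\sum_i\max(w_i,0)=O\bigl(\sum_i\max(w_i,0)E\bigr)$, with $b\ge 0$ entering only to guarantee that $\mathcal{C}$ is non-empty (the origin is feasible) so that $x_0$ exists. Conceptually, the decisive point is that non-negativity kills the ``acute angle'' pathology of Fig.~\ref{fig:fig1}(a): two outward normals confined to the non-negative orthant subtend an angle of at most $90^\circ$, so the feasible wedge cannot degenerate into an unboundedly thin sliver, and $\mathbf{1}$ always sits within angle $\arccos(1/\sqrt{n})$ of all of them. The step I would treat most carefully is the \emph{uniformity} of the estimate: the bound $\cos p_0\ge 1/\sqrt{n}$ must hold simultaneously over every infeasible $x$ and every possible active set, which it does precisely because it uses nothing about $x$ beyond $A'\ge 0$. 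I would also double-check the degraded-cone reduction noted before the lemma—projecting onto the complement of the tip subspace—to confirm the witness argument still transfers there, since that is the one place where the clean planar picture could conceivably break.
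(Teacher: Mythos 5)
There is a genuine gap, and it is a quantifier error. In Lemma~1 the angle quantity $p_0$ is \emph{not} something you get to optimize: the vector called $\theta$ there plays the role of the offset direction $x-x_0$, which is determined by the adversarially placed infeasible point $x$ and ranges over the whole activated cone $cone(A')$. The bound you need is therefore a lower bound on $\min_{u\in cone(A')}\max_i \cos\angle(A'_i,u)$ (this is made explicit in Appendix~A, where the relevant program is $F=\min_\theta\max_i A'_i\theta$ subject to $\theta\in\mathcal{K}$; the main-text phrasing ``$\max_\theta\min_{A'_i}$'' is a typo that you took at face value). Your witness argument bounds the opposite quantity: exhibiting $\theta=\mathbf{1}$ shows $\max_u\min_i\cos\angle(A'_i,u)\geq 1/\sqrt{n}$, which says nothing about an infeasible $x$ whose offset $x-x_0$ is not aligned with $\mathbf{1}$ (e.g.\ $A'_i=e_i$ and $u$ near a single $e_j$). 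The step ``because $\cos p_0$ is a maximum over $\theta$, the single choice $\theta=\mathbf{1}$ already certifies the bound'' is exactly where the proof fails. The estimate is salvageable with a genuine minimax argument: writing $u=\sum_j k_jA'_j$ with $k_j\geq 0$, $\|u\|_2=1$, nonnegativity of the cross terms ${A'_i}^TA'_j$ gives $\sum_j k_j^2\leq 1$, hence $\sum_j k_j\leq\sqrt{n}$ and $\max_i {A'_i}^Tu\geq 1/\sum_j k_j\geq 1/\sqrt{n}$, valid for \emph{every} direction in the cone.

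Even after that repair, however, your route through Lemma~1 only yields $R\leq\sqrt{n}\,E\sum_i\max(w_i,0)$, which is weaker than the theorem by a factor of $\sqrt{n}$: in this paper the $O(\cdot)$ constants are dimension-free and the $n$-dependence is tracked explicitly (Theorem~3 has $n^{1.5}/\sin p$, Corollary~4 has $n^{1.5}$, Theorem~6 has $\sqrt{n}\lambda_{max}$), so $\sqrt{n}E$ cannot be absorbed into $O(E)$. The paper avoids the loss by not passing through the single worst hyperplane at all. It proves a separate Lemma~7: for unit normals $A'_1,\dots,A'_n\geq 0$ and $x-x_0=\sum_j k_jA'_j$ in their cone, $\|x-x_0\|_2\leq\sum_i d_i$ where $d_i={A'_i}^T(x-x_0)$, by expanding $\sum_i d_i=\sum_j k_j({A'_j}^TA'_j+\sum_{i\neq j}{A'_i}^TA'_j)\geq\sum_j k_j\geq 1$ (dropping nonnegative cross terms, then using ${A'_i}^Tx\in[0,1]$). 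Then $R\leq\|\theta\|_2\|x-x_0\|_2\leq E\sum_i\max(w_i,0)$ directly, i.e.\ penalty multiplier $\beta=E$ exactly, with no angle quantity and no dimensional factor. In short: your reduction to Lemma~1 both rests on a misread quantifier and, intrinsically, cannot recover the dimension-free constant that summing all active-constraint distances provides.
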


\subsubsection{Conversion of Inequality Constraints $x \geq 0$}

\par With inequality constraints $Ax\leq b$ converted, we now further enforce $x\geq 0$ into soft constraints. It seems that the constraint of this type may form a very small angle to those in $w=Ax-b$. However, as $-x$ is aligned with the axes, we can augment $x\geq 0$ into soft constraints by proving the following theorem:
\begin{theorem}
When there is at least one entry of $x\geq 0$ in the activated cone, the utility gain $R$ from deviating the feasible region is bounded by $O(\frac{n^{1.5}E\sum_{i}max(w_i, 0)}{\sin{p}})$, where $p$ is the smallest angle between axes and other constraint hyper-planes and $w$ is the union of $Ax-b$ and $-x$. 
\end{theorem}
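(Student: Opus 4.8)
The plan is to reduce the statement, via Lemma~1, to a purely geometric estimate on the activated cone, and then to exploit the axis-alignment of the $x\ge 0$ constraints. Lemma~1 already gives $R\le \frac{E}{\cos p_0}\sum_i \max(w_i,0)$, so it suffices to lower bound $\cos p_0$, the maximin cosine between a single direction and all active normals; equivalently, since $R\le \theta^T(x-x_0)\le E\,\lVert x-x_0\rVert_2$ for the linear objective, it suffices to bound the projection distance $\lVert x-x_0\rVert_2$. The new obstacle, absent from Theorem~2, is that the active normals now come in two families: the rows $A'_i\ge 0$ of the softened $Ax\le b$, and the axis normals $-e_j$ coming from the active $x_j\ge 0$. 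Because $-e_j\cdot A'_i=-A'_{ij}\le 0$, these two families can be nearly antipodal, which is exactly the acute-angle configuration that makes $\cos p_0$ small and the distance-to-hull large, so the naive choice $\theta=\mathbf 1$ that witnessed $\cos p_0\ge 1/\sqrt n$ in Theorem~2 fails here since $\cos\angle(\mathbf 1,-e_j)<0$.

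The key step is to quantify this degeneracy through $p$. I would write $x-x_0=\sum_{j\in S}\mu_j(-e_j)+\sum_{i\in T}\nu_i A'_i$ with $\mu_j,\nu_i\ge 0$, since the residual lies in the normal cone at $x_0$ (after passing to a linearly independent generating subset, as permitted by the degraded-cone projection remark), where $S$ indexes the active axis constraints and $T$ the active $A$-constraints. The angle $p$, the smallest angle between any coordinate axis and any $A$-hyperplane of the cone, satisfies $\sin p=\min_{i,j}\frac{|A'_{ij}|}{\lVert A'_i\rVert}$ and measures how close an $A$-hyperplane comes to being parallel to an axis. When $\sin p$ is bounded away from $0$, the axis normals and the $A$-normals are genuinely transverse, so the Gram matrix $G$ of the combined normal family has smallest eigenvalue $\Omega(\sin^2 p)$ up to dimension factors; this quantitative transversality is the replacement for the pure non-negativity argument of Theorem~2. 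Concretely, I would either construct a witness direction that agrees in sign with the $A$-normals off the axis coordinates and is negative on the coordinates in $S$, or directly estimate $\lambda_{\min}(G)$, using $\sin p$ to keep the angle of that direction with every active normal bounded away from $\pi/2$.

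With the per-normal angle controlled, the distance estimate is assembled from the multiplier identity $w=Gm$, where $m$ collects the $\mu_j,\nu_i$ into a single nonnegative vector; inverting bounds $m$ by $\lVert w\rVert/\sin p$ up to dimension factors. Summing the resulting coordinate bounds over the $n$ coordinates and converting between the $\ell_1$ and $\ell_2$ norms (using $\lVert\cdot\rVert_2\le\lVert\cdot\rVert_1$ for the nonnegative violation vector) accumulates the polynomial factor $n^{1.5}$. Feeding $\lVert x-x_0\rVert_2$ into $R\le E\,\lVert x-x_0\rVert_2$, or equivalently the resulting lower bound on $\cos p_0$ into Lemma~1, then yields the claimed $O\!\left(\frac{n^{1.5}E\sum_i\max(w_i,0)}{\sin p}\right)$.

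The hard part will be the middle step: turning the near-antipodality of $-e_j$ and $A'_i\ge 0$ into a usable lower bound on $\cos p_0$ (equivalently, on $\lambda_{\min}(G)$). The non-negativity of $A$ alone does not save us, because it is precisely the coordinates where an $A$-hyperplane leans toward an axis that create the thin wedge; the whole point is that axis-alignment of $-e_j$ pins down a privileged coordinate direction, and $\sin p$ is the right scalar to measure the residual transversality. I expect the careful bookkeeping to lie in showing that the witness direction, or the eigenvalue estimate, degrades no worse than $1/\sin p$ and only polynomially in $n$ even when several axis constraints are simultaneously active, rather than collapsing catastrophically as the cone approaches degeneracy.
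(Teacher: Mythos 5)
Your reduction via Lemma~1 and your identification of $\sin p$ as the controlling scalar match the paper's strategy, but the concrete machinery you commit to --- bounding $\lambda_{\min}(G)$ for the Gram matrix of the combined normal family and inverting the multiplier identity $w=Gm$ --- contains a genuine gap: the claimed estimate $\lambda_{\min}(G)=\Omega(\sin^2 p)$ up to dimension factors is false. The Gram matrix can be made arbitrarily ill-conditioned while $\sin p$ stays bounded away from zero, because near-parallelism \emph{among the non-negative rows of $A'$} degrades $\lambda_{\min}(G)$ but does not touch $p$. Take $n=3$ with active normals $-e_1$, $A'_2=\tfrac{1}{\sqrt 3}(1,1,1)$, and $A'_3$ a normalized small perturbation of $A'_2$: all nonzero entries are of order $1/\sqrt 3$, so $\sin p$ is bounded below by a constant, the three normals are linearly independent (non-degraded cone), yet $A'_2{}^TA'_3\to 1$ drives $\lambda_{\min}(G)\to 0$, and your bound $\lVert m\rVert\lesssim\lVert w\rVert/\lambda_{\min}(G)$ becomes vacuous. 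The deeper point is that this configuration is completely harmless for the theorem --- nearly parallel non-negative rows are exactly the benign case already handled by Lemma~7/Theorem~2 --- so any route that passes through inverting $G$ is bounding the wrong quantity: the conic multipliers $\mu_j,\nu_i$ can legitimately blow up while the utility gain $\theta^T(x-x_0)$ stays controlled. You flagged this middle step as ``hard,'' but it is not merely hard; as stated it is unprovable, and no amount of bookkeeping will recover it.

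The paper avoids inversion entirely. From Lemma~1 it lower-bounds the minimax quantity $F=\min_{\theta\in\mathcal K,\lVert\theta\rVert_2=1}\max_{j,k}\{A'_j\theta,\,-e_k^T\theta\}$ directly: after reordering coordinates so the $q$ active axis normals occupy the first block, it uses the sign structure of the cone to argue $\theta_i\ge 0$ on the remaining coordinates (and reduces to $\theta_i\le 0$ on the axis block), then \emph{relaxes inside the $\max$} by zeroing every entry of each $A'_j$ outside the axis block except one distinguished coordinate $s_j$, retaining only the smallest nonzero entry $\alpha_0\ge\sin p$. An equalization-and-symmetry argument forces the extremal $\theta$ into the two-level form $(c_1/\sqrt n,\dots,c_2/\sqrt n)$, and solving $qc_1^2+(n-q)c_2^2=n$, $qc_1+\alpha_0 c_2=-c_1$ yields $F=\Omega(\sin p/n^{1.5})$, which feeds back into Lemma~1 to give the stated bound. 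Notably, your briefly mentioned alternative --- a witness direction negative on the active axis coordinates and positively correlated with the $A$-normals elsewhere --- is the salvageable branch of your plan: it has precisely the sign pattern of the paper's extremal $\theta^*$, and a margin-$\delta$ interior dual direction does imply $F\ge\delta$ by a short conic argument. But you left that branch entirely unexecuted, and the route you actually developed fails at its central step.
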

Hence, we can set $\beta=O(\frac{n^{1.5}E}{\sin p})$. Specially, for binary constraints we may set $\beta=O(n^{1.5}E)$:
\begin{corollary}
For binary constraints where the entries of $A$ are either $0$ or $1$, the utility gain $R$ of violating $x\geq 0$ constraint is bounded by $O(n^{1.5}E\sum_{i}\max(w_i, 0))$, where $w_i=A_ix-b_i$ or $-x$.
\end{corollary}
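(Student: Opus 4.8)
The plan is to derive this corollary as a direct specialization of the preceding theorem (the conversion of $x\geq 0$) rather than re-running the full geometric argument. That theorem already supplies the bound $R = O\!\left(n^{1.5}E\sum_i\max(w_i,0)/\sin p\right)$, where $p$ is the smallest angle between a coordinate axis and a linear constraint hyperplane of the activated cone. Since the $n^{1.5}$ factor and the $E\sum_i\max(w_i,0)$ factor are exactly what the statement claims, the whole task reduces to showing that, for a $0/1$ matrix, the angle $p$ is bounded away from $0$ by a \emph{universal} constant, so that $1/\sin p = O(1)$ gets absorbed into the hidden constant.

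The key step is a short computation of this angle exploiting the binary structure. Take an active linear constraint $A_i x = b_i$ with $A_i\in\{0,1\}^n$, and let $k_i=\|A_i\|_0$ be its number of ones; its unit normal is $A_i/\sqrt{k_i}$ (because $\|A_i\|_2=\sqrt{k_i}$ for a $0/1$ vector), while the axis constraint $x_j=0$ has unit normal $e_j$. Their inner product is $(A_i)_j/\sqrt{k_i}$, which is $0$ when $j\notin\mathrm{supp}(A_i)$ (the two hyperplanes are orthogonal, the harmless case) and $1/\sqrt{k_i}$ when $j\in\mathrm{supp}(A_i)$. I would then dispose of the degenerate case $k_i=1$ separately: there $A_i=e_j$, so $A_ix=b_i$ is parallel to $x_j=0$ and contributes only a slab, never a thin wedge, hence no unbounded utility gain. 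Every genuinely interacting pair therefore has $k_i\geq 2$, giving $\cos p = 1/\sqrt{k_i}\leq 1/\sqrt{2}$, i.e.\ $p\geq \pi/4$ and $\sin p\geq \sin(\pi/4)=1/\sqrt{2}$, a constant independent of $n$. Substituting $\sin p\geq 1/\sqrt{2}$ into the bound above yields $R=O\!\left(n^{1.5}E\sum_i\max(w_i,0)\right)$, which is the claim.

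I expect the main obstacle to be conceptual rather than computational: one must be sure that $p$ ranges only over axis-versus-linear-hyperplane angles. This matters, because the angle between two distinct $0/1$ rows is \emph{not} bounded below (for instance $A_i=\mathbf{1}_n$ and $A_j=\mathbf{1}_{n-1}$ have normals whose cosine $\to 1$), so a naive attempt to bound all pairwise hyperplane angles would fail. Those row-versus-row acute angles are instead controlled by the earlier non-negativity argument for $Ax\leq b$, and it is precisely the axis-aligned, unit-entry geometry of $x\geq 0$ that rescues us here: the numerator of $\cos p$ is forced to be exactly $1$ rather than allowed to approach $\sqrt{k_i}$, which is what pins $\cos p$ at $1/\sqrt{k_i}\leq 1/\sqrt{2}$. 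The one place I would check carefully against the preceding theorem's proof is that the parallel case ($k_i=1$) and the orthogonal case ($(A_i)_j=0$) genuinely drop out of the definition of $p$, so that the effective minimum over all interacting pairs really is attained at $k_i=2$.
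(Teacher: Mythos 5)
Your overall plan (cite Theorem 3 as a black box and show $1/\sin p = O(1)$ for binary rows) does not match how $p$ actually enters Theorem 3, and the mismatch is fatal to the shortcut. In the paper's proof of Theorem 3, the angle-dependent quantity is $\alpha_0$, the smallest \emph{nonzero entry} of the normalized active rows, and the relaxation used there is $\alpha_0 \geq \sin p$: that is, $\sin p$ is the sine of the angle between an axis and a constraint \emph{hyperplane}, which equals $\cos\angle(A_i, e_j) = A_{i,j}/\|A_i\|_2$ at a support coordinate $j$ --- not the sine of the angle between the axis and the normal vector, which is what you computed. For a binary row with $k_i$ ones, every nonzero normalized entry is $1/\sqrt{k_i}$, so the operative quantity is $\sin p = 1/\sqrt{k_i}$, which can be as small as $1/\sqrt{n}$ (e.g.\ the cardinality row $\mathbf{1}^T x \leq k$, precisely the ``select at most $k$ items'' use case this corollary targets). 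Your constant bound $\sin p \geq 1/\sqrt{2}$ comes from reading $\cos p = 1/\sqrt{k_i}$, i.e.\ taking $p = \angle(A_i, e_j)$ between unit normals; but under that reading the inequality $\alpha_0 \geq \sin p$ used inside Theorem 3's proof fails whenever $k_i > 2$ (since $\sqrt{1-1/k_i} > 1/\sqrt{k_i}$), so the theorem cannot be invoked as a black box with that interpretation. With the correct interpretation, your substitution yields only $R = O\bigl(n^{1.5}E\sum_i\max(w_i,0)\cdot\sqrt{n}\bigr) = O\bigl(n^{2}E\sum_i\max(w_i,0)\bigr)$, a factor $\sqrt{n}$ short of the claimed bound.

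This gap is exactly why the paper does \emph{not} prove the corollary by substitution: it reopens the minimax computation of Theorem 3 and exploits that \emph{all} nonzero entries of a normalized binary row equal $1/\sqrt{h_j}$, so the aggregate term $\sum_{i=1}^{q}\theta_i$ is damped by the same factor $1/\sqrt{h_j}$ as the $\theta_{s_j}$ term. Concretely, relaxing $h_j$ to $n$, the stationarity system changes from $qc_1 + \alpha_0 c_2 = -c_1$ of Theorem 3 (which with $\alpha_0 = 1/\sqrt{n}$ recovers only the $O(n^2E)$ rate) to
\begin{equation*}
qc_1^2 + (n-q)c_2^2 = n, \qquad qc_1 + c_2 = -\sqrt{n}\,c_1,
\end{equation*}
whose solution gives $\frac{|c_1|}{\sqrt{n}} = \bigl(q+(n-q)(\sqrt{n}+q)^2\bigr)^{-1/2} = \Omega(n^{-1.5})$ and hence the claimed $O\bigl(n^{1.5}E\sum_i\max(w_i,0)\bigr)$. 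The coupling between the coefficient of $\sum_{i\leq q}\theta_i$ and the coefficient of $\theta_{s_j}$ is where the extra $\sqrt{n}$ is recovered, and treating $1/\sin p$ as a single scalar plugged into Theorem 3's final bound cannot see this coupling. (Your side remarks --- that row-versus-row acute angles are controlled by the nonnegativity argument of Theorem 2, and that $k_i=1$ rows are parallel slabs that should be excluded --- are correct in spirit, but they do not repair the main step.)
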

which gives a better bound for a set of unweighted item selection problem (e.g. select at most $k$ items from a particular subset).


\subsubsection{Conversion of Equality Constraints $Bx = c$}
Finally, we convert $Bx=c$ into soft constraints. This is particularly difficult, as $Bx=c$ implies both $Bx\leq c$ and $-Bx\leq -c$, which will almost always cause acute angles. Let's first consider a special case where there is only one equality constraints and $A$ is an element matrix $I^{n \times n}$.



\begin{theorem}\label{simle_theorem}
If there is only one equality constraint $Bx=c$ (i.e., $B$ degrades as a row vector, such like $\sum_{i}x_i=1$) and special inequality constraints $x\geq 0$, $Ix \leq b$, 
then the utility gain $R$ from violating constraints is bounded by $O(\frac{n^{1.5}E\sum_{i}\max(w_i, 0)}{\sin{p}})$, where $p$ is the same with theorem 3, $w$ is the union of $Bx-c$ and $-x$.
\end{theorem}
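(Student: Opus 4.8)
The plan is to follow the same template used for Theorems~2 and~3: invoke Lemma~1 to reduce everything to a lower bound on $\cos p_0$ (equivalently an upper bound on the maximin angle $p_0$ between $\theta$ and the normals of the activated cone), and then exploit the special geometry of the constraints to control that angle. The genuinely new ingredient is the equality $Bx=c$, whose two implied inequalities $Bx\le c$ and $-Bx\le -c$ are anti-parallel and hence produce the worst possible acute angle. The first observation I would make is that this anti-parallel pair can never be simultaneously active: at any infeasible $x$ we have either $Bx>c$ or $Bx<c$ (if $Bx=c$ the equality is not violated), so exactly one of the two half-spaces enters the activated cone. Thus I can treat the equality as a single linear inequality with normal $\pm B$ and avoid the degenerate $180^\circ$ angle entirely.

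Next I would set up the activated cone $\mathcal{K}$ at the projection $x_0$. Its generating normals are the single equality-derived normal ($B$ or $-B$, depending on which side is violated) together with the active axis-aligned normals from $x\ge 0$ (the $-e_i$) and $Ix\le b$ (the $e_i$). Because the box constraints $0\le x\le b$ are axis-aligned, the configuration is structurally identical to Theorem~3, where one non-axis hyperplane is combined with axis constraints, so I would reuse that machinery. Concretely, writing $v=x-x_0$ and using that all active constraints pass through $x_0$, the violation of each active constraint equals $A'_i v$, and I need to bound $\|v\|_2$ by a constant multiple of $\sum_i A'_i v$. Projecting out the tip subspace, along which $v$ is perpendicular (as noted after Lemma~1), reduces to the non-degenerate cone, and the smallest angle $p$ between the equality hyperplane and the coordinate axes is exactly the quantity governing how acute the remaining cone can be, producing the $1/\sin p$ factor.

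To assemble the constant, I would decompose $v$ into its coordinate components, bound each using the axis-aligned violations and the single tilted hyperplane (the latter contributing the $\sin p$ denominator), and then convert between $\ell_\infty$/$\ell_1$ and $\ell_2$ norms; the $\sqrt{n}$ losses in these conversions, compounded with the summation over the up-to-$n$ active coordinates, yield the $n^{1.5}$ factor, exactly as in Theorem~3. Combining with the $E$ from Lemma~1 (the bound on $\|\theta\|_2$) gives $R=O\!\left(\frac{n^{1.5}E\sum_i\max(w_i,0)}{\sin p}\right)$ with $w$ the stacked violations of $Bx-c$ and $-x$.

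I expect the main obstacle to be the case $Bx<c$, where the active equality normal is $-B\le 0$ and therefore falls outside the non-negativity assumption that powered Theorem~2; here I cannot lean on the $A\ge 0$ geometry and must instead argue purely from the axis-alignment of the box constraints (as in Theorem~3) that the cone spanned by $-B$ and the coordinate directions still has its acute angles bounded below by $p$. Verifying that this lower bound survives on the side where the equality normal is non-positive, and that the projection onto the complement of the possibly high-dimensional tip does not shrink $\sin p$, is the delicate part of the argument.
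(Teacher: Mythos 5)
Your overall skeleton matches the paper's proof: treat the equality as a one-sided inequality (at most one of the normals $B$, $-B$ can lie in a non-degraded activated cone, since $Bx>c$ and $Bx<c$ are mutually exclusive), reduce via Lemma~1 to a lower bound on the maximin cosine, and on the side $Bx>c$ (active normal $B\ge 0$ together with axis normals) invoke the Theorem~3 machinery verbatim. Up to that point you reconstruct the paper's argument.

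The gap is exactly where you flag difficulty, and it stems from a missed observation rather than a genuinely hard step. You write that in the case $Bx<c$ you ``cannot lean on the $A\ge 0$ geometry'' and must re-derive an acute-angle lower bound for the cone spanned by $-B$ and the coordinate directions --- a derivation you leave unexecuted. But the paper shows this case is the \emph{easy} one: when only $x\ge 0$ constraints are active among the inequalities, every active normal ($-B\le 0$ and the $-e_k$'s) lies in the non-positive orthant, so all pairwise inner products of normals are non-negative; negating every vector places you exactly in the hypothesis of Lemma~7, and the Theorem~2 argument applies verbatim, giving the \emph{stronger} bound $O\bigl(E\sum_i\max(w_i,0)\bigr)$ with no $\sin p$ or $n^{1.5}$ factor at all. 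No new angle analysis is needed, and no delicacy about the tip projection arises. Relatedly, for the box constraints $Ix\le b$ the paper does not fold the $e_i$ normals directly into the Theorem~3 setup as you propose; it uses the reflection $x_i\mapsto b_i-x_i$ for each coordinate whose upper bound is active (noting $x_i\le b_i$ and $x_i\ge 0$ cannot be simultaneously active), which converts all active box/sign normals into $-e_i$ and reduces the full theorem to the $x\ge 0$-only case just analyzed. Your direct absorption of the $e_i$ normals works on the $Bx>c$ side (they are non-negative and can serve as rows of $A'$ in Theorem~3), but on the $Bx<c$ side it produces a mixed-sign configuration that fits neither Theorem~2 nor Theorem~3, which is precisely the dead end your proposal runs into; the Lemma~7 orthant observation is the missing idea that closes it.
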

Intuitively, when there is only one equality constraint, the equality constraint can be viewed as an unequal one, for at most one side of the constraint can be in a non-degraded activated cone. Thus, we can directly apply the proof of Theorem 2 and 3, deriving the same bound $O(\frac{n^{1.5}E}{\sin p})$ for $\beta$. 

Finally, we give bounds for general $Ax\leq b$, $Bx=c$ with $A, b, B, c\geq 0$ as below:
\begin{theorem}
Given constraints $Ax\leq b$, $x\geq 0$, and $Bx=c$, where $A, B, b, c\geq 0$, the utility gain $R$ obtained from violating constraints is bounded by $O(\sqrt{n}\lambda_{max}\sum_{i}\max(w_i, 0))$, where $\lambda_{max}$ is the upper bound for eigenvalues of $P^TP$ ($P: x\rightarrow Px$ is an orthogonal transformation for an $n$-sized subset of normalized row vectors in $A, B$ and $-I$), and $w$ is the union of all active constraints from $Ax\leq b$, $x\geq 0$, $Bx\leq c$ and $-Bx\leq -c$.
\end{theorem}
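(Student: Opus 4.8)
The plan is to reduce everything to the master inequality of Lemma~1 and then control the single remaining quantity, namely the conditioning of the active-constraint normals. First I would rewrite the equality constraints $Bx=c$ as the pair $Bx\le c$, $-Bx\le -c$, so the feasible set becomes a polyhedron $\mathcal{C}=\{z: A'z\le b'\}$ whose active rows $A'$ are drawn from $A$, $B$, $-B$, and $-I$; at any infeasible $x$ at most one member of each pair $\{B_j,-B_j\}$ can be active. By Lemma~1 it then suffices to bound $f(x,\theta)-f(x_0,\theta)=\theta^T(x-x_0)$, where $x_0$ is the Euclidean projection of $x$ onto $\mathcal{C}$ and every active constraint passes through $x_0$. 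If the activated cone is degraded, I would first project onto the orthogonal complement of its tip subspace (as observed after the footnote, $x-x_0$ is perpendicular to the tip), so that without loss of generality the active normals span the reduced space.

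The key structural step is to represent the displacement $v=x-x_0$. By the optimality (KKT / variational-inequality) condition for projection onto a polyhedron, $v$ lies in the normal cone of $\mathcal{C}$ at $x_0$, i.e.\ $v=\sum_{i\in\mathrm{active}}\mu_i (A'_i)^T$ with all $\mu_i\ge 0$. Applying Carath\'eodory's theorem for conical combinations, I can extract a \emph{linearly independent} subset $S$ of active normals (size $k\le n$) that still reproduces $v$; on the subspace $V$ they span, the matrix $N$ whose rows are the normalized normals $\hat a_i=(A'_i)^T/\|A'_i\|$ is square and invertible. Writing $P=N^{-1}$ (the transformation built from the $k\le n$ normalized active rows of $A$, $B$, $-I$ referenced in the statement), and using $A'_i x_0=b'_i$, $A'_i x-b'_i=w_i\ge 0$, we get $N v=\hat u$ with $\hat u_i=\hat a_i^T v=w_i/\|A'_i\|\ge 0$, hence $v=P\hat u$.

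The bound then follows from a chain of norm inequalities: $\theta^T v=\theta^T P\hat u\le \|\theta\|_2\,\|P\|_2\,\|\hat u\|_2\le E\sqrt{\lambda_{\max}(P^TP)}\,\|\hat u\|_2$, where $E$ bounds $\|\theta\|_2$ and $\lambda_{\max}$ is exactly the largest eigenvalue of $P^TP$, i.e.\ the reciprocal of the smallest eigenvalue of the Gram matrix of the selected normals, the quantity that blows up precisely as acute angles sharpen. Converting $\|\hat u\|_2$ into $\sum_i\max(w_i,0)$ via $\|\hat u\|_2\le\sqrt n\,\|\hat u\|_\infty\le\sqrt n\sum_i\max(w_i,0)$ (noting the $-I$ rows are already unit-norm while the $A,B$ rows are normalized) collects the advertised factors and yields $R\le f(x,\theta)-f(x_0,\theta)=O(\sqrt n\,\lambda_{\max}\sum_i\max(w_i,0))$, so that $\beta=O(\sqrt n\,\lambda_{\max})\mathbf 1$ is a valid multiplier.

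I expect the main obstacle to be conceptual rather than computational: unlike Theorems~2--4, where the structural assumptions ($A\ge 0$, axis alignment, a single equality) let one bound $\cos p_0$ by an explicit geometric constant, equality constraints force genuine acute angles, so no universal constant exists and the bound \emph{must} depend on the conditioning of the active normals. The delicate parts are therefore (i) guaranteeing that one can always reduce to an invertible square subsystem, which is where the normal-cone representation plus Carath\'eodory, together with the projection onto the complement of a degraded tip, do the real work; and (ii) the bookkeeping of the norm conversions and the row normalizations so that the final constant lands on the stated $\sqrt n\,\lambda_{\max}$ form rather than, say, $\sqrt{n\,\lambda_{\max}}$. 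I would verify this last accounting carefully, since it is the only place where the precise form of the claimed bound is at stake.
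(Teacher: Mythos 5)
Your route is genuinely different from the paper's (the paper lower-bounds the maximin cosine $F=\min_\theta\max_j \alpha_j^T\theta$ over the activated cone by relaxing to $n$ linearly independent normals and orthonormalizing them via $P$, then feeds that into Lemma~1), but as written your proof has a real gap at the sign claim $\hat u_i = w_i/\|A'_i\|\ge 0$. The normal cone of $\mathcal{C}$ at $x_0$ is generated by the constraints \emph{tight at $x_0$}, and that set is strictly larger than the set of constraints \emph{violated at $x$}: a normal can carry a strictly positive multiplier $\mu_i$ in $v=\sum_i\mu_i(A'_i)^T$ while $A'_ix<b'_i$ (also, for an equality pair $\{B_j,-B_j\}$ \emph{both} members are tight at any feasible $x_0$, so your "at most one active per pair" only holds in the violated-at-$x$ sense, which is not the sense the normal-cone representation uses). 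The identity $\hat a_i^Tv=w_i/\|A'_i\|$ is correct, but its sign is not, and the error is not harmless. A sign-feasible example under the theorem's hypotheses: take $\hat a_1=e_1$ (a row of $A$), $\hat a_2=(-\cos\epsilon,-\sin\epsilon,0)$ (a normalized row of $-B$), $\hat a_3=(0,\cos\phi,\sin\phi)$ (a row of $A$), and $v=M(\hat a_1+\hat a_2)+\mu_3\hat a_3$ with $\mu_3$ tiny; all three are needed (Carath\'eodory cannot drop $\hat a_3$, which carries the only $z$-component), yet $\hat u_1\approx\hat u_2\approx M\epsilon^2/2$ while $\hat u_3\approx-M\epsilon\cos\phi$. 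So $\|\hat u\|_\infty$ is attained at a \emph{negative} coordinate and exceeds $\sum_i\max(w_i,0)\approx M\epsilon^2$ by the unbounded factor $\cos\phi/\epsilon$; your chain $\|\hat u\|_2\le\sqrt n\,\|\hat u\|_\infty\le\sqrt n\sum_i\max(w_i,0)$ is therefore false, and this is precisely where the acute-angle difficulty you diagnosed re-enters.

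The good news is that the gap is repairable inside your own framework, whose other ingredients (Carath\'eodory reduction, projection off a degraded tip, and the identification $\lambda_{\max}(P^TP)=1/\lambda_{\min}(G)$ for the Gram matrix $G=NN^T$, which indeed matches the paper's $P$) are sound. Instead of bounding $\|\hat u\|_2$, bound $\|v\|$ directly: $\|v\|^2=\mu^T\hat u\le\sum_i\mu_i\hat u_i^+\le\|\mu\|_2\,\|\hat u^+\|_2$, while $\|v\|^2=\mu^TG\mu\ge\lambda_{\min}(G)\|\mu\|_2^2$ gives $\|\mu\|_2\le\|v\|/\sqrt{\lambda_{\min}(G)}$; combining, $\|v\|\le\sqrt{\lambda_{\max}(P^TP)}\,\|\hat u^+\|_2\le\sqrt{\lambda_{\max}}\sum_i\max(w_i,0)$, hence $R\le\theta^Tv\le E\sqrt{\lambda_{\max}}\sum_i\max(w_i,0)$. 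Only the positive parts $\hat u_i^+$ appear, so no sign assumption is needed, and unselected violated constraints only enlarge the right-hand side. Since $\mathrm{tr}(G)=k$ forces $\lambda_{\min}(G)\le1$, i.e.\ $\lambda_{\max}\ge1$, this implies the stated $O(\sqrt n\,\lambda_{\max}\sum_i\max(w_i,0))$ — in fact a sharper bound, which also settles your $\sqrt{\lambda_{\max}}$-versus-$\lambda_{\max}$ accounting worry in the favorable direction. With that substitution your argument goes through and is arguably tighter than the paper's minimax relaxation, which reaches $F\ge 1/(\sqrt n\,\lambda_{\max})$ through the same transformation $P$.
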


In this theorem, $P$ is generated by taking an arbitrary $n$-sized subset from the union of row vectors in $A, B$ and $-I$, orthogonizing the subset, and using the orthogonal transformation matrix as $P$; there are $\binom{n+m_1+m_2}{n}$ different cases of $P$, and $\lambda_{max}$ is the upper bound of eigenvalues of $P^TP$ over all possible cases of $P$. Note that there are no direct bounds on $\lambda_{max}$ with respect to $n$ and the angles between hyper-planes. However, empirical results (see Appendix A for details) show that for a diverse set of synthetic data distributions, $\lambda_{max}=O(n^2)$ follows. Therefore, empirically we can use a bound $O(\frac{n^{2.5}E}{\sin p})$ for $\beta$.\footnote{For real optimization problems, we can sample $A'$ from $A$ and estimate the largest eigenvalue of $P^TP$.} 
So far, we have proven that all hard constraints can be transformed into soft constraints with bounded multipliers. For compactness, Eq.\ref{eq:lagrangian} is rewritten in a unified form:
\begin{equation} \label{eq:lagrangian1}
\small
\begin{aligned}
L(x) = g(x, \theta) - \gamma^T \cdot max(C' x - d'), \text{ where } 
~\gamma = {\begin{bmatrix}\alpha\\ O(\frac{n^{2.5}E}{\sin{p}})\mathbf{1} \\ \end{bmatrix}}, 
~C' = {\scriptsize \begin{bmatrix}C \\ A \\-B\\ B \\ -I\end{bmatrix}}, 
~d'={\scriptsize\begin{bmatrix} d \\ b \\-c\\ c \\ 0 \end{bmatrix}}
\end{aligned}
\end{equation}




\subsection{The Unconstrained Soft Constraints}\label{sub:sconstraints}

As Eq.\ref{eq:lagrangian} is non-differentiable for the $\max$ operator, we need to seek a relaxing surrogate for differentiation. The most apparent choice of the \textit{derivative} of such surrogate $S(z)$ for $z=C'x-d'$ is sigmoidal functions; however, it is difficult to derive a closed-form solution for such functions, since sigmoidal functions yield $z$ in the denominator, and $z$ cannot be directly solved because $C$ is not invertible (referring to Appendix B for detailed reasons). Therefore, we have to consider a piecewise roundabout where we can first numerically solve the optimal point to determine which segment the optimal point is on, and then expand the segment to the whole space. To make this feasible, two assumptions must be made: 1) this function must be differentiable, and 2) the optimal point must be unique; to ensure this, the surrogate should be a convex/concave piece-wise function. The second property is for maintaining the optimal point upon segment expansion. Fortunately, there is one simple surrogate function satisfying our requirement:
\begin{equation}
\small
S(z)=\begin{cases}
          0 \quad &\text{if} \, z<-\frac{1}{4K} \\
          K(z+\frac{1}{4K})^2 \quad &\text{if} \, -\frac{1}{4K}\leq z\leq\frac{1}{4K} \\
          z \quad &\text{if} \, z\geq\frac{1}{4K} \\
     \end{cases}
\end{equation}
Let $M$ and $U$ be diagonal matrices as the indicator of $S(z)$, which are $\small M_{i,i}=2K[-\frac{1}{4K}\leq z_i\leq \frac{1}{4K}]$ and $U_{i,i}=[\frac{1}{4K}<z_i]$, where $[\cdot]$ is an indicator function.
$K>0$ is a hyper-parameter that needs to be balanced. Larger $K$ makes the function closer to original; however, if $K$ is too large, then the training process would be destabilized, because when the prediction error is large at the start of the training process, $\frac{\partial f}{\partial\hat{\theta}}|_{\hat{x}}$ might be too steep. Then consider the optimal point for the unconstrained optimization problem maximizing $\theta^Tx-\gamma^T\max(C'x-d', 0)$, by differentiaing on both sides, we can obtain:
\begin{equation}
\label{eq:solution1}
\small 
\theta=C'^TM \text{diag}(\gamma)(C'x-d')+C'^T(\frac{1}{4K}M+U)\gamma
\end{equation}

This equation reveals the closed-form solution of $x$ with respect to $\theta$, and thus the calculation of $\frac{\partial x}{\partial \theta}$ becomes straightforward:

\begin{equation}
\frac{\partial x}{\partial \theta}=(C'^TM \text{diag}(\gamma)C')^{-1}    
\end{equation}

$C'^TM \text{diag}(\gamma)C'$ is invertible as long as at least $n$ soft constraints are on the quadratic segment (\ie{} active), which is the necessary condition to fix a particular optimal point $n$ in $\mathcal{R}^n$.  With such solution, we can train our prediction model with stochastic gradient descent (SGD). For each batch of problem instances, we first solve optimization problems numerically using solvers like Gurobi to get the matrix $M$ and $U$, and then calculate gradients with the analytical solution. The parameters of the prediction model are updated by such gradients. The sketch of our algorithm is outlined in Appendix C.

\REM{
\section{Applications}\label{sec:apps}
In this section, we present the analytical derivatives of the three applications whose formulations were given in Eq.\ref{eq:prob1}, \ref{eq:prob2} and \ref{eq:prob3} respectively. Detailed derivation are given in appendix. Note that for symbol compactness, through this section we will reuse $C, d, \alpha$ to describe the new soft constraints, and denote their original variables in Eq.\ref{eq:prob1}$\sim$\ref{eq:prob3} as $C_0$, $d_0$ and $\alpha_0$.

In this section, we will illustrate that our framework is suitable for solving a wide range of problems by solving the three problems Eq. 2, 3 and 4 proposed in the preliminaries. These problems range from linear to quadratic, and the prediction variable can be in either the objective or the soft constraints. Only the results and necessary denotations are given in this section; see appendices for a detailed derivation.
}


\section{Applications and Experiments}\label{sec:experiments}

We apply and evaluate our approach on the three problems described in Section 2, \ie{linear programming, quadratic programming, and asymmetric soft constraint minimization}. These problems are closely related to three applications respectively: synthetic linear programming, portfolio optimization, and resource provisioning, which are constructed using synthetic or real-world datasets. The detailed derivation of gradients for each application can be found in Appendix D.

In our experiments, the performance is measured in regret, which is the difference between the objective value when solving optimization over predicted parameters and the objective value when solving optimization over actual parameters. For each experiment, we choose two generic two-stage methods with $L1$-loss and $L2$-loss, as well as decision-focused methods for comparison baselines. We choose both SPO+\cite{elmachtoub2020smart} and DF proposed by Wilder \etaldot{} \cite{aaai-WilderDT19} for synthetic linear programming and DF only for portfolio optimization,\footnote{The method \cite{surrogate-melding} and its variant with dimension reduction \cite{aaai-WilderDT19} have same performance in terms of regret on this problem, thus we choose the former for comparison convenience.} as the former is specially designed for for linear objective. For resource provisioning, we use a problem-specific weighted L1 loss, as both SPO+ and DF are not designed for gradients with respect to variables in the soft constraints.
All reported results for each method are obtained by averaging on $15$ independent runs with different random seeds.

As real-world data is more lenient than extreme cases, in practice we use a much lower empirical bound than the upper bound proved in section 3.1., e.g., constants of around $20$ and $5\sqrt{n}$ where $n$ is the number of dimensions of decision variables. One rule of thumb is to start from a reasonable constant or a constant times $\sqrt{n}$, where such "reasonable constant" is the product of a constant factor (e.g. $5-10$) and a roughly estimated upper bound of $||\theta||_2$ (which corresponds to $E$ in our bounds) with specific problem settings; then alternately increase the constant and time an extra $\sqrt{n}$ while resetting the constant until the program stops diverging, and the hard constraints are satisfied. In our experiments, we hardly observe situations where such process goes for two or more steps.




\subsection{Synthetic Linear Programming}


\textbf{Problem setup.} The prediction dataset $\{(\xi_i, \theta_i)\}_{i=1}^{N}$ is generated by a general structural causal model (\cite{peters2017elements}), ensuring it is representative and consistent with physical process in nature. The programming parameters are generated for various scales in numbers of decision variables, hard constraints, and soft constraints. Full details are given in Appendix E. 

\textbf{Experimental setup.} All five methods use the same prediction model -- a fully connected neural network of two hidden layers with $128$ neurons for each and ReLU \cite{relu} for activation. We use AdaGrad \cite{adagrad} as the optimizer, with learning rate $0.01$ and gradient clipped at $1e-4$. We train each method for $40$ epochs, and early stop when valid performance degrades for $4$ consecutive epochs. Specially, to make DF\cite{aaai-WilderDT19} work right on the non-differentiable soft constraints, we first use black-box solvers to determine whether each soft constraint is active on the optimal point, and then optimize with its local expression (i.e. 2nd-order Taylor expansion at optimal point). 

\textbf{Performance analysis.} Our experiments cover four programming problem scales with three prediction dataset sizes. Results are summarized in Table \ref{tab:exp1}. In all cases, our method performs consistently better than two-stage methods, DF and SPO+. Even for the cases with only hard constraints (\ie{} the third parameter of problem size is 0), our method still has significant advantage, demonstrating its effectiveness on handling hard constraints. Surprisingly, although the main objective is linear, SPO+ often performs even worse than two-stage methods. Detailed analysis (see the appendix)
shows that SPO+ quickly reaches the test optimality and then over-fits. This may be due to that, unlike our method, SPO+ loss is not designed to align with the soft constraints. This unawareness of soft constraint is also why DF is performing worse than our method, as DF is working on an optimization landscape that is non-differentiable at the boundary of soft constraints, on which the optimal point usually lies. Besides, with the increment of the samples in train data, the performance of all methods is improved significantly and the performance gap among ours and two-stage methods becomes narrow, which implies that prediction of two-stage methods becomes better and with lower biases. Even so, our method has better sample efficiency than two-stage methods. 

We also investigated effect of the hyper-parameter $K$ in our surrogate $\max$ function, detailed in the appendix. 
Through our experiments, $K$'s effect to regret  is not monotonic, and its optimal value varies for different problem sizes. Interestingly, $K$'s effect is approximately smooth. Thus, in practice, we use simple grid search to efficiently find the best setting of $K$.

\REM{
To achieve the second objective, we test four tuples of $(n, m_1, m_3)$, where $n$ is the length of vector $x$, $m_1$ is the number of hard constraints and $m_3$ is the number of soft constraints; they are $(40, 40, 0), (40, 40, 20), (80, 80, 0)$ and $(80, 80, 40)$. We test three settings on the dataset size to check the generalizability of our algorithm, which are training set size $\in\{100, 1000, 5000\}$, to align with the settings of \cite{elmachtoub2020smart}. The batch sizes are respectively $\{10, 50, 125\}$. The size of validation and test set is both $1/4$ of training set. We report the best result of $K\in\{0.2, 1, 5, 25, 125\}$ in the main paper; the result of suboptimal hyperparamefters are in the appendix. It is worth noting that our method is still generally better than other baselines in a wide range of hyperparameter settings. Table 1 shows the result of this experiment; it shows that our algorithm is coherently better than two-stage methods and SPO+ in regret. While SPO+ is actually better than two-stage with respect to the lowest point in the curve, it overfits quickly in the experiment; see the figure in the appendix for an example.}

\begin{table}[t]
    \centering
    \begin{tabular}{c|c|c|c|c|c|c}
    & & \multicolumn{5}{c}{Regret (the lower, the better)}\\
    \hline 
    $N$ & Problem Size & L1 & L2 & SPO+~\cite{elmachtoub2020smart} & DF~\cite{aaai-WilderDT19} & Ours\\
    \hline
        100 & (40, 40, 0) & 2.454$\pm$0.232 & 2.493$\pm$0.295 & 2.506$\pm$0.294 & 2.478$\pm$0.425 & \textbf{2.258$\pm$0.311}\\
            & (40, 40, 20) & 2.626$\pm$0.307 & 2.664$\pm$0.303 & 2.667$\pm$0.281 & 2.536$\pm$0.376 & \textbf{2.350$\pm$0.263} \\
            & (80, 80, 0) & 5.736$\pm$0.291 & 5.831$\pm$0.361 & 5.711$\pm$0.309 & 5.756$\pm$0.317 & \textbf{5.200$\pm$0.506} \\
            & (80, 80, 40) & 4.786$\pm$0.403 & 4.786$\pm$0.596 & 4.939$\pm$0.382 & 4.902$\pm$0.537 & \textbf{4.570$\pm$0.390} \\
    \hline 
       1000 & (40, 40, 0) & 1.463$\pm$0.143 & 1.447$\pm$0.155 & 1.454$\pm$0.148 & 1.434$\pm$0.268 & \textbf{1.346$\pm$0.144} \\
            & (40, 40, 20) & 1.626$\pm$0.141  & 1.613$\pm$0.110  & 1.618$\pm$0.103 & 1.529$\pm$0.151 &\textbf{1.506$\pm$0.102}
            \\
            & (80, 80, 0) & 3.768$\pm$0.132 & 3.718$\pm$0.117 & 3.573$\pm$0.113 & 3.532$\pm$0.102 &\textbf{3.431$\pm$0.100} \\
            & (80, 80, 40) & 2.982$\pm$0.176 & 2.913$\pm$0.172 & 2.879$\pm$0.148 & 3.351$\pm$0.212 &\textbf{2.781$\pm$0.165} \\
    \hline
        5000 & (40, 40, 0) & 1.077$\pm$0.105 & 1.080$\pm$0.109 & 1.090$\pm$0.105 & 1.078$\pm$0.092 &\textbf{1.037$\pm$0.100}
        \\
            & (40, 40, 20) & 1.283$\pm$0.070 & 1.277$\pm$0.077 & 1.298$\pm$0.077 & 1.291$\pm$0.091 &\textbf{1.220$\pm$0.071} \\
            & (80, 80, 0) & 2.959$\pm$0.086 & 2.943$\pm$0.091 & 2.926$\pm$0.079 & 2.869$\pm$0.085 &\textbf{2.845$\pm$0.064} \\
            & (80, 80, 40) & 2.239$\pm$0.122 & 2.224$\pm$0.106 & 2.234$\pm$0.122 & 2.748$\pm$0.165 &\textbf{2.172$\pm$0.098} \\
    \hline
    \end{tabular}
    \caption{Performance comparison (regret mean with std. deviation) for the synthetic linear programming problem. $N$ is the size of the training dataset, and problem size is a triplet (\# of decision variables' dimension, \# of hard constraints, \# of soft constraints).}
    \label{tab:exp1}
\end{table}



\subsection{Portfolio Optimization}

\textbf{Problem and experimental setup.} 
\REM{As described in Eq.\ref{eq:prob2} and Section \ref{sub:app2}, we extend the classic minimum variance portfolio with a term of soft constraints.}
The prediction dataset is daily price data of SP500 from 2004 to 2017 downloaded by Quandl API \cite{sp500-dataset} with the same settings in \cite{surrogate-melding}. We use the same fix as that in linear programming experiment to make DF\cite{aaai-WilderDT19} work with non-differentiable soft constraints, which was also used in \cite{surrogate-melding} for non-convex optimization applications. Most settings are aligned with those in \cite{surrogate-melding}, including dataset configuration, prediction model, learning rate ($0.01$), optimizer (Adam), gradient clip ($0.01$) and number of training epochs ($20$). We set the number of soft constraints to $0.4$ times of $n$, where $n$ is the number of candidate equities. For the soft constraint $\alpha^T\max(Cx-d, 0)$, $\alpha=\frac{15}{n}v$, where each element of $v$ is generated randomly at uniform from $(0, 1)$; the elements of matrix $C$ is generated independently from $\{0, 1\}$, where the probability of $0$ is $0.9$ and $1$ is $0.1$. $K$ is set as $100$.

\textbf{Performance analysis.} Table \ref{tab:perf2} summarizes the experimental results. In total, on all problem sizes (\#equities), our method performs consistently and significantly better than both two-stage (L1 and L2) methods and the decision focused DF\cite{aaai-WilderDT19}. Among the three baselines, DF is significantly better than two-stage methods, similar to results in \cite{surrogate-melding}. In fact, DF under this setting can be viewed as a variant of our method with infinite $K$ and no conversion of softening $\sum_i x_i=1$. The comparison to DF also demonstrates the advantage of our method on processing such non-differentiable cases against the simple 2nd-order Taylor expansion. Besides, with the increment of the number of equities, regrets of all methods decease, which indicates that for the constraint $\sum_i x_i=1$, larger number of equities brings smaller entries of $x$ on average (with the presence of $Q$, there are many non-zero entries of $x$), lowering the impact of prediction error for any single entry.
\REM{
in general, the mean and std. deviation decreases as $N$ grows for all methods, because with the constraint $\sum_i x_i=1$, larger $N$ brings smaller entries of $x$ on average (with the presence of $Q$, the number of non-zero entries of $x$ are many), lowering the impact of prediction error for any single entry. For all problem scale following Wilder et al. \cite{surrogate-melding}'s work, our method is significantly better than both decision-focused (DF) and two-stage(TS) methods with lower regret and smaller std. deviation. Among the three baselines, DF is significantly better than TS, and $L1$-loss has similar performance to $L2$-loss, which is similar to Wilder's work. In fact, DF under this setting can be viewed as a variant of our method with $K$ goes to infinity and without conversion of hard constraint $\sum_i x_i=1$; the result shows that our method performs better in the non-differentiable environment than simple 2nd-order Taylor expansion.}
\begin{table}[t]
    \centering
    \begin{tabular}{c|c|c|c|c}
          & \multicolumn{4}{c}{Regret measured in \% (the lower, the better)}\\
          \hline
         \#Equities & L1 & L2 & DF~\cite{aaai-WilderDT19} & ours($K=100$) \\
         \hline
        50 & 4.426$\pm$0.386 & 4.472$\pm$0.385 & 4.016$\pm$0.389 & \textbf{3.662$\pm$0.238} \\
        100 & 4.262$\pm$0.231 & 4.320$\pm$0.229 & 3.500$\pm$0.252 & \textbf{3.214$\pm$0.138} \\
        150 & 3.878$\pm$0.281 & 3.950$\pm$0.287 & 3.419$\pm$0.281 & \textbf{3.109$\pm$0.162}\\
        200 & 3.755$\pm$0.236 & 3.822$\pm$0.273 & 3.406$\pm$0.287 & \textbf{3.152$\pm$0.183}\\
        250 & 3.721$\pm$0.205 & 3.751$\pm$0.212 & 3.335$\pm$0.175 & \textbf{3.212$\pm$0.135}\\
    \hline
    \end{tabular}
    \caption{Performance comparison (regret mean with std. deviation) for portfolio optimization.}
    \label{tab:perf2}
\end{table}

\subsection{Resource Provisioning} 
\textbf{Problem setup.} We use ERCOT energy dataset \cite{ercot-dataset}, which contains hourly data of energy output from 2013 to 2018, $52535$ data points in total. We use the last $20\%$ samples for test. We aim to predict the matrix $C\in\mathbb{R}^{24\times 8}$, the loads of $24$ hours in $8$ regions. The decision vairable $x$ is $8$-dimensional, and $d=0.5\times\mathbf{1}+0.1N(0, 1)$. We test five sets of $(\alpha_1, \alpha_2)$, with $\alpha_1/\alpha_2$ ranging from $100$ to $0.01$.

\textbf{Experimental setup.} We use AdaGrad with learning rate of $0.01$, and clip the gradient with norm $0.01$. For the prediction model, we use McElWee's network \cite{kevein-blog} which was highly optimized for this task, with $(8\times 24\times 77)$-dimensional numerical features and embedding ones as input. 

\textbf{Performance analysis.} Table \ref{tab:perf3} shows the experimental results. The absolute value of regret differs largely across different ratios of $\alpha_1/\alpha_2$.
Our method is better than other methods, except for $\alpha_1/\alpha_2=1$, where the desired objective is exactly $L1$-loss and thus $L1$ performs the best. Interestingly, compared to L1/L2, the Weighted L1 performs better when $\alpha_1/\alpha_2>1$, but relatively worse otherwise. This is probably due to the dataset's inherent sample bias (\eg{} asymmetric distribution and occasional peaks), which causes the systematic bias (usually underestimation) of the prediction model. This bias exacerbates, when superposed with weighted penalty multipliers which encourage the existing bias to fall on the wrong side. Besides, the large variance for weighted L1 at $\alpha_1/\alpha_2=0.01$ is caused by a pair of outliers.
\begin{table}[t]
    \centering
    \begin{tabular}{c|c|c|c|c}
        & \multicolumn{4}{c}{Regret (the lower, the better)}\\
        \hline
        $\alpha_1/\alpha_2$ & L1 & L2 & Weighted L1  & Ours($K=0.05$) \\
        \hline
        100 & 105.061$\pm$21.954 & 93.193$\pm$29.815 & 79.014$\pm$32.069 & \textbf{20.829$\pm$8.289}\\
        10 & 13.061$\pm$2.713 & 13.275$\pm$6.208 & 7.743$\pm$1.305 & \textbf{2.746$\pm$1.296}\\
        1 & \textbf{4.267$\pm$0.618} & 5.136$\pm$0.722 & \textbf{4.267$\pm$0.618} & 5.839$\pm$0.512\\
        0.1 & 10.846$\pm$1.606& 13.619$\pm$2.195 & 16.462$\pm$2.093 & \textbf{10.240$\pm$1.248} \\
        0.01 & 99.145$\pm$21.159& 118.112$\pm$29.957 & 230.825$\pm$91.184 & \textbf{94.341$\pm$29.821}\\
    \hline
    \end{tabular}
    \caption{Performance comparison (regret mean with std. deviation) for resource provisioning.}
    \label{tab:perf3}
\end{table}

\section{Related Work} 

\textbf{Differentiating $\bm{\argmin}$/$\bm{\argmax}$ through optimality conditions.} For convex optimization problems, the KKT conditions map coefficients to the set of solutions, and thus can be differentiated for $\argmin$ using implicit function theorem. Following this idea, existing work 
developed implicit layers of argmin in neural network, including OptNet \cite{optnet} for quadratic programs (QP) problems and CVXPY \cite{amos2019differentiable} for more general convex optimization problems. Further with linear relaxation and QP regularization, Wilder \etaldot{} derived an end-to-end framework for combinatorial programs \cite{aaai-WilderDT19}, which accelerates the computation by leverage the low-rank properties of decision vectors \cite{surrogate-melding}, and is further extended to mixed integer linear programs in MIPaaL \cite{mipaal}. Besides, for the relaxed LP problems, instead of differentiating KKT conditions, IntOpt \cite{hsd} proposes an interior point based approach which computes gradients by differentiating homogeneous self-dual formulation.

\textbf{Optimizing surrogate loss functions.} Elmachtoub and Grigas \cite{elmachtoub2020smart} proposed a convex surrogate loss function, namely SPO+, measuring the decision error induced by a prediction, which can handle polyhedral, convex and mixed integer programs with linear objectives. TOPNet \cite{ijcai2020-617} proposes a learned surrogate approach for exotic forms of decision loss functions, which however is hard to generalize to handle constrained programs.

Differentiating $\argmin$ is critical for gradient methods to optimize decision-focused prediction models. Many kinds of efforts, including regularization for specific problems (e.g., differentiable dynamic programming \cite{dp-differentiable}, differentiable submodular optimization \cite{diff-submodular}), reparameterization \cite{reparameterization-diff}, perturbation \cite{perturbed-opt} and direct optimization (\cite{pmlr-v48-songb16}), are spent for optimization with discrete or continuous variables and still actively investigated. 


As a comparison, our work proposes a surrogate loss function for constrained linear and quadratic programs extended with soft constraints, where the soft constraints were not considered principally in previous work. Also, unlike OptNet \cite{optnet} and CVXPY \cite{cvxpy}, our method does not need to solve KKT conditions. Instead, by adding reasonable costs for infeasibility, we convert the constrained optimization problem into an unconstrained one while keeping the same solution set, and then derive the required Jacobian analytically. To some degree, we reinvent the exact function \cite{bertsekas1997nonlinear} in prediction+optimization.


\section{Conclusion}
In this paper, we have proposed a novel surrogate objective framework for prediction+optimization problems on linear and semi-definite negative quadratic objectives with linear soft constraints. 
The framework gives the theoretical bounds on constraints' multipliers, and derives the closed-form solution as well as their gradients with respect to problem parameters required to predict by a model. 
We first convert the hard constraints into soft ones with reasonably large constant multipliers, making the problem unconstrained, and then optimize a piecewise surrogate locally. We apply and empirically validate our method on three applications against traditional two-stage methods and other end-to-end decision-focused methods. We believe our work is an important enhancement to the current prediction+optimization toolbox. There are two directions for the future work: one is to seek solutions which can deal with hard constraint parameters with negative entries, and the other is to develop the distributional prediction model rather than existing point estimation, to improve robustness of current prediction+optimization methods.

\newpage

\bibliography{neurips_2021}

\begin{thebibliography}{34}
\providecommand{\natexlab}[1]{#1}
\providecommand{\url}[1]{\texttt{#1}}
\expandafter\ifx\csname urlstyle\endcsname\relax
  \providecommand{\doi}[1]{doi: #1}\else
  \providecommand{\doi}{doi: \begingroup \urlstyle{rm}\Url}\fi

\bibitem[Luo et~al.(2020)Luo, Qiao, Chen, Zhao, Yao, Zhang, Wu, Zhou, and
  Lin]{LuoEtAl20}
Chuan Luo, Bo~Qiao, Xin Chen, Pu~Zhao, Randolph Yao, Hongyu Zhang, Wei Wu,
  Andrew Zhou, and Qingwei Lin.
\newblock Intelligent virtual machine provisioning in cloud computing.
\newblock In \emph{Proceedings of {IJCAI} 2020}, pages 1495--1502, 2020.

\bibitem[Chansombat et~al.(2018)Chansombat, Pongcharoen, and
  Hicks]{goodsprogramming}
Sirikarn Chansombat, Pupong Pongcharoen, and Christian Hicks.
\newblock A mixed-integer linear programming model for integrated production
  and preventive maintenance scheduling in the capital goods industry.
\newblock \emph{International Journal of Production Research}, 57:\penalty0
  1--22, 04 2018.
\newblock \doi{10.1080/00207543.2018.1459923}.

\bibitem[Wang et~al.(2020)Wang, Wilder, Perrault, and Tambe]{surrogate-melding}
Kai Wang, Bryan Wilder, Andrew Perrault, and Milind Tambe.
\newblock Automatically learning compact quality-aware surrogates for
  optimization problems.
\newblock In \emph{NeurIPS 2020}, Vancouver, Canada, 2020.

\bibitem[Demirovi et~al.(2020)Demirovi, Stuckey, Guns, Bailey, Leckie,
  Ramamohanarao, and Chan]{DPpred+op}
Emir Demirovi, Peter Stuckey, Tias Guns, James Bailey, Christopher Leckie,
  Kotagiri Ramamohanarao, and Jeffrey Chan.
\newblock Dynamic programming for predict+optimise.
\newblock \emph{Proceedings of the AAAI Conference on Artificial Intelligence},
  34:\penalty0 1444--1451, 04 2020.
\newblock \doi{10.1609/aaai.v34i02.5502}.

\bibitem[Ganeshapillai et~al.(2013)Ganeshapillai, Guttag, and
  Lo]{pmlr-v28-ganeshapillai13}
Gartheeban Ganeshapillai, John Guttag, and Andrew Lo.
\newblock Learning connections in financial time series.
\newblock In Sanjoy Dasgupta and David McAllester, editors, \emph{Proceedings
  of the 30th International Conference on Machine Learning}, volume~28 of
  \emph{Proceedings of Machine Learning Research}, pages 109--117, Atlanta,
  Georgia, USA, 17--19 Jun 2013. PMLR.

\bibitem[Gurobi~Optimization(2021)]{gurobi}
LLC Gurobi~Optimization.
\newblock Gurobi optimizer reference manual, 2021.
\newblock URL \url{http://www.gurobi.com}.

\bibitem[Cplex(2020)]{cplex}
IBM~ILOG Cplex.
\newblock Release notes for cplex 20.1.0.
\newblock 2020.
\newblock URL
  \url{https://www.ibm.com/docs/en/icos/20.1.0?topic=2010-release-notes-cplex}.

\bibitem[Bengio(1997)]{ijns-Bengio97}
Yoshua Bengio.
\newblock Using a financial training criterion rather than a prediction
  criterion.
\newblock \emph{Int. J. Neural Syst.}, 8\penalty0 (4):\penalty0 433--443, 1997.
\newblock \doi{10.1142/S0129065797000422}.
\newblock URL \url{https://doi.org/10.1142/S0129065797000422}.

\bibitem[Wilder et~al.(2019)Wilder, Dilkina, and Tambe]{aaai-WilderDT19}
Bryan Wilder, Bistra Dilkina, and Milind Tambe.
\newblock Melding the data-decisions pipeline: Decision-focused learning for
  combinatorial optimization.
\newblock In \emph{The Thirty-Third {AAAI} Conference on Artificial
  Intelligence}, pages 1658--1665. {AAAI} Press, 2019.

\bibitem[Elmachtoub and Grigas(2017)]{elmachtoub2020smart}
Adam Elmachtoub and Paul Grigas.
\newblock Smart "predict, then optimize".
\newblock \emph{Management Science}, 10 2017.
\newblock \doi{10.1287/mnsc.2020.3922}.

\bibitem[Donti et~al.(2017)Donti, Amos, and Kolter]{task-opt-nips2017}
Priya~L. Donti, Brandon Amos, and J.~Zico Kolter.
\newblock Task-based end-to-end model learning in stochastic optimization.
\newblock In \emph{Proceedings of the 31st International Conference on Neural
  Information Processing Systems}, NIPS'17, page 5490–5500, Red Hook, NY,
  USA, 2017. Curran Associates Inc.
\newblock ISBN 9781510860964.

\bibitem[Song et~al.(2016)Song, Schwing, Richard, and
  Urtasun]{pmlr-v48-songb16}
Yang Song, Alexander Schwing, Richard, and Raquel Urtasun.
\newblock Training deep neural networks via direct loss minimization.
\newblock In \emph{Proceedings of The 33rd International Conference on Machine
  Learning}, volume~48 of \emph{Proceedings of Machine Learning Research},
  pages 2169--2177, New York, New York, USA, 20--22 Jun 2016. PMLR.

\bibitem[Amos and Kolter(2017)]{optnet}
Brandon Amos and J.~Zico Kolter.
\newblock {O}pt{N}et: Differentiable optimization as a layer in neural
  networks.
\newblock In Doina Precup and Yee~Whye Teh, editors, \emph{Proceedings of the
  34th International Conference on Machine Learning}, volume~70 of
  \emph{Proceedings of Machine Learning Research}, pages 136--145,
  International Convention Centre, Sydney, Australia, 06--11 Aug 2017. PMLR.
\newblock URL \url{http://proceedings.mlr.press/v70/amos17a.html}.

\bibitem[Amos(2019)]{amos2019differentiable}
Brandon Amos.
\newblock \emph{{Differentiable Optimization-Based Modeling for Machine
  Learning}}.
\newblock PhD thesis, Carnegie Mellon University, May 2019.

\bibitem[Agrawal et~al.(2019)Agrawal, Amos, Barratt, Boyd, Diamond, and
  Kolter]{cvxpy}
Akshay Agrawal, Brandon Amos, Shane Barratt, Stephen Boyd, Steven Diamond, and
  J.~Zico Kolter.
\newblock Differentiable convex optimization layers.
\newblock In \emph{Advances in Neural Information Processing Systems},
  volume~32, pages 9562--9574. Curran Associates, Inc., 2019.

\bibitem[Markowitz(1959)]{portfolio}
Harry~M. Markowitz.
\newblock \emph{Portfolio Selection: Efficient Diversification of Investments}.
\newblock Yale University Press, 1959.
\newblock ISBN 9780300013726.
\newblock URL \url{http://www.jstor.org/stable/j.ctt1bh4c8h}.

\bibitem[Paszke et~al.(2019)Paszke, Gross, Massa, Lerer, Bradbury, Chanan,
  Killeen, Lin, Gimelshein, Antiga, Desmaison, Kopf, Yang, DeVito, Raison,
  Tejani, Chilamkurthy, Steiner, Fang, Bai, and Chintala]{pytorch}
Adam Paszke, Sam Gross, Francisco Massa, Adam Lerer, James Bradbury, Gregory
  Chanan, Trevor Killeen, Zeming Lin, Natalia Gimelshein, Luca Antiga, Alban
  Desmaison, Andreas Kopf, Edward Yang, Zachary DeVito, Martin Raison, Alykhan
  Tejani, Sasank Chilamkurthy, Benoit Steiner, Lu~Fang, Junjie Bai, and Soumith
  Chintala.
\newblock Pytorch: An imperative style, high-performance deep learning library.
\newblock In H.~Wallach, H.~Larochelle, A.~Beygelzimer, F.~d\textquotesingle
  Alch\'{e}-Buc, E.~Fox, and R.~Garnett, editors, \emph{Advances in Neural
  Information Processing Systems 32}, pages 8024--8035. Curran Associates,
  Inc., 2019.

\bibitem[Griewank and Walther(2008)]{diffbook}
A.~Griewank and A.~Walther.
\newblock \emph{Evaluating Derivatives: Principles and Techniques of
  Algorithmic Differentiation, Second Edition}.
\newblock Other Titles in Applied Mathematics. Society for Industrial and
  Applied Mathematics (SIAM, 3600 Market Street, Floor 6, Philadelphia, PA
  19104), 2008.
\newblock ISBN 9780898717761.

\bibitem[Magnus(2019)]{matrixdiffcalculus}
Jan R. Magnus Jan~R. Magnus.
\newblock \emph{Matrix Differential Calculus with Applications in Statistics
  and Econometrics, Third Edition}.
\newblock Wiley Series in Probability and Statistics. Wiley, 2019.
\newblock ISBN 9781119541202.

\bibitem[Boyd and Vandenberghe(2004)]{cvxbook}
Stephen Boyd and Lieven Vandenberghe.
\newblock \emph{Convex Optimization}.
\newblock Cambridge University Press, 2004.

\bibitem[Peters et~al.(2017)Peters, Janzing, and
  Sch{\"o}lkopf]{peters2017elements}
Jonas Peters, Dominik Janzing, and Bernhard Sch{\"o}lkopf.
\newblock \emph{Elements of causal inference}.
\newblock The MIT Press, 2017.

\bibitem[Nair and Hinton(2010)]{relu}
Vinod Nair and Geoffrey~E. Hinton.
\newblock Rectified linear units improve restricted boltzmann machines.
\newblock In \emph{Proceedings of the 27th International Conference on
  International Conference on Machine Learning}, ICML'10, page 807–814,
  Madison, WI, USA, 2010. Omnipress.
\newblock ISBN 9781605589077.

\bibitem[Duchi et~al.(2011)Duchi, Hazan, and Singer]{adagrad}
John Duchi, Elad Hazan, and Yoram Singer.
\newblock Adaptive subgradient methods for online learning and stochastic
  optimization.
\newblock \emph{Journal of Machine Learning Research}, 12:\penalty0 2121--2159,
  07 2011.

\bibitem[sp5(2021)]{sp500-dataset}
Sp500 data set.
\newblock 2021.
\newblock URL \url{https://docs.quandl.com/}.

\bibitem[erc(2021)]{ercot-dataset}
Ercot data set.
\newblock 2021.
\newblock URL
  \url{https://github.com/kmcelwee/mediumBlog/tree/master/load_forecast/data}.

\bibitem[McElwee(2020)]{kevein-blog}
Kevin McElwee.
\newblock Predict daily electric consumption with neural networks, 2020.
\newblock URL \url{https://www.brownanalytics.com/load-forecasting}.

\bibitem[Ferber et~al.(2020)Ferber, Wilder, Dilkina, and Tambe]{mipaal}
Aaron Ferber, Bryan Wilder, Bistra Dilkina, and Milind Tambe.
\newblock Mipaal: Mixed integer program as a layer.
\newblock In \emph{AAAI Conference on Artificial Intelligence}, 2020.

\bibitem[Mandi and Guns(2020)]{hsd}
Jayanta Mandi and Tias Guns.
\newblock Interior point solving for lp-based prediction+optimisation.
\newblock In H.~Larochelle, M.~Ranzato, R.~Hadsell, M.~F. Balcan, and H.~Lin,
  editors, \emph{Advances in Neural Information Processing Systems}, volume~33,
  pages 7272--7282. Curran Associates, Inc., 2020.

\bibitem[Chen et~al.(2020)Chen, Zhu, Cui, and Gomes]{ijcai2020-617}
Di~Chen, Yada Zhu, Xiaodong Cui, and Carla Gomes.
\newblock Task-based learning via task-oriented prediction network with
  applications in finance.
\newblock In Christian Bessiere, editor, \emph{Proceedings of the Twenty-Ninth
  International Joint Conference on Artificial Intelligence, {IJCAI-20}}, pages
  4476--4482. International Joint Conferences on Artificial Intelligence
  Organization, 7 2020.
\newblock \doi{10.24963/ijcai.2020/617}.
\newblock URL \url{https://doi.org/10.24963/ijcai.2020/617}.
\newblock Special Track on AI in FinTech.

\bibitem[{Mensch} and {Blondel}(2018)]{dp-differentiable}
Arthur {Mensch} and Mathieu {Blondel}.
\newblock Differentiable dynamic programming for structured prediction and
  attention.
\newblock In \emph{35th International Conference on Machine Learning},
  volume~80, pages 3459--3468, 2018.

\bibitem[Djolonga and Krause(2017)]{diff-submodular}
Josip Djolonga and Andreas Krause.
\newblock Differentiable learning of submodular models.
\newblock In \emph{Advances in Neural Information Processing Systems},
  volume~30, pages 1013--1023. Curran Associates, Inc., 2017.

\bibitem[Rezende et~al.(2014)Rezende, Mohamed, and
  Wierstra]{reparameterization-diff}
Danilo~Jimenez Rezende, Shakir Mohamed, and Daan Wierstra.
\newblock Stochastic backpropagation and approximate inference in deep
  generative models.
\newblock In \emph{Proceedings of the 31st International Conference on
  International Conference on Machine Learning - Volume 32}, ICML'14, page
  II–1278–II–1286. JMLR.org, 2014.

\bibitem[Berthet et~al.(2020)Berthet, Blondel, Teboul, Cuturi, Vert, and
  Bach]{perturbed-opt}
Quentin Berthet, Mathieu Blondel, Olivier Teboul, Marco Cuturi, Jean-Philippe
  Vert, and Francis Bach.
\newblock Learning with differentiable perturbed optimizers.
\newblock \emph{arXiv preprint arXiv:2002.08676}, 2020.

\bibitem[Bertsekas(1997)]{bertsekas1997nonlinear}
Dimitri~P Bertsekas.
\newblock Nonlinear programming.
\newblock \emph{Journal of the Operational Research Society}, 48\penalty0
  (3):\penalty0 334--334, 1997.

\end{thebibliography}







\section*{Checklist}


\begin{enumerate}

\item For all authors...
\begin{enumerate}
  \item Do the main claims made in the abstract and introduction accurately reflect the paper's contributions and scope?
    \answerYes{}
  \item Did you describe the limitations of your work?
    \answerYes{}
  \item Did you discuss any potential negative societal impacts of your work?
    \answerYes{}
  \item Have you read the ethics review guidelines and ensured that your paper conforms to them?
    \answerYes{}
\end{enumerate}

\item If you are including theoretical results...
\begin{enumerate}
  \item Did you state the full set of assumptions of all theoretical results?
    \answerYes{}
	\item Did you include complete proofs of all theoretical results?
    \answerYes{}
\end{enumerate}

\item If you ran experiments...
\begin{enumerate}
  \item Did you include the code, data, and instructions needed to reproduce the main experimental results (either in the supplemental material or as a URL)?
    \answerYes{}
  \item Did you specify all the training details (e.g., data splits, hyperparameters, how they were chosen)?
    \answerYes{}
	\item Did you report error bars (e.g., with respect to the random seed after running experiments multiple times)?
    \answerYes{}
	\item Did you include the total amount of compute and the type of resources used (e.g., type of GPUs, internal cluster, or cloud provider)?
    \answerYes{}
\end{enumerate}

\item If you are using existing assets (e.g., code, data, models) or curating/releasing new assets...
\begin{enumerate}
  \item If your work uses existing assets, did you cite the creators?
    \answerYes{}
  \item Did you mention the license of the assets?
    \answerYes{}
  \item Did you include any new assets either in the supplemental material or as a URL?
    \answerYes{}
  \item Did you discuss whether and how consent was obtained from people whose data you're using/curating?
    \answerYes{}
  \item Did you discuss whether the data you are using/curating contains personally identifiable information or offensive content?
    \answerYes{}
\end{enumerate}

\item If you used crowdsourcing or conducted research with human subjects...
\begin{enumerate}
  \item Did you include the full text of instructions given to participants and screenshots, if applicable?
    \answerNA{We did not use crowdsourcing, nor did we conduct research with human subjects.}
  \item Did you describe any potential participant risks, with links to Institutional Review Board (IRB) approvals, if applicable?
    \answerNA{}
  \item Did you include the estimated hourly wage paid to participants and the total amount spent on participant compensation?
    \answerNA{}
\end{enumerate}

\end{enumerate}


\newpage
\appendix

\section{Mathematical Proofs}\label{app:math}

Throughout the following proofs, for the convenience of description, we denote two symbols:
\begin{itemize}
    \item the symbol $\angle(a,b)$, which means the angle of two vectors $a,b \in \mathbb{R}^n$;
    \item $cone(A)$, which means the conic combination of the row vectors of matrix $A$; $cone(A,B)$ stands for the conic combination of the union of the row vectors of matrix $A,B$.
\end{itemize}

Throughout this section, we will assume the same variables as in Eq.\ref{eq:prob}-\ref{eq:lagrangian} and Section~\ref{sec:methodology} by default, and further make the following assumptions or simplification for ease of description. 
\begin{itemize}
    \item Assume the main objective $g(x, \theta) = \theta^Tx$, where $||\theta||_2 \leq E$. In reality, elements of the predictive vector $\theta$ have a known range and thus $l_2$-norm of $\theta$ is bounded by a constant, namely $E \in \mathbb{R}$.
    \item We omit the soft constraints in original objective. Since all proofs in this section consider the worst direction of $\frac{\partial L}{\partial x}$ in Eq.\ref{eq:lagrangian}, for soft constraints which are linear, their derivatives are constant and can be integrated into $E$ for any bounded penalty multiplier $\alpha$.
    \item Similarly, we omit quadratic term $Q$. At first glance, it seems that $Q$ will bring unbounded derivative of decision variable $x$. However, note that we are finding the maximum value of function, and the matrix $Q$ is semi-definite positive, we have $\frac{\partial -x^TQx}{\partial x_i}<0$ for any $i$. Thus, denote the maximum point of $-x^TQx$ to be $x_0$, we can find a finite radius $r>0$, such that any point $x\in\mathcal{R}^n$ out of the circle with $x_0$ as the center and $r$ as radius is impossible to be the global optimal point; intuitively, the norm of the derivative of $Q$ becomes so large as to "draw the optimal point back" from too far, and thus we can ignore the points outside the circle. Since the circle is a bounded set, the norm of derivative $E$ is also bounded inside the circle.
    \item When we consider the effect of violating constraints, we only consider non-degraded activated cone, \ie{}, for an activated cone in a $n$-dimensional Euclidean space, there are at least $n$ non-redundant active constraints. Otherwise, as we stated in the main paper, we can project the activated cone $\mathcal{K} = \{z \in \mathbb{R}^n | z-x_0\in cone(A')\}$ onto the complementary space of the cone tip $x_0$ (\ie{} the solution space of $A'$) and proceed on a low-dimensional subspace.
    \item As for constraint itself, we assume that for any row vector of $A$ in $Ax\leq b$, and any row vector of $B$ in $Bx=c$, we have $||A_i||_2=1$ and $||B_i||_2=1$ for any row vector $A_i, B_i$, $i\in\{1,2,...,m\}$ where $m$ is the number of constraints. If not, we can first normalize the row vectors of $A$ and $B$ while adequately scaling $b, c$, then apply our proofs.
\end{itemize}
Thus it is straightforward to apply our proofs to the general form of Eq.\ref{eq:lagrangian}. 



\REM{
We assume that our main objective $g(x, \theta)$ is simply $\theta^Tx$, $L(x)$ has no existing soft constraints, and $||\theta||_2$ has a known upper bound $E$. We omit the proof for the case where original soft constraint $\alpha$ and quadratic term $Q$ exist, as all proofs in this section consider the worst direction of $\frac{\partial L}{\partial x}$ in Eq.\ref{eq:lagrangian}. Soft constraints are also linear, so their derivatives are constant and can be integrated into $E$ for any bounded penalty multiplier $\alpha$. For quadratic scenario, since the matrix $Q$ is semi-definite positive, we have $\frac{\partial -x^TQx}{\partial x_i}<0$ for any $i$, \ie{}, $\frac{\partial g}{\partial x}$ is still bounded for any $x\in \mathbb{R}^n$, and thus the gained utility $R$ for any infeasible $x$, which is the integral of the directional derivative of $g(x,\theta)-\alpha^T\max(Cx-d, 0)$ from activated cone tip $x_0$, is still bounded by a constant $E$. Thus, our proof can be directly applied to the general form of Eq.\ref{eq:lagrangian}.}


\subsection{Lemma 1}

\begin{customthml}{1}
(Bounding the utility gain) Let $R=f(x,\theta)-\max_{x_1\in\mathcal{C}}f(x_1,\theta)$ be the utility gain, then $R\leq f(x,\theta)-f(x_0, \theta)\leq \frac{E}{\cos p_0}\sum_{i=1}^{n}(A'_ix-b'_i)$, where $x$ is an infeasible point, $A'x \leq b'$ the active constraints at $x$, $p_0 = \angle({A'_i}^*, \theta^*)$ where ${A'_i}^*$ and $\theta^*$ are the optimal solution of $\max_{\theta}\min_{A'_i} cos \angle(A'_{i}, \theta)$ (i.e., the maximin angle between $\theta$ and hyperplanes of the activated cone $\mathcal{K} = \{z \in \mathbb{R}^n | z-x_0\in cone(A')\}$), and $x_0$ the projection of $x$ to the tip of cone $\mathcal{C} = \{ z \in \mathbb{R}^n | A'z \leq b'\}$. $E$ is the upper bound of $||\theta||_2$.
\end{customthml}
\begin{proof}
The first inequality is trivial as $x_0\in\mathcal{C}$ and directly followed from the definition of utility gain $R$.

For the second inequality, by the law of sines, we have $||x-x_0||_2=\frac{d_i}{\sin \tau_i}$, where $d_i$ is the distance of $x$ to the $i$-th hyper-plane in the activated cone $cone(A')$ with $A'_i$ (\ie{} the $i$-th row of $A'$) being the normal vector of the $i$-th hyper-plane in $A'x\leq b'$, and $\tau_i$ is the angle between $i$-th hyper-plane and $\theta$. Then the utility gain $R$ from violating the constraints follows: 
\begin{equation}
\begin{aligned}
R&\leq \theta^T(x-x_0)\\
 &\leq||\theta||_2||x-x_0||_2\\
 &=||\theta||_2\frac{d_i}{\text{sin}~\tau_i}\ (\text{By the law of sines; $i$ is arbitrary})\\
 &=||\theta||_2\frac{d_j}{\text{sin}~\tau_j}\ (\text{By selecting $j = \argmin_{i}{\text{sin}~\tau_i} = \argmax_{i}{cos \angle(A'_i, x-x_0)}$})\\
 &\leq E\frac{\sum_{i=1}^{n}d_i}{\text{sin}~\tau_j}\text{ (E is the upper bound of } ||\theta||_2\text{)} \\
 &= E\frac{\sum_{i=1}^{n}(A'_ix-b'_i)}{\text{sin}~\tau_j}  (\text{$d_i=\frac{|A'_ix-b'_i|}{||A'_i||_2}=A'_ix-b'_i$; note the fifth assumption.})
\end{aligned}
\end{equation}
Thus, setting all entries of penalty multiplier $\beta$ in the hard constraint conversion to $O(\frac{E}{sin~\tau_j}) = O(\frac{E}{cos \angle(A'_j, x-x_0)})$ will give us an upper bound of the minimum feasible $\beta$. 
\end{proof}
The rest of the work is to find the lower bound for $\min_{\theta}\max_i  cos \angle(A'_i, x-x_0)$ for any given activated cone $cone(A')$ and $x_0$; as we have assumed $||A'_i||=1\ \forall i\in\{1,2,...,n\}$, the objective $\min_{\theta}\max_i  cos \angle(A'_i, x-x_0)$ can be equivalently written as\footnote{With an abuse of notation, the $\theta$ in Eq.\ref{eq:minimaxprob} represents $x-x_0$, as the worst situation for deriving upper bound of $\beta$ is that $x-x_0$ has the same direction with the objective $\theta^Tx$'s derivative $\theta$.}
\begin{equation}
\label{eq:minimaxprob}
F=\min_{\theta}\max_i A_i'\theta
\end{equation}

Deriving the lower bound of $F$ is the core part of proving Theorem 3 and 6. 

\subsection{Lemma 7}
In order to prove Theorem 2, we give a crucial lemma.
\begin{lemma}
Consider a set of hyper-planes with normal vectors $A'_1,A'_2,...,A'_n\in\mathbb{R}^{n}\geq 0$, $||A'_i||_2=1$ for $i\in\{1,2,...,n\}$ which forms a cone. Let $d_i$ be the distance of a point $x$ to the hyper-plane $A'_i$, where $x$ is in the cone of $A'_i$ (\ie{} $x=k_1A'_1+...+k_nA'_n$, $k_i\geq 0 ~\forall i\in\{1,2,...,n\}$), then we have $||x||_2\leq \sum_{i=1}^{n}d_i$.
\end{lemma}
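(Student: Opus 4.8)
The plan is to normalize the geometry first: translate so that the tip of the cone sits at the origin, so that each hyper-plane becomes $\{z : A'_i \cdot z = 0\}$. Since $\|A'_i\|_2 = 1$, the distance of $x$ to the $i$-th hyper-plane is $d_i = |A'_i \cdot x|$, and because $x = \sum_{j=1}^n k_j A'_j$ lies in $\mathrm{cone}(A')$ we will see $A'_i \cdot x \ge 0$, so $d_i = A'_i \cdot x$. Expanding, $d_i = A'_i \cdot x = \sum_{j=1}^n k_j \langle A'_i, A'_j\rangle$. This rewrites everything in terms of the Gram matrix of the normals and the conic coefficients $k_j$, which is the right bookkeeping for the bound.

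The crux of the argument — and really its only content — is the elementary inequality $d_i \ge k_i$. To get it, I would split off the diagonal term,
\[
d_i = k_i\,\|A'_i\|_2^2 + \sum_{j\neq i} k_j\,\langle A'_i, A'_j\rangle = k_i + \sum_{j\neq i} k_j\,\langle A'_i, A'_j\rangle .
\]
Here the entrywise non-negativity assumption $A'_i, A'_j \ge 0$ forces every inner product $\langle A'_i, A'_j\rangle \ge 0$, and combined with $k_j \ge 0$ the entire residual sum is non-negative. Hence $d_i \ge k_i$ (which also confirms $d_i \ge 0$, justifying the absolute-value removal above).

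The remaining step is a one-line triangle inequality: $\|x\|_2 = \bigl\|\sum_i k_i A'_i\bigr\|_2 \le \sum_i k_i\,\|A'_i\|_2 = \sum_i k_i$, using $\|A'_i\|_2 = 1$ and $k_i \ge 0$. Chaining the two estimates gives $\|x\|_2 \le \sum_i k_i \le \sum_i d_i$, which is exactly the claim.

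The main obstacle to watch is the inequality $d_i \ge k_i$: it is precisely where the entrywise sign condition $A' \ge 0$ is indispensable. If the normals were allowed to have negative entries, the off-diagonal Gram terms $\langle A'_i, A'_j\rangle$ could be negative, $d_i \ge k_i$ could fail, and the distance-to-hull could exceed the sum of hyper-plane distances — this is exactly the \textit{acute angle} pathology the paper warns about. So the only genuine care needed is to invoke non-negativity at the right place; everything else is routine.
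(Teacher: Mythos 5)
Your proof is correct and takes essentially the same approach as the paper's: both expand the distances $d_i = {A'_i}^T x$ through the Gram matrix of the normals and invoke entrywise non-negativity ($A'_i \geq 0 \Rightarrow \langle A'_i, A'_j\rangle \geq 0$) to drop the cross terms, reducing the claim to $\|x\|_2 \leq \sum_i k_i \leq \sum_i d_i$. The only differences are cosmetic: you establish the termwise bound $d_i \geq k_i$ where the paper sums over $i$ first to get $\sum_i d_i \geq \sum_i k_i$, and you finish with the triangle inequality $\|\sum_i k_i A'_i\|_2 \leq \sum_i k_i$ where the paper instead normalizes $\|x\|_2 = 1$ and argues from $1 = \sum_j k_j\, {A'_j}^T x$ with ${A'_j}^T x \in [0,1]$ --- your finish is the cleaner of the two.
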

\begin{proof}
Without loss of generality, let $x=k_1A'_1+k_2A'_2+...+k_nA'_n$, and $||x||_2=1$ (we can scale $x$ if $||x||_2\neq 1$). The distance $d_i=\frac{{A'_i}^Tx}{||A'_i||_2}={A'_i}^Tx$. Therefore, we have
\begin{equation}
\begin{aligned}
    \sum_{i=1}^nd_i&=\sum_{i=1}^n{A'_i}^T(k_1A'_1+...+k_nA'_n)\\
    &=\sum_{j=1}^n k_j({A'_j}^TA'_j+\sum_{i=1,i \neq j}^n {A'_i}^T A'_{j})\\
    &\geq \sum_{j=1}^n k_j ||A'_j||_2\text{ ($A'_i\geq 0$, therefore ${A'_i}^TA'_j\geq 0$)}\\
    &= k_1+k_2+...+k_n 
\end{aligned}
\end{equation}
We next prove that $k_1+k_2+...+k_n\geq 1$. As $||x||_2=1$, we have
\begin{equation}
\begin{aligned}
||x||_2&=1\\
k_1{A'_1}^Tx+k_2{A'_2}^Tx+...+k_n{A'_n}^Tx&=1\\
\end{aligned}
\end{equation}
As ${A'_i}^Tx\in[0, 1]$, $k_1+...+k_n\geq 1$ must hold. Therefore, $\sum_{i=1}^n d_i\geq k_1+...+k_n\geq 1=||x||_2$.
\end{proof}

\subsection{Theorem 2}
\begin{customthm}{2}
 Assume the optimization objective $\theta^Tx$ with constraints $Ax\leq b$, where $A\geq 0$, and $b\geq 0$. Then, the utility gain $R$ obtained from violating $Ax-b \leq 0$ has an upper bound of $O(\sum_i\max(w_i, 0)E)$, where $w=A'x-b'$, and $A'x \leq b'$ is the active constraints.
\end{customthm}
\begin{proof}
Let $d_i=(d_{i,1}, d_{i,2},...,d_{i,n})^T$ be the vector of distances to the $i$-th hyper-plane of $A'$ with normal vector $A'_i$ from $x$. By definition of $x$ and $x_0$, the distance vector of the point to the convex hull $\mathcal{C}$ is $x-x_0$. When $A\geq 0$ and $b\geq 0$, obviously Lemma 7 holds. Consider the active constraints $w=A'x-b'\geq 0$ which is the subset of active constraints in $Ax \leq b$. The utility gain $R$ from violating the rules $w=A'x-b'$ satisfies:
\begin{equation}
\begin{aligned}
R&\leq \theta^T(x-x_0)\\
&\leq ||\theta||_2||(x-x_0)||_2\\
&\leq ||\theta||_2\sum_i||d_i||_2\text{ (Lemma 7)}\\
     &\leq E\sum_i||d_i||_2\\
     &=E\sum_i\max(w_i, 0)\text{ ($d_i=\frac{|A'_ix-b'_i|}{||A'_i||_2}=A'_ix-b'_i$; note the fifth assumption.)}
\end{aligned}
\end{equation}
\par This holds for any $x$. Therefore, we turn the hard constraint $Ax \leq b$ into a soft constraint with penalty multiplier $\beta=E$. 
\end{proof}

\subsection{Theorem 3}
\begin{customthm}{3}
Assume at least one constraint in $x\geq 0$ is active, then the utility gain $R$ by deviating from the feasible region is bounded by $O(\frac{n^{1.5}E\sum_{i}max(w_i, 0)}{\sin{p_0}})$, where $p_0=\min_{i,j}\angle(A_i, e_j)$ (\ie{} the smallest corner between axes and other constraint hyper-planes), and $w = \begin{bmatrix} A'\\ -I' \end{bmatrix} x - \begin{bmatrix} b' \\ 0\end{bmatrix}$, where $A'x \leq b'$ and $-I'x \leq 0$ are active constraints in $Ax \leq b$ and $x \geq 0$ respectively.
\end{customthm}

\begin{proof}
%
%

As Lemma 1 implies, the key point of proving this theorem is to prove that $\theta$ (the derivative of the objective), is always close to some rows of $A'$ and $-I'$. 

Consider the activated cone $\mathcal{K}=\{z \in \mathbb{R}^n | z-x_0\in cone(A', -I')\}$; all hyper-planes of $A'x = b'$ and violated entries of $x\geq 0$ must pass through $\mathcal{K}$'s tip $x_0$. Figure \ref{fig:thm3} gives such an illustrative example. For the rest of the proof, it is enough to assume that the activated cone $\mathcal{K}$ is a non-degraded cone, since otherwise, as the main paper states, we can project the activated cone onto the supplementary space of its tip. By the projection, we actually reduce the problem to the same one with lower dimensions of $x$, and can apply the following proof in the same way. Let $q$ be the number of rows in $I'$, i.e., the number of active constraints in $x \geq 0$; and $r$ be the number of rows in $A'$, i.e., the number of active constraints in $Ax \leq b$. Since $\mathcal{K}$ is non-degraded, we have $r+q \geq n$.

Without loss of generality, we transform the problem equivalently by rearranging the order of dimensions of $x$ so that $I'$ is the first $q$ rows of $I_{n \times n}$. Let's see an example after this transformation. As shown in Figure \ref{fig:thm3}, the three solid black vectors are $(-1, 0, 0)$, $(0, -1, 0)$, and $A'_1\geq 0$, where the first and second entries of $A'_1$ are strictly greater than $0$ (otherwise the cone is degraded). Similarly, for the $n$-dimensional non-degraded activated cone, we consider the maximin angle between $\theta$ and all active inequality constraints $A'$ and $-I'$ (which is equivalent to finding the minimax cosine value of angles between $\theta$ and all active inequality constraints). 
\begin{figure}
    \centering
    \includegraphics[width=6cm]{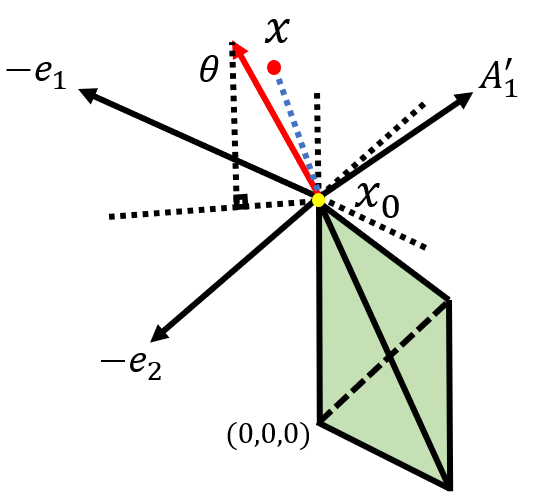}
    \caption{A $3$-dimensional illustration of the proof for Theorem 3. The green polytope is the feasible region; the black solid vectors are the related vectors (i.e. rows of $A'$) and the red line is the utility vector $\theta$. Apparently, $(0, 0, 1)^T\theta\geq 0$, otherwise the intersection point will move down to origin. The worse case appears when the red $\theta$ is on the circumcenter of the triangle formed by the three solid unit vectors.}
    \label{fig:thm3}
\end{figure}
Then, guided by Equation~\ref{eq:minimaxprob} in Lemma 1, the upper bound of distance between $\theta$ and hyper-planes with normal vector $A'_j$, is the solution of the following optimization problem w.r.t. $\theta\in\mathbb{R}^n$ and $A'_j$ (since the cone is non-degraded, we assume that $A'_j$ are linearly independent):
\begin{equation}
\label{eq:minimizeF}
\begin{aligned}
F &= \min_\theta\max_{j\in J=\{1,..,r\},k\in K=\{1,..,q\}} \{A'_j\theta, -e_k^T\theta\}, \\ s.t. ~||\theta ||_2&=1, \theta\in\mathcal{K}, r+q\geq n.
\end{aligned}
\end{equation}
where $e_k$ is the unit vector where the $k$-th entry is $1$ and others are $0$. Note that according to our presumptions, $A'_j\geq 0$, $||A'_j||_2=1$ for $j\in\{1,2,...,r\}$ have already satisfied. The constraints $\theta\in\mathcal{K}$ come from the following consideration: if with given $\theta$ we update $x$ in an iterative manner such as gradient ascending, $x$ would eventually leave current activated cone where $\theta_i<0 ~~\text{for}~~ i\in\{q+1,...,n\}$ (see Figure~\ref{fig:thm3} for an illustration). Since for any dimension index $i \in \{q+1, ..., n\}$, all entries of $A'_j$ and $-e_k$ ($j\in J,k\in K$) are non-negative, we know that $\theta_i\geq 0, ~\forall i \in \{q+1, ..., n\}$. To further relax the objective $F$ and derive the lower bound, we will assume $r+q=n$ in the rest of the paper; if $r\geq n-q$, we can simply ignore all $A'_j$ in the max operator with $j>n-q$.

Given the signs of $\theta$, we next derive a lower bound of $F$ with respect to any given $A'_j$ and set of $k$. Now we relax $F$ by setting all entries of $A'_j$ to $0$, except for the entries with index $\{1,2,...,q\}$ and $s_j$ to get $A''_j$, where $\{s_j\}(j\in \{q+1,...,n\})$ is a permutation of $\{q+1, q+2...,n\}$. We have: 
 \begin{equation}
 \label{eq:thm3}
 \begin{aligned}
     F&\geq \min_{\theta}\max_{j\in \{1,..,r\},k\in \{1,..,q\}}\{{A''}_j^T\theta, -e_k^T\theta\}\text{ (Relaxing within the $\max$ operator; note the sign of $\theta$)}\\
     &\geq \min_{\theta}\max_{j\in \{1,..,r\},k \in \{1,..,q\}}\{\sum_{i=1}^{q}A'_{j,i}\theta_i+A'_{j, s_j}\theta_{s_j}, -\theta_k\}\\
     &\geq \min_{\theta}\max_{s_j\in \{q+1, n\}, k\in \{1,..,q\}}\{\sum_{i=1}^{q}\theta_i+\alpha_0\theta_{s_j}, -\theta_k\} 
\end{aligned}
 \end{equation}
where $||\theta||_2=1$, $\alpha_0$ is the smallest non-zero entry among $A'_{j,s_j}$ (and note that $A\geq 0$ and $\theta_{s_j}\geq 0$, which means $\alpha_0\theta_{s_j}\geq 0$). The inequalities hold for the relaxation within the $\max$ operator. Note that the last line of inequality assumes $\theta_i^*\leq 0$ for $i\in\{1,2,...,q\}$ for the optimal $\theta^*$; otherwise, we can scale further, ignore the index $i$ of the optimal point $\theta^*$ where $\theta^*_i>0$ and proceed with $n-1$ dimensions for the following two facts:
\begin{enumerate}
    \item $\sum_{i=1}^q\theta_i+\alpha_0\theta_{s_j}$ term becomes smaller after ignoring such dimension;
    \item $\theta\in\mathcal{K}$, which means for any $\theta$ and any vector $y$ of the cone we have $\theta^Ty\geq 0$, and the last line still corresponds to a cone. Therefore the optimal value of last line is no less than $0$; the removal of $-\theta_k$ term given $\theta_k>0$ will not affect the optimal value.
\end{enumerate}

Then, according to the property of minimax, $\sum_{i=1}^{q}\theta_i+\alpha_0\theta_{s_j}$ should be equal to $-\theta_k$; Otherwise, if $\sum_{i=1}^{q}\theta_i+\alpha_0\theta_{s_j}$ is larger than $-\theta_k$, we can adjust the value of $\sum_{i=1}^{q}|\theta_i|$ and $\sum_{j=q+1}^{n}|\theta_j|=\sum_{j=q+1}^{n}\theta_j$ by a small amount such that the result is better and $||\theta||_2=1$ is still satisfied, which can be repeated until no optimization is possible, and vice versa. With $\sum_{i=1}^{q}|\theta_i|$ and $\sum_{j=q+1}^{n}\theta_j$ fixed, the entries of $\theta$ are equal to each other within each group by symmetry.

Therefore, the solution $\theta$ should be in the form of ($\frac{c_1}{\sqrt{n}}, \frac{c_1}{\sqrt{n}}, ..., \frac{c_2}{\sqrt{n}}, \frac{c_2}{\sqrt{n}}$) where $c_1<0, c_2>0$, and we have the following set of equations:
\begin{equation}
\label{eq:thm3_2}
\begin{aligned}
qc^2_1+(n-q)c^2_2&=n\\
qc_1+\alpha_0c_2&=-c_1
\end{aligned}
\end{equation}

With equations in Eq. \ref{eq:thm3_2}, we get $c_2=-(1+\alpha_1q)c_1/\alpha_0$, $c^2_1(q+(n-q)\frac{(1+q)^2}{\alpha^2_0})=n$. As $q\leq n$, and as $||\alpha_j||=1$, $\alpha_0$ can be further relaxed to the sine of the smallest angle $p_0$ between axes $x\geq 0$ and the other inequality constraints, and thus we have the lower bound $F\geq\frac{c_1}{\sqrt{n}}=O(\sin p/n^{1.5})$ for the optimization problem listed in Equation \ref{eq:thm3}, which is also the cosine lower bound to the nearest normal vector $A'_j$ and $-e_k$ of the activated cone. This is the denominator of the desired result; the final result follows as we apply such bound to Lemma 1. 
\end{proof}




\subsection{Corollary 4}
\begin{customthmb}{4}
For binary constraints where the entries of $A$ (before normalization) are either $0$ or $1$, the utility gain $R$ of violating $x\geq 0$ constraint is bounded by $O(n^{1.5}E\sum_{i}\max(w_i, 0))$, where $w$ is the same as Theorem 3.
\end{customthmb}

\begin{proof}
The proof of Corollary 4 is almost the same with Theorem 3, except that if the constraint $A$ is cardinal, then each entry of $A$ is either $0$ or $1$. Thus, 
as all the non-zero entry are the same, we shall replace the third line in Equation \ref{eq:thm3} with
\begin{equation}
\min_{\theta}\max_j\{\frac{\sum_{i=1}^{q}\theta_i}{\sqrt{h_j}}+\frac{\theta_{s_j}}{\sqrt{h_j}},-\theta_k\}
\end{equation}
where $h_j$ is the number of non-zero entry of $A'_j$, each entry being $\frac{1}{\sqrt{h_j}}$ as normalized $||A'_j||_2=1$.
By substituting $h_j$ with $n$\footnote{Note in Theorem 3 we have already mentioned the non-positivity of $\theta_i$ where $i\in\{1,2,...,q\}$, thus bigger $h_j$ brings smaller objective.}, this optimization problem can be further relaxed to 
\begin{equation}
\min_{\theta}\max_j\{\frac{\sum_{i=1}^{q}\theta_i}{\sqrt{n}}+\frac{\theta_{s_j}}{\sqrt{n}},-\theta_k\}    
\end{equation}
and we can change Equation \ref{eq:thm3_2} to
\begin{equation}
\begin{aligned}
qc^2_1+(n-q)c^2_2&=n\\
qc_1+c_2&=-\sqrt{n}c_1
\end{aligned}
\end{equation}
Therefore, similar to the proof of Theorem 3, we get $\frac{c_1}{\sqrt{n}}=\sqrt{\frac{1}{q+(n-q)(\sqrt{n}+q)^2}}$, which leads to the bound $O(n^{1.5}E\sum_i\max(w_i, 0))$, removing the $\sin p$ in the denominator. 
\end{proof}

\subsection{Theorem 5}
\begin{customthm}{5}
If there is only one equality constraint $B^T x=c$ (e.g. $\sum_{i}x_i=1$) and special inequality constraints $x\geq 0$, $Ix \leq b$, 
then the utility gain $R$ from violating constraints is bounded by $O(\frac{n^{1.5}E\sum_{i}\max(w_i, 0)}{\sin{p}})$, where $p$ is the same with theorem 3, $w$ is the union of active $B^Tx-c$ and $-x$.\footnote{See Theorem 3 for the meaning of union.}
\end{customthm}

\begin{proof}
We first prove the situation where the inequality constraint is only $x\geq 0$. If we only have one equality constraint, then it can be seen as two separate inequality constraints, which are $B^Tx\leq c$ and $-B^Tx\leq -c$; for any non-degraded activated cone, as $B^Tx<c$ and $B^Tx>c$ cannot hold simutaneously, there is at most one normal vector of constraint in the activated cone. 

If $B^Tx\leq c$ is active (\ie{} this constraint is violated, now we have $B^Tx>c$), then with $B\geq 0$, the case is exactly the same with Theorem 3. Otherwise, if $-B^Tx\leq -c$ (\ie{} $B^Tx\geq c$) is active (\ie{} $B^Tx\geq c$ is violated, now $B^Tx<c$), then the normal vector of the current active constraint  $-B^Tx\leq -c$ is $-B\leq 0$. Note that all other normal vectors of active constraints are $-e_k\leq 0$ (which is the same situation as that in Equation \ref{eq:minimizeF} in the proof of Theorem 3, except that $A'_j$ is substituted with $-B^T\leq 0$); this indicates that the condition of Lemma 7 is satisfied under such scenario. Therefore, we can apply the proof of Theorem 2 in this case and get a better bound than that in the first scenario. 


Then, the full theorem is proved as follows: consider any activated cone $\mathcal{K}$. For any $i\in\{1,2,...,n\}$, $x_i\leq b_i$ and $x_i\geq 0$ cannot be active simultaneously. If the former is activated, we replace $x_i$ with $b_i-x_i$; if the latter (or neither) is active, we remain $x_i$ as normal. Then, for this activated cone, the scenario is exactly the same with the situation where the inequality constraints are only $x\geq 0$. Similarly, if the equality constraint is cardinal, with the same proof of Corollary 4, we can remove the $\sin p$ in the denominator.

\end{proof}

\subsection{Theorem 6}
\begin{customthm}{6}
Given constraints $Ax\leq b$, $x\geq 0$, and $Bx=c$, where $A, B, b, c\geq 0$, the utility gain $R$ obtained from violating constraints is bounded by $O(\sqrt{n}\lambda_{max}\sum_{i}\max(w_i, 0))$, where $\lambda_{max}$ is the upper bound for eigenvalues of $P^TP$ ($\mathcal{P}: x\rightarrow Px$ is an orthogonal transformation for an $n$-sized subset of normalized row vectors in $A, B$ and $-I$), and $w$ is the union of all active constraints from $Ax\leq b$, $x\geq 0$, $Bx\leq c$ and $-Bx\leq -c$. 
\end{customthm}

\begin{proof}
For any non-degraded activated cone $\mathcal{K}$, we have the following optimization problem (it is worth noting that reformulating our original problem to this optimization one is inspired by Equation~\ref{eq:minimaxprob} in Lemma 1):
\begin{equation}
\begin{aligned}
F=\min_\theta\max_{j\in J,k\in K}\{\max_{i\in D,n_i\in\mathbb{R}^{n}}\ & \{n_i^T\theta\}, A'_j\theta, -e_k^T\theta\} \\
s.t. ||\theta ||_2=1, \theta\in\mathcal{K}\\
d+r+q\geq n, ||n_i||_2=1 &~ \forall i \in D=\{1,2,...,d\},<B_1, ..., B_d>=<n_1,...,n_d>
\end{aligned}
\end{equation}where $A'_j$ is the normal vector of the $j$-th inequality constraints in $Ax\leq b$, $n_i$ is the $i$-th normal vector for the subspace of $Bx=c$ (and thus represents the same subspace that is represented by the rows of $B$). According to the assumption, we scale $Ax\leq b$ and $Bx=c$ so that $||n_i||=||A'_j||=1$ for any $i,j$. Note that different from the optimization problem stated in Theorem 3, in this problem $q$ can be $0$, which means that there can be no entry of $x\geq 0$ active. To get the lower bound of $F$, we may relax the function by keeping the normal vectors $\{n_i\}$ in their original direction $B$ to 
$F\geq G=\min_\theta\max_{i\in D,j\in J,k\in K}\{B'_i\theta, A'_j\theta, -e_k^T\theta\}$
where the $i$-th row of $B$ satisfies $B'_i=B_i$ or $B'_i=-B_i$ (the sign of $B'_i$ is decided by the direction of $\theta$), and thus $B'_i\geq 0$ or $B'_i\leq 0$ (\ie{} any pair of elements in $B'_i$ will not have opposite signs) for any $i$. Moreover, $||B'_i||_2=1$. We denote the positive $B'_i$ as $B^p_{i_1}$, $i_1\in D_1$ and the negative $B'_i$ as $B^n_{i_2}$, $i_2\in D_2$. Then similar to Theorem 3,
\begin{equation}
G=\min_\theta\max_{i_1\in D_1, i_2\in D_2, j\in J, k\in K}\{B^n_{i_1}\theta, B^p_{i_2}\theta, A'_j\theta, -e_k^T\theta\}
\end{equation}

where $D_1\bigcap D_2=\emptyset$, $D_1\bigcup D_2=D$. we aggregate $A'$, $B^p$ and $-e_k$, $B^n$ into $\alpha$, and get
\begin{equation}
G=\min_\theta\max_{j\in S}\{{\alpha}_j^T\theta\}
\end{equation}
where $|S|\geq n$, $\alpha_j\geq 0$ or $\leq 0$ for any $j\in S$. 
Therefore, we can further relax $G$ by selecting $n$ linearly independent vectors that are the closest to the minimum product\footnote{The existence of such set of vectors comes from the non-degradation of the activated cone.} and ignore the others for the $\max$ operator, and for the rest of the proof we may assume that $|S|=n$.

We will next consider a linear transformation $\mathcal{P}:x\rightarrow Px$ from $\mathbb{R}^n$ to $\mathbb{R}^n$ that transforms $\{\alpha_j\}$ into an orthogonal normal basis. Then we have
\begin{equation}
\begin{aligned}
    \lambda_{max}G&=\min_{\theta}\max_j\lambda_{max}{\alpha}_{j}^{T}\theta\\
                  &\geq \min_{\theta}\max_i{\alpha}_jP^TP\theta\\
                  &=\frac{1}{\sqrt{n}}\sum_{k=1}^n{\alpha}_jP^TP{\alpha}_k\\
                  &=\frac{1}{\sqrt{n}}{\alpha}_jP^TP{\alpha}_j\\
                  &=\frac{1}{\sqrt{n}}
\end{aligned}
\end{equation}
where $\lambda_{max}$ is the maximum eigenvalue of $P^TP$. The row vectors of $A''$ are $\{\alpha'_i\}$. Therefore, our problem $G$ has a lower bound of $\frac{1}{\sqrt{n}\lambda_{max}}$ where $\lambda_{max}$ is the maximum eigenvalue for $P^TP$; $A''$ consists of $n$ of the vectors in $A'$ and has the largest maximum eigenvalue for $P^TP$. 
\end{proof}

Though we do not derive the bound of $\lambda_{max}$ with respect to the number of dimension $n$ and the angle $p_0$ between hyper-planes in the activation cone, we empirically evaluate the behavior of $\lambda_{max}$ with respect to $n$ on randomly generated data. We first generate a normal vector $n\in R^n$, with each entry generated independently at random from the distribution $D$; then we generate $n$ vectors $\{\alpha_1, ..., \alpha_n\}$ in $R^n$ with their entries either all not smaller than $0$ or all not greater than $0$; the orthant is chosen with probability $0.5$. Each entry is independently generated from $D$ and shifted by a constant to enforce the signs. We ensure that $\forall i$, $\alpha_i^Tn\geq 0$ by discarding the vectors that do not satisfy such constraint. $D$ can be uniform distribution $U(0, 1)$, the distribution for absolute value of normal distribution $|N(0, 1)|$, or beta distribution $B(2, 2)$. The $\alpha_i$ is then normalized, and we record the largest eigenvalue of $R^TR$, where $A=\begin{bmatrix}\alpha_1^T\\\alpha_2^T\\...\\\alpha_n^T\end{bmatrix}$, $A=RQ$ is the RQ decomposition of $A$. We repeat $1000$ times for each $n$ and record the mean and maximum eigenvalue for $R^TR$. Below is the result of our evaluation; it shows that the maximum eigenvalue $\lambda_{max}$ is approximately $O(n^2)$. See Figure \ref{fig:eval_uniform},\ref{fig:eval_gaussian}, and \ref{fig:eval_beta} for illustration.
\begin{figure}
    \centering
    \includegraphics[width=0.6\linewidth]{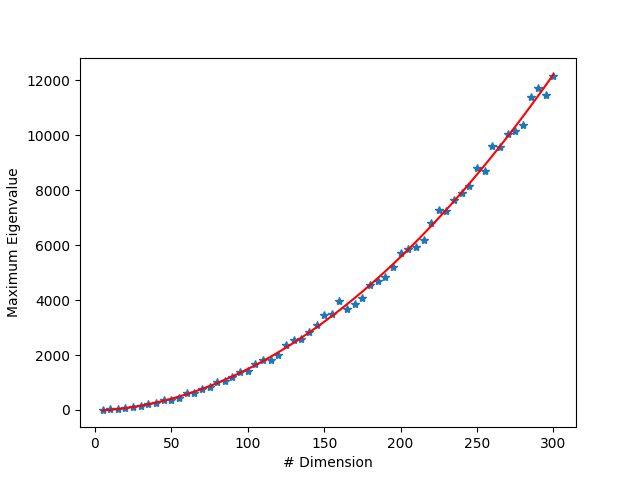}
    \caption{Empirical estimation of the max maximum eigenvalue over $1000$ trials with $D$ being uniform. The $x$ axis is $n$, and the $y$ axis is $\lambda_{max}$. The $2$-degree polynomial fitting curve is $0.1282x^2+2.392x-32.89$.}
    \label{fig:eval_uniform}
\end{figure}
\begin{figure}
    \centering
    \includegraphics[width=0.6\linewidth]{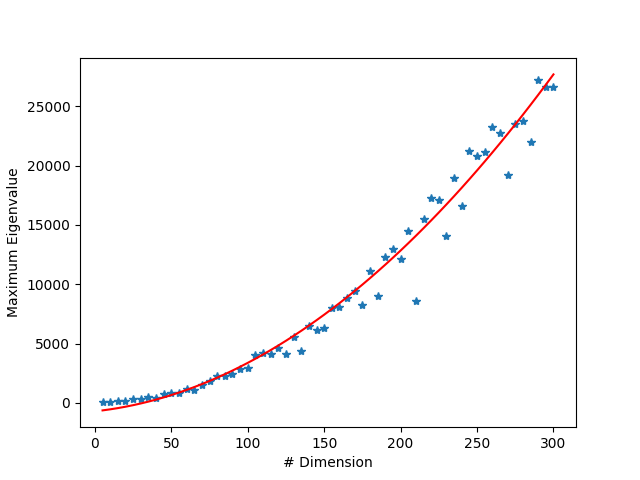}
    \caption{Empirical estimation of the max maximum eigenvalue over $1000$ trials with $D$ being Gaussian. The $x$ axis is $n$, and the $y$ axis is $\lambda_{max}$. The $2$-degree polynomial fitting curve is $0.2693x^2+13.95x-716.8$.}
    \label{fig:eval_gaussian}
\end{figure}
\begin{figure}
    \centering
    \includegraphics[width=0.6\linewidth]{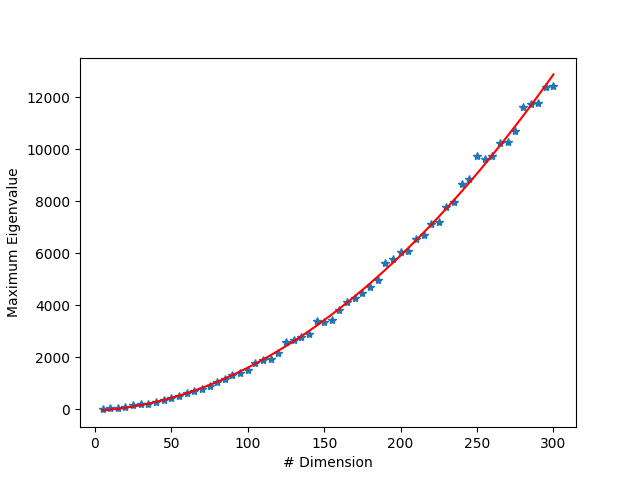}
    \caption{The max maximum eigenvalue over $1000$ trials with $D$ being $Beta(2,2)$. The $x$ axis is $n$, and the $y$ axis is $\lambda_{max}$. The $2$-degree polynomial fitting curve is $0.1325x^2+3.403x-69.95$.}
    \label{fig:eval_beta}
\end{figure}
\section{Choices of Surrogate $\max$ Functions}

The final goal of surrogate $max$ function is to relax the objective with the term $\alpha^T\max(z=Cx-d, 0)$, making it differentiable over $\mathbb{R}^n$. Such objective is a piecewise function with respect to $z$ with two segments: one is constant $0$ with derivative $0$, the other is linear with a constant derivative.

At first glance, sigmoidal surrogates are seemingly the most straightforward candidate for modeling the derivative of such a piecewise function. For a sigmoidal approximation of soft constraints, $S(z)$ should satisfy the following four conditions, among which the first two are compulsory, and the third and fourth can be slightly altered (e.g. by setting $\epsilon_2=0$ and remove the fourth condition).

\begin{enumerate}
\item When $z\rightarrow\infty$, $S'(z)\rightarrow M^-=(1+\epsilon_1)^-$. $\epsilon_1>0$ should be a small amount, and it serves as a perturbation since otherwise $g'(z)$ would be always greater than $0$. However, $\epsilon_1$ should not be too small to let the optimal point be too far away from the original constraint (for the gradient will reach $\alpha$ too late).
\item $S$ must be differentiable, and must have a closed-form inverse function.
\item When $z\rightarrow-\infty$, $S'(z)\rightarrow\epsilon_2^+<0$, where $\epsilon_2$ is constant and very close to $0$. The $S'(z)$ should cause as small impact as possible when $z<0$ (when there is no waste), and meanwhile it should (very slightly) encourage $z$ to grow to the limit instead of discouraging.
\item When $z=0$, $S'(z)=0$, which means the penalty function $S$ has no influence on the clipping border. This condition is important for linear programming; it can be removed for quadratic programming.
\end{enumerate}

To satisfy the conditions above, we need a closed-form sigmoidal function with closed-form inverse function to be $S'(z)$, the \textit{derivative} of the surrogate. Such function can be Sigmoid function ($S'(z)=\frac{1}{1+e^{-z}}$), Tanh/arctan function, or fractional function ($S'(z)=0.5(\frac{z}{\sqrt{z^2+1}}+1)$, or equivalently $S(z)=0.5\sqrt{z^2+1}+0.5z$).
With the form of $S(z)$ confirmed, our function should look like\footnote{To satisfy the requirements above, the function needs scaling and translation; we omit them for the simplicity of formulae by simply set $\alpha=1$, $\epsilon_1=\epsilon_2=0$.}
$L(x)=g(x,\theta)-\alpha^TS(z)$, with the derivative $\frac{\partial L}{\partial x}=\frac{\partial g}{\partial x}-\frac{\partial z}{\partial x}S'(z)\alpha$.

Take $g(x)=\theta^Tx$, $S(z)=0.5\sqrt{1+z^2}+0.5z$ as an example. The optimal point satisfies
\begin{equation}
\theta= C^T\text{diag}(\frac{0.5z}{(1+z^2)^{0.5}}+0.5)\alpha\\
\end{equation}
\par Let $y_i=(\frac{0.5z}{(1+z_i^2)^{0.5}}+0.5)\alpha_i$, $w_i=(\frac{0.5x}{(1+x_i^2)^{0.5}}+0.5)$, then $\theta=C^Ty$. $S(z)$ can be substituted by any function with sigmoidal function (e.g. $arctan, tanh$, and $sigmoid$) as its derivative. However, $C$ is impossible to be invertible, for $C$ includes an identity matrix representing $x\geq 0$ and must have strictly more rows than columns if $C$ has any other constraint. Therefore, we cannot solve $z$ by first solving $y$ exactly in a linear system; we need to solve the equation of fractional function, or estimate $y$ with pseudo-inverse. Hence, the closed-form solution with sigmoidal surrogate is at least non-attractive.
 \begin{figure}
      \centering
      \includegraphics[scale=0.5]{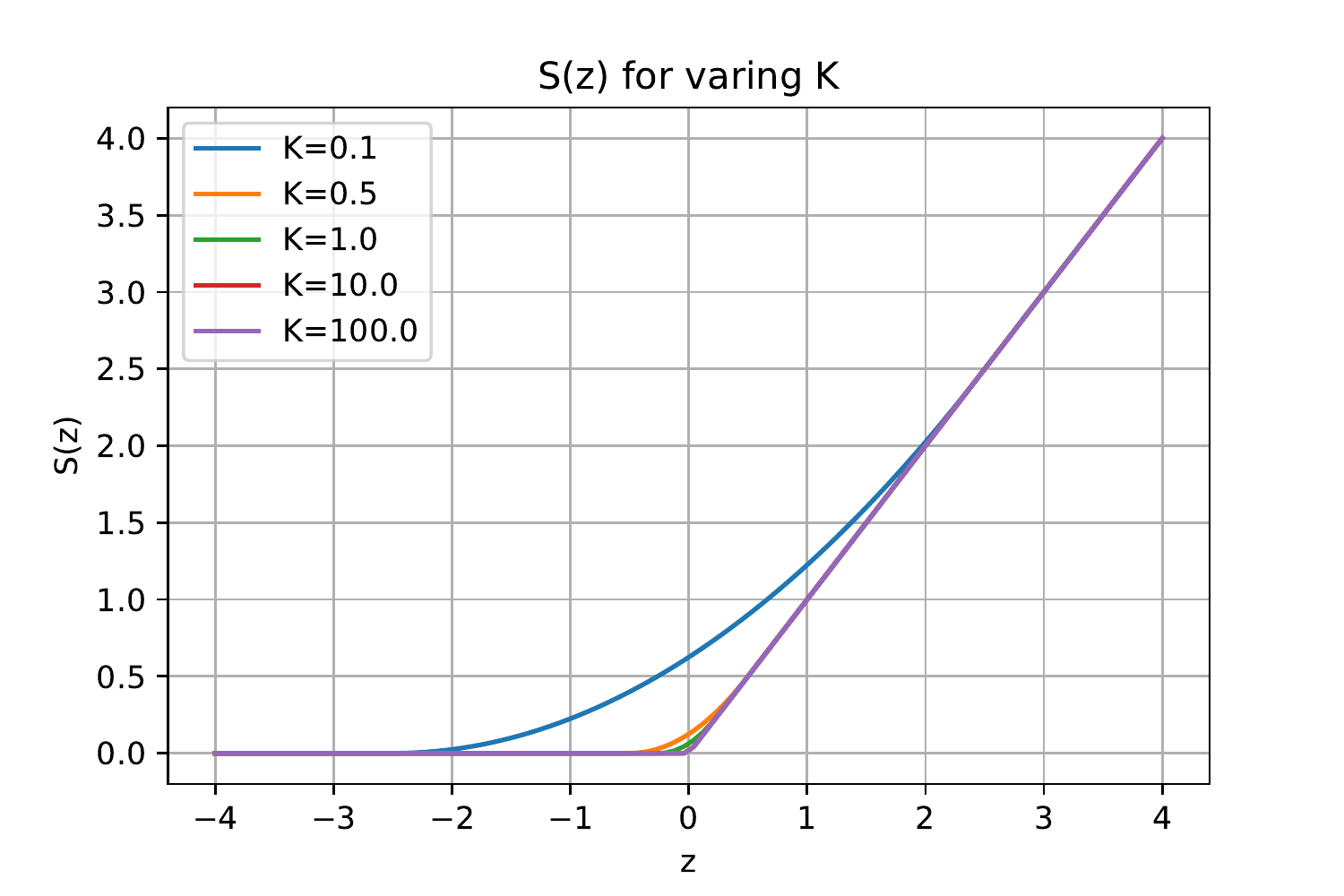}
      \caption{S(z), the proposed surrogate function of $max(z, 0)$}
      \label{fig:f_sz}
 \end{figure}

To derive a closed-form solution $x$ for the equation $0=\frac{\partial L}{\partial x}=\frac{\partial g}{\partial x}-\frac{\partial z}{\partial x}S'(z)\alpha$, $S'(z)$ should be in a simple form (e.g. polynomial form with degrees lower or equal to $4$ with respect to $x$). Unfortunately, no basic function except the sigmoid function can satisfy the conditions listed above; the degree limit makes higher-order Taylor expansion infeasible. Thus, our only choice left is to find a differentiable piecewise function with linear parts on both sides; we shall first solve the optimal point of the surrogate numerically, and extend the segment where the optimal point locates to $R^n$; the gradient $\frac{\partial x}{\partial \theta}$ is correct due to the uniqueness of optimality (which requires a convex surrogate).

Therefore, we choose the function $S(z)$ to be 
\begin{equation}
\label{eq:sz}
S(z)=\begin{cases}
          0 \quad &\text{if} \, z<-\frac{1}{4K} \\
          K(z+\frac{1}{4K})^2 \quad &\text{if} \, -\frac{1}{4K}\leq z\leq\frac{1}{4K} \\
          z \quad &\text{if} \, z\geq\frac{1}{4K} \\
     \end{cases}
\end{equation}
as illustrated in Figure \ref{fig:f_sz}, where $K$ is a constant. The surrogate is chosen for its simplicity. Not only does such choice satisfy our conditions with minimum matrix computation, but moreover, $S'(z)$ is linear, which means the equation can be solved in a linear system. The limitation of such method is that it introduces a hyper-parameter $K$ to tune: when $K$ is large, if the predicted optimal point is far from ground-truth, the gradient $\frac{\partial f(x, \theta_{\text{real}})}{\partial x}$ will be too large, making the training process instable. On the other hand, if $K$ is small, the optimal point may deviate from the feasible region too far, as the hard constraints are not fully enforced within the region $z\in[-1/4K,4K]$. Fortunately, our empirical evaluation shows that the effect of $K$ is smooth, and can be tuned by a grid search.

\section{Learning and Inference Algorithm}
\label{app:algorithm}

In general, the learning and inference procedures based on our method have the same prediction+optimization workflow as the computational graph we gave in Fig.\ref{fig:computational_graph}. In this section, we propose the detailed algorithm with emphasis on the specific steps in our method.

Alg.~\ref{alg:surrogate-learning} is the learning algorithm of stochastic gradient descent in mini-batch. For each sample $(\xi_i, \theta_i)$, it predicts the context variables as $\hat{\theta_i}$ (Statement 3) and then solve the program $f$ to get the optimal decision variable as $\hat{x_i^*}$ (Statement 4). Note that here $f$ is solved by solvers such as Gurobi and CPLEX that are capable to get the deterministic optimal solutions. With $\hat{x_i^*}$, the piece of $S(z)$ is determined, and thus the surrogate objective $\bar{f}$. From Statement 5 to 7, we decide the form of surrogate $\bar{f}$ for $\hat{x_i^*}$ by first calculating $\hat{z_i^*}$, then use $\hat{z_i^*}$ to decide $M_i$ and $U_i$ in the surrogate. $[\cdot]$ is an element-wise indicator function. Specially, if $C$ or $d$ is the predicted parameter, $\hat{z_i^*}$ should be calculated by the predicted value of $C$ or $d$; moreover, in statement 8 we should calculate $\frac{\partial r}{\partial x}$ on the estimated segment, but with ground-truth value of $C$ or $d$. In Statement 8, we compute the gradient by back-propagation. Specifically, we directly compute $\frac{\partial x}{\partial \theta}$ and $\frac{\partial r}{\partial x}$ by their analytical forms, while $\frac{\partial \theta}{\partial \psi}$ by auto-grad mechanism of end-to-end learning software such as PyTorch \cite{pytorch}, which we use throughout our experiments. Finally, in Statement 9, we update the parameters of $\psi$ with accumulated gradients in this mini-batch.

\begin{algorithm}[h]
\caption{SGD Learning with the surrogate $\bar{f}$}
\label{alg:surrogate-learning}
\SetAlgoVlined
\SetKwInOut{Input}{Input}
\SetKwInOut{Output}{Output}
\SetKw{KwBy}{by}
\Input{dataset $D = \{(\xi_i, \theta_i)\}_{i=1}^N$}
\Input{optimization settings $\{A, b, C, d, \alpha \}$}
\Input{derived hyper parameters $\{K\}$}
\Input{learning rate $a$ and batch size $s$}
\Output{the predictive model $\Phi_{\psi}$ with parameters $\psi$}
\Begin{
\sf{
  \nl {Sample mini-batch $D_k \sim D$}; \\
  \nl \ForEach{$(\xi_i, \theta_i) \in D_k$} {
	\tcp{ Estimate $\theta_i$ with the model $\Phi_{\psi}$.}
	\nl {$\hat{\theta_i} \gets \Phi_{\psi}(\xi_i)$};\\
	\tcp{ Solve the original $f$ with $\hat \theta_i$.}
	\nl {$\hat{x_i^*} \gets {\textrm{arg}}\max_{x}{f(x, \hat{\theta_i})}$};\\	
	\tcp{decide surrogate $\bar{f}$ for $\hat{x_i^*}$ by calculating $M_i$ and $U_i$}
	\nl {$\hat{z_i^*} \gets C\hat{x_i^*}-d$};\\
	\nl {$M_i\gets diag([-1/4K\leq \hat{z_i^*}\leq 1/4K])$};\\
	\nl {$U_i\gets diag([\hat{z_i^*}\geq 1/4K])$};\\
	\tcp{ Compute $\frac{\partial \hat{r_i^*}}{\partial \psi}$, where $\bar{r_i^*}=\bar{f}(\hat{x_i^*}, \theta_i)$}
	\nl {$\frac{\partial \bar{r_i^*}}{\partial \psi} \gets \frac{\partial \hat{\theta_i}}{\partial \psi} \times \frac{\partial \hat{x_i^*}}{\partial \hat{\theta_i}} \times \frac{\partial \bar{r_i^*}}{\partial \hat{x_i^*}} \mid_{\theta_i, ~\hat{\theta_i},~\hat{x_i^*}}$}
  }
  \tcp{ Update $\psi$ with ascent gradients}
  \nl {$\psi \gets \psi + a \times \frac{1}{s} \times \sum\limits_{i=1}^s \frac{\partial \bar{r_i^*}}{\partial \psi}$}
}
}
\end{algorithm}

Alg.~\ref{alg:surrogate-inference} is the program with context variable prediction for inference; it is the same with the traditional predict-then-optimize paradigm. Note that the surrogate is no longer needed in this phase.

\begin{algorithm}[ht]
\caption{optimization with predictive variables}
\label{alg:surrogate-inference}
\SetAlgoVlined
\SetKwInOut{Input}{Input}
\SetKwInOut{Output}{Output}
\Input{dataset $D = \{\xi_i\}_{i=1}^N$}
\Input{optimization settings $\{A, b, C, d, \alpha \}$}
\Input{derived hyper parameters $\{K\}$}
\Input{learned predictive model $\Phi_{\psi}$}
\Output{Solutions \{$(\hat{x_i^*}, \hat{r_i^*})$\}}
\Begin{
\sf{
  \nl \ForEach{$\xi_i \in D$} {
	\tcp{ Estimate $\theta_i$ with the model $\Phi_{\psi}$.}
	\nl {$\hat{\theta_i} \gets \Phi_{\psi}(\xi_i)$};\\
	\tcp{ Solve $f$ with the estimated $\hat \theta_i$.}
	\nl {$\hat{x_i^*} \gets {\textrm{arg}}\max_{x}{f(x, \hat{\theta_i})}$};\\
	\tcp{ Evaluate $f$ with the real $\theta_i$.}
	\nl {$\hat{r}_i^* \gets f(\hat{x_i^*}, \theta_i)$};
  }
}
}
\end{algorithm}

\section{Derivation of Gradients}
In this section, we will show the derivation process of the gradient with respect to the three problems stated in the main paper. 
\subsection{Linear Programming with Soft Constraints}\label{sub:app1}
The problem formulation is:
\begin{equation}\max_x \theta^Tx-\alpha^T\max(Cx-d, 0)\text{, s.t. } Ax\leq b\end{equation}
where $\alpha\in\mathbb{R}^n\geq 0, A\in\mathbb{R}^{m_1\times n}\geq 0, b\in\mathbb{R}^{m_1}\geq 0$, and $\theta\in\mathbb{R}^n$ is to be predicted. In this formulation and the respective experiment (synthetic linear programming), we assume that there is no equality constraint.

Let $C'=\begin{bmatrix}C\\A\\-I\end{bmatrix},d'=\begin{bmatrix}d\\b\\0\end{bmatrix},\gamma=\begin{bmatrix}\alpha\\O(\frac{n^{1.5}E}{\sin p})\\O(\frac{n^{1.5}E}{\sin{p}})\end{bmatrix}$, where $E$ is the upper bound of $||\theta||_2$, $p$ is the minimum angle between hyper-planes of $Ax\leq b$ and the axes. Then, we can write the surrogate as
\begin{align}
\begin{split}
\gamma^TS(z)&=\gamma^T(0.5M(z+\frac{1}{4K})^2+U z)\\
\end{split}
\end{align}
Calculating the derivative of $\theta^Tx-\gamma^TS(z)$ at the optimal point, we get
\begin{align}
\begin{split}
\theta&=C'^T(M(\text{diag}(z)+\frac{1}{4K}I)+U)\gamma 
         =C'^TM \text{diag}(\gamma)(C'x-d')+C'^T(\frac{1}{4K}M+U)\gamma
\end{split}
\end{align}
This equation gives us the analytical solution of $x$: 
\begin{equation}
x=(C'^TM \text{diag}(\gamma)C')^{-1}(\theta+C'^TM \text{diag}(\gamma)d'-C'^T(\frac{1}{4K}M+U)\gamma)   
\end{equation}
Based on such derivative solution of $x$, we differentiate with respect to $\theta$ on both sides: 
\begin{align}
\begin{split}
\frac{\partial x}{\partial \theta}&=(C'^TM \text{diag}(\gamma)C')^{-1}
\end{split}
\end{align}
On the other hand, given real parameter $\theta_{real}$, we calculate the derivative of $f(x, \theta_{\text{real}})$:
\begin{align}
\begin{split}
\frac{\partial f(x, \theta_{\text{real}})}{\partial x}&=\theta_{\text{real}}-C'^TM\text{diag}(\gamma)(C'x-d')-C'^T(\frac{1}{4K}M+U)\gamma
\end{split}
\end{align}

\subsection{Portfolio Selection with Soft Constraints}\label{sub:app2}

We consider minimum variance portfolio (\cite{portfolio}) which maximizes the return while minimizes risks of variance. The problem formulation is:
\begin{align}
\begin{split}
&\max_x \theta^Tx-x^TQx-\alpha^T\max(Cx-d, 0)\\
\text{s.t.} & ~ x^T\mathbf{1} = 1, ~ x \geq 0, ~ Q \geq 0, ~\alpha\geq 0\\ 
\end{split}
\end{align}
where $x\in\mathbb{R}^{n}$ is the decision variable vector -- equity weights,  $\theta\in\mathbb{R}^{n}$ is the equity returns, the semi-definite positive $Q\in\mathbb{R}^{n\times n}$ is the covariance matrix of returns $\theta\in\mathbb{R}^{n}$. Equivalently, we rewrite the constraints of $x$ to fit our surrogate framework, as $Ax \leq b$ where $A=\begin{bmatrix}-\mathbf{1}_{1\times n}\\\mathbf{1}_{1\times n}\\-I\end{bmatrix},b=\begin{bmatrix}-1\\1\\0\end{bmatrix}$. Let $C'=\begin{bmatrix}C\\A\end{bmatrix},d'=\begin{bmatrix}d\\b\end{bmatrix},\gamma=\begin{bmatrix}\alpha\\O(n^{1.5}E)\textbf{1}\end{bmatrix}$, and we have surrogate $\gamma^TS(z)=\gamma^T(0.5M(Z+\frac{1}{4K})^2+Uz)$. Then, we may derive the optimal solution $x$ and the gradient of $f(x,\theta_{\text{real}},Q_{\text{real}})$ with real data with respect to $x$ as 
\begin{equation}
\begin{aligned}
x&=(2Q+C'^TM \text{diag}(\gamma)C')^{-1}(\theta+C'^TM \text{diag}(\gamma)d'-C'^T(U+\frac{M}{4K})\gamma)\\
\frac{\partial f(x,\theta_{\text{real}},Q_{\text{real}})}{\partial x}&=\theta_{\text{real}}-2Q_{\text{real}}x-C'^TM \text{diag}(\gamma)(C'x-d')-C'^T(\frac{1}{4K}M+U)\gamma
\end{aligned}
\end{equation}
Differentiating on both sides of the analytical solution of $x$, we get:
\begin{equation}
\begin{aligned}
\frac{\partial x}{\partial\theta}&=(2Q+C'^TM \text{diag}(\gamma) C')^{-1}\\
\frac{\partial x}{\partial Q}&=(\theta+C'^TM \text{diag}(\gamma)d'-C'^T(U+\frac{M}{4K})\gamma)\frac{\partial (2Q+C'^TM \text{diag}(\gamma)C')^{-1}}{\partial Q}
\end{aligned}
\end{equation}
To derive a simplified norm of $\frac{\partial x}{\partial Q}$, let $R=(2Q+C'^TM \text{diag}(\gamma)C')^{-1}$, $S=C'^TM \text{diag}(\gamma)C'$, $\beta=\theta+C'^TM \text{diag}(\gamma)d'-C'^T(U+\frac{M}{4K})\gamma$. With such notations, we can simplify the previous results to
\begin{equation}
x=(2Q+S)^{-1}\beta=R\beta,\ 
\frac{\partial x}{\partial \theta}= R^T=R,\ 
\frac{\partial f(x,\theta_{\text{real}},Q_{\text{real}})}{\partial x}=\beta_{\text{real}}-R_{\text{real}}^{-1}x
\end{equation}
and the derivative $\frac{\partial x_i}{\partial Q_{j,k}}$ and $\frac{\partial f(x,\theta_{\text{real}},Q_{\text{real}})}{\partial Q_{j,k}}$ can be derived as follows:
\begin{equation}
\begin{aligned}
\frac{\partial x_i}{\partial Q_{j,k}}&=\sum_{x}\sum_{y}\frac{\partial R_{x,y}}{\partial Q_{j,k}}\frac{\partial x_i}{\partial R_{x,y}}\\
&=\sum_{y}\frac{\partial R_{i,y}}{\partial Q_{j,k}}\beta_{y}\\
&=\sum_{y}\beta_{y}\sum_{p}\sum_{q}\frac{\partial (2Q+S)_{p,q}}{Q_{j,k}}\frac{\partial R_{i,y}}{\partial (2Q+S)_{p,q}}\\
&=\sum_{y}2\beta_{y}\frac{\partial R_{i,y}}{\partial (2Q+S)_{j,k}}\\
&=-\sum_{y}2\beta_{y}R_{i,j}R_{k,y}
\end{aligned}
\end{equation}
\begin{equation}
\begin{aligned}
\frac{\partial f(x,\theta_{\text{real}},Q_{\text{real}})}{\partial Q_{j,k}}&=-\sum_{i}(\sum_{y}2\beta_yR_{i,j}R_{k,y})(\beta_{\text{real}, i}-(R_{\text{real}}^{-1}x)_i)\\
&=2\sum_{i}((R_{\text{real}}^{-1}x)_i-\beta_{\text{real},i})R_{i,j}\sum_{y}\beta_{y}R_{k,y}
\end{aligned}
\end{equation}
Let $p_j=\sum_{i}{R^T_{j,i}((R^{-1}_{\text{real}}x)_{i}-\beta_{\text{real}, i})}, t_k=\sum_{y}\beta_yR_{k,y}$ ($p=R^{T}(R_{\text{real}}^{-1}x-\beta_{\text{real}})$, $t=x$), and finally
\begin{equation}
\frac{\partial f(x,\theta_{\text{real}},Q_{\text{real}})}{\partial Q_{j,k}}=2p_jx_k,\ 
\frac{\partial f(x,\theta_{\text{real}},Q_{\text{real}})}{\partial Q}=2px^T
\end{equation}

\subsection{Resource Provisioning}

The problem formulation of resource provisioning is 
\begin{equation}
    \min_{x} \alpha_1^T\max (Cx-d, 0)+\alpha_2^T\max (d-Cx, 0),\ s.t. ~ x^T\mathbf{1}=1, x\geq 0
\end{equation}

Let $C'\begin{bmatrix}C\\-C\\-I\\\mathbf{1_{1\times n}}\\\mathbf{-1_{1\times n}}\end{bmatrix}$, $d'=\begin{bmatrix}d\\-d\\0\\1\\-1\end{bmatrix}$, $\gamma=\begin{bmatrix}\alpha_1\\\alpha_2\\O(n^{1.5}E)\mathbf{1}\\O(n^{1.5}E)\\O(n^{1.5}E)\end{bmatrix}$, $P=(C'^TM \text{diag}(\gamma)C')^{-1}$, $ \beta=(C'^TM \text{diag}(\gamma)d'-C'^T(\frac{M}{4K}+U)\gamma)$, $\eta=M \text{diag}(\gamma)d'-(\frac{M}{4K}+U)\gamma$. Then for the derivative $ \frac{\partial f(x,C'_{\text{real}})}{\partial x}$ and the analytical solution of $x$, we have
\begin{equation}
\begin{aligned}
    \frac{\partial f(x,C'_{\text{real}})}{\partial x}&=-{C'}_{\text{real}}^TM_{\text{real}} \text{diag}(\gamma)(C'_{\text{real}}x-d')-{C'}_{\text{real}}^T(\frac{1}{4K}M_{\text{real}}+U_{\text{real}})\gamma\\
    x&=(C'^TM \text{diag}(\gamma)C')^{-1}(C'^TM \text{diag}(\gamma)d'-C'^T(\frac{M}{4K}+U)\gamma)=P\beta\\
    \end{aligned}
    \end{equation}
    According to the analytical solution of optimal point $x$, the derivative $\frac{\partial x_{i}}{\partial C'_{k,l}}$ is
    \begin{equation}
    \frac{\partial x_{i}}{\partial C'_{k,l}}=\sum_{j}\frac{\partial P_{i,j}}{\partial C'_{k,l}}\beta_j+\sum_{j}P_{i,j}\frac{\partial\beta_j}{\partial C'_{k,l}}
    \label{RP:derivative}
    \end{equation}
    For the first term of the derivative above, we have:
    \begin{equation}
    \begin{aligned}
    \frac{\partial P_{i,j}}{\partial C'_{k,l}}&=\frac{\partial (C'^TM \text{diag}(\gamma)C')_{i,j}^{-1}}{C'_{k,l}}\\
    &=\sum_{p}\sum_{q}\frac{\partial (C'^TM \text{diag}(\gamma)C')_{p, q}}{\partial C'_{k, l}}\frac{P_{i, j}}{P^{-1}_{p, q}}\\
    &=\sum_{p}\sum_{q}\frac{\partial \sum_{x}\sum_{y}C'_{x,p}M \text{diag}(\gamma)_{x,y}C'_{y,q}}{\partial C'_{k,l}}(-P_{i,p}P_{q,j})\\
    &=\sum_{q}\sum_{y}[p==l]M \text{diag}(\gamma)_{k,y}C'_{y,q}(-P_{i,p}P_{q,j})+\sum_{p}\sum_{x}[q==l]M \text{diag}(\gamma)_{x,k}C'^T_{p,x}(-P_{i,p}P_{q,j})\\
    &=\sum_{q}-P_{i,l}(M \text{diag}(\gamma))_{k,*}C'_{*,q}P_{q,j}+\sum_{p}-P_{l,j}C'^T_{p,*}(M \text{diag}(\gamma))_{*,k}P_{i,p}\\
    &=-(P_{i,l}(M \text{diag}(\gamma))_{k,*}C'P_{*,j}+P_{l,j}P_{i,*}C'^T(M \text{diag}(\gamma))_{*,k})\\
    &=-(P_{i,l}(M \text{diag}(\gamma)C'P)_{k,j}+P_{l,j}(PC'^TM(\text{diag}(\gamma)))_{i,k})
    \end{aligned}
    \end{equation}
    Therefore, the simplified result for the first term of the derivative in Equation \ref{RP:derivative} is:
    \begin{equation}
       \sum_{j}\frac{\partial P_{i,j}}{\partial C'_{k,l}}\beta_j=-((M \text{diag}(\gamma)C'P\beta)_kP_{i,l}+(P\beta)_l(PC'^TM \text{diag}(\gamma))_{i,k})
    \end{equation}
For the second term, we have:
\begin{equation}
\frac{\partial \beta_j}{\partial C'_{k,l}}=\frac{\partial (C'^T\eta)_j}{C'_{k,l}}
=\frac{\partial \sum_{p}C'_{p,j}\eta_p}{\partial C'_{k,l}}
= \eta_k[j==l]
\end{equation}
The simplified second term is thus
\begin{equation}
   \sum_{j}P_{i,j}\frac{\partial\beta_j}{\partial C'_{k,l}}=P_{i,l}\eta_{k} 
\end{equation}
Finally, the derivative can be written as
\begin{equation}
\frac{\partial x_i}{\partial C'_{k,l}}=-((M \text{diag}(\gamma)C'P\beta)_kP_{i,l}+(P\beta)_l(PC'^TM \text{diag}(\gamma))_{i,k})+P_{i,l}\eta_{k}
\end{equation}

\section{Benchmark Details : Dataset and Problem Settings}
\label{app:benchmark-details}
Our code is public in  \href{https://github.com/PredOptwithSoftConstraint/PredOptwithSoftConstraint}{the repo: } https://github.com/PredOptwithSoftConstraint/PredOptwithSoftConstraint.

\subsection{Synthetic Linear Programming}
\label{subapp:app-prob1}

\subsubsection{Prediction Dataset}\label{sub:pred_dataset}
We generate the synthetic dataset $\{\xi_i, \theta_i\}_{i=1}^N$ under a general structural causal model (\cite{peters2017elements}). In the original form, it is like:
\begin{align}
\begin{split}
z \sim N(0, \Sigma) \\
\xi = g(z) + \epsilon_1 \\
\theta = h(z) + \epsilon_2 \\
\end{split}
\end{align}
where $z$ is the latent variable, $\xi$ observed features of $z$, and $\theta$ the result variable caused by $z$. According to physical knowledge, $h$ can be a process of linear, quadratic, or bi-linear form. However, it is difficult to get an explicit form of reasonable $g$. Instead, $g^{-1}$ can be well-represented by deep neural networks.

Thus, alternatively, we use the following generative model:
\begin{align}
\begin{split}
\xi^* \sim N(0, \Sigma) \\
z = m(\xi^*) \\
\theta = h(z) + \epsilon_2 \\
\xi = \xi^* + \epsilon_1
\end{split}
\end{align}
where $m$ behaves as $g^{-1}$. In our experiment settings, $\Sigma=I+QQ^T$, where each element of $Q$ is generated randomly at uniform from $(0, 1)$. We set $m(x)=\sin(2\pi xB)$, where $B$ is a matrix whose elements are generated randomly at uniform in $\{0,1\}$, and $\sin$ is applied element-wisely. We implement $h(z)$ as a MLP with two hidden layers, and the output is normalized to $(0, 1]$ for each dimension through different data points. Finally, we add a noise of $0.01 \epsilon_x$ to $x$, where $\epsilon_x \sim N(0,1)$; and $0.01 \epsilon_{\theta}$ to $\theta$, where $\epsilon_{\theta}$ follows a truncated normal distribution which truncates a normal distribution $N(0,1)$ to $[0,1.5]$.

The dataset is split into training, validation and test sets with the proportions $50\%$, $25\%$, $25\%$ in respect. The batch size is set to $10$ for $N=100$, $50$ for $N=1000$, and $125$ for $N=5000$.

\subsubsection{Problem Settings}

We generate hard constraint $Ax\leq b$, and soft constraint $Cx\leq d$. Each element of $A$ or $C$ is first generated randomly at uniform within $(0, 1)$, then set to $0$ with probability of $0.5$. We generate $b,d$ as $b=0.5A\mathbf{1}$ and $d=0.25C\mathbf{1}$. The soft constraint coefficient $\alpha$ is generated randomly at uniform from $(0, 0.2)$ for each dimension. 

\subsection{Portfolio Optimization}

\subsubsection{Dataset}
The prediction dataset is daily price data of SP500 from 2004 to 2017 downloaded by Quandl API \cite{sp500-dataset} with the same settings in \cite{surrogate-melding}. Most settings are aligned with those in \cite{surrogate-melding}, including dataset configuration, prediction model, learning rate (initial $0.01$ with scheduler), optimizer (Adam), gradient clip ($0.01$), the number of training epochs ($20$), and the problem instance size (the number of equities being $\{50, 100, 150, 200, 250\}$).

\subsubsection{Problem Settings}
We set the number of soft constraints to $0.4$ times of $n$, where $n$ is the number of candidate equities. For the soft constraint $\alpha^T\max(Cx-d, 0)$, $\alpha=\frac{15}{n}v$, where each element of $v$ is generated randomly at uniform from $(0, 1)$; the elements of matrix $C$ are generated independently from $\{0, 1\}$, where the probability of $0$ is $0.9$ and $1$ is $0.1$. $K$ is set as $100$.

\subsection{Resource Provisioning}
\label{subapp:app:prob3}

\subsubsection{Dataset}
The ERCOT energy dataset \cite{ercot-dataset} contains hourly data of energy output from 2013 to 2018 with $52535$ data points. We use the first $70\%$ as the training set, the middle $10\%$ as the validation set, and the last $20\%$ as the test set. We normalize the dataset by dividing the labels by $10^4$, which makes the typical label becomes value around $(0.1, 1)$. We train our model with a batch size of $256$; each epoch contains $144$ batches. We aim to predict the matrix $C\in\mathbb{R}^{24\times 8}$, where $24$ represents the following $24$ hours and $8$ represents the $8$ regions, which are $\{\text{COAST, EAST, FWEST, NCENT, NORTH, SCENT, SOUTH, WEST}\}$. The data is drawn from the dataset of the corresponding region. the decision variable $x$ is $8$-dimensional, and $d=0.5\mathbf{1}+0.1N(0, 1)$. 

\subsubsection{Problem Settings}
We test five sets of $(\alpha_1, \alpha_2)$, which are $(50\times\mathbf{1}, 0.5\times\mathbf{1})$, $(5\times\mathbf{1}, 0.5\times\mathbf{1})$, $(\mathbf{1},\mathbf{1})$, $(0.5\times\mathbf{1}, 5\times\mathbf{1})$,  and $(0.5\times\mathbf{1}, 50\times\mathbf{1})$, against two-stage with $L1$-loss, $L2$-loss, and weighted $L1$-loss. We use AdaGrad as optimizer with learning rate $0.01$, and clip the gradient with norm $0.01$. The feature is a $(8\times 24\times 77)$-dimensional vector for each matrix $C$; we adopted McElWee's Blog \cite{kevein-blog} for the feature generation and the model. Weighted L1-loss has the following objective:
\begin{equation}
    \alpha_2^T\max(Cx-d, 0) + \alpha_1^T\max(d-Cx, 0)
\end{equation}
Note that $\alpha_2$ and $\alpha_1$ are exchanged in the objective. Intuitively, this is because an under-estimation of entries of $C$ will cause a larger solution of $x$, which in turn makes $C_{\text{real}}x-d$ larger at test time, and vice versa.
Another thing worth noting is that SPO+ cannot be applied to the prediction of the matrix $C$ or vector $d$, for it is designed for the scenario where the predicted parameters are in the objective. 

\REM{
In the synthetic linear programming experiment where the predicted parameter is $\theta$, we change $z=Cx-d$ to $z\geq\max(Cx-d, 0)$ and put $z$ into the decision variables, which makes the problem becoming
\begin{equation}
    \max \begin{bmatrix}\theta\\0\end{bmatrix}^T\begin{bmatrix}x\\z\end{bmatrix},\ s.t. \begin{bmatrix}A&0\\-I&0\\0&-I\\C&-I\end{bmatrix}\begin{bmatrix}x\\z\end{bmatrix}\leq\begin{bmatrix}b\\0\\0\\d\end{bmatrix}
\end{equation}
Note that by this conversion, $C$ and $d$ appear in the constraint, which makes the prediction+optimization over such variables infeasible. Also, the objective derivative upper bound $E$ for this question is at most $O(E_0\max(||\alpha_1||_2, ||\alpha_2||_2))$, where $E_0$ is the upper bound of the Frobenius norm of matrix $C$ (\ie{} $||C||_F$). 
}

\section{Supplementary Experiment Results}

%
\REM{
We provide supplementary results of our experiment in this section; they are part of the analysis of our empirical evaluation of methods, and are not appearing in the main paper due to the space limit.}

\subsection{Linear Programming with Soft Constraints}\label{subapp:lp-result}

\textbf{The effect of $K$.} Table \ref{tab:allK} provides the mean and standard deviation of regrets under varying values of $K$. \REM{Although there are no explicit rules on the formula of $K$ with respect to problem size or $N$,} we can see that our method performs better than all other methods with most settings of $K$. 

Empirically, to find an optimal $K$ for a given problem and its experimental settings, a grid search with roughly adaptive steps suffices. For example,in our experiments, we used a proposal where neighbouring coefficient has $5x$ difference (e.g. $\{0.2, 1, 5, 25, 125\}$) works well. Quadratic objective, as in the second experiment, usually requires larger $K$ than linear objective as in the first and the third experiment.

\begin{table}[h]
    \centering
    \label{tab:allK}
    \begin{tabular}{ccccccc}
    & & \multicolumn{4}{c}{Regret}\\
    \hline 
    $N$ & Problem Size & ours($K=0.2$) & $K=1.0$ & $K=5.0$ & $K=25.0$ & $K=125.0$\\
    \hline
        100 & (40, 40, 0) & 2.423$\pm$0.305 & 2.378$\pm$0.293 & 2.265$\pm$0.238 & 2.301$\pm$0.325 & \textbf{2.258$\pm$0.311} \\
            & (40, 40, 20) & 2.530$\pm$0.275 & 2.574$\pm$0.254 & 2.384$\pm$0.277 & 2.321$\pm$0.272 & \textbf{2.350$\pm$0.263} \\
            & (80, 80, 0) & \textbf{5.200$\pm$0.506} & 5.529$\pm$0.307 & 5.610$\pm$0.351 & 5.563$\pm$0.265 & 5.502$\pm$0.353\\
            & (80, 80, 40) & \textbf{4.570$\pm$0.390} & 4.747$\pm$0.470 & 4.758$\pm$0.531 & 4.780$\pm$0.450 & 4.796$\pm$0.624 \\
    \hline 
       1000 & (40, 40, 0) & 1.379$\pm$0.134 & 1.387$\pm$0.130 & 1.365$\pm$0.147 & \textbf{1.346$\pm$0.144} & 1.352$\pm$0.140 \\
            & (40, 40, 20) & 1.587$\pm$0.112 & 1.553$\pm$0.114 & 1.523$\pm$0.109 & 1.507$\pm$0.102 & \textbf{1.506$\pm$0.102}\\
            & (80, 80, 0) & 3.617$\pm$0.125 & \textbf{3.431$\pm$0.100} & 3.540$\pm$0.105 & 3.513$\pm$0.097 & 3.507$\pm$0.117 \\
            & (80, 80, 40) & \textbf{2.781$\pm$0.165} & 2.817$\pm$0.167 & 2.819$\pm$0.155 & 2.820$\pm$0.157 & 2.826$\pm$0.197\\
    \hline
       5000 & (40, 40, 0) & 1.041$\pm$0.094 & 1.065$\pm$0.098 & 1.045$\pm$0.099 & 1.038$\pm$0.102 & \textbf{1.037$\pm$0.100}\\
            & (40, 40, 20) & 1.278$\pm$0.072 & 1.248$\pm$0.076 & 1.223$\pm$0.075 & \textbf{1.220$\pm$0.071} & 1.221$\pm$0.074\\
            & (80, 80, 0) & 3.074$\pm$0.098 & \textbf{2.845$\pm$0.064} & 2.884$\pm$0.100 & 2.889$\pm$0.083 & 2.864$\pm$0.086\\
            & (80, 80, 40) & 2.217$\pm$0.123 & 2.222$\pm$0.111 & \textbf{2.172$\pm$0.098} & 2.174$\pm$0.113 & 2.177$\pm$0.105\\
    \hline
    \end{tabular}
    \caption{The mean and standard deviation of the regret of all $K$ in our experiment.}
    \label{tab:allK}
\end{table}
\textbf{Prediction error.} Table \ref{tab:MSE1} and Table \ref{tab:MSE2} are the mean and standard deviation of prediction MSE for all methods. We can find that two-stage methods have an advantage over end-to-end methods on prediction error. however, as demonstrated in our experiments, such advantage does not necessarily lead to the advantage on the regret performance. Surprisingly, although DF\cite{aaai-WilderDT19} works generally on par with other baselines in regret performance, it generates a very large MSE error compared with our method; closer examination suggests that there are some outlier cases where the prediction loss is magnitudes higher than others. Also, the prediction error of DF gets much higher when no soft constraint exists, which is probably because the optimal solution remains the same when all parameters to predict are scaled by a constant factor.

\begin{table}
    \centering
    
    \begin{tabular}{ccccccc}
    & & \multicolumn{5}{c}{MSE of $\theta$}\\
    \hline 
    $N$ & Problem Size & L1 & L2 & SPO+ & DF & ours ($K=0.2$)\\
    \hline
        100 & (40, 40, 0) & \textbf{0.062$\pm$0.006} & 0.063$\pm$0.008 & 0.079$\pm$0.150 & 14.592$\pm$9.145 & 0.094$\pm$0.039 \\
            & (40, 40, 20) & 0.061$\pm$0.005 & 0.062$\pm$0.006 & 0.071$\pm$0.006 & 6.414$\pm$4.451& \textbf{0.061$\pm$0.007}\\
            & (80, 80, 0) & \textbf{0.058$\pm$0.004} & 0.059$\pm$0.004 & 0.066$\pm$0.011 & 28.768$\pm$13.90& 0.247$\pm$0.009\\
            & (80, 80, 40) & \textbf{0.059$\pm$0.004} & 0.059$\pm$0.004 & 0.085$\pm$0.008 & 8.98$\pm$ 9.99& 0.061$\pm$0.005 \\
    \hline 
       1000 & (40, 40, 0) & 0.031$\pm$0.001 & \textbf{0.031$\pm$0.001} & 0.034$\pm$0.002 &34.931$\pm$14.309 & 0.036$\pm$0.003 \\
            & (40, 40, 20) & \textbf{0.031$\pm$0.001} & 0.031$\pm$0.001 & 0.033$\pm$0.001 & 9.255$\pm$8.497 & 0.033$\pm$0.001\\
            & (80, 80, 0) & 0.030$\pm$0.001 & \textbf{0.029$\pm$0.001} & 0.032$\pm$0.002 & 90.590$\pm$35.298 & 0.222$\pm$0.009 \\
            & (80, 80, 40) & 0.030$\pm$0.001 & \textbf{0.029$\pm$0.001} & 0.033$\pm$0.001 & 1.060$\pm$2.395 & 0.033$\pm$0.002\\
    \hline
       5000 & (40, 40, 0) & \textbf{0.022$\pm$0.000} & 0.022$\pm$0.000 & 0.025$\pm$0.001 & 72.083$\pm$46.685& 0.025$\pm$0.001\\
            & (40, 40, 20) & 0.022$\pm$0.000 & \textbf{0.022$\pm$0.000} & 0.238$\pm$0.001 & 235.789$\pm$842.939& 0.243$\pm$0.001\\
            & (80, 80, 0) & 0.021$\pm$0.000 & \textbf{0.021$\pm$0.000} & 0.023$\pm$0.001 & 147.144$\pm$51.101& 0.196$\pm$0.011\\
            & (80, 80, 40) & 0.020$\pm$0.000 & \textbf{0.020$\pm$0.000} & 0.022$\pm$0.001 & 0.372$\pm$0.311 & 0.024$\pm$0.001\\
    \hline
    \end{tabular}
    \caption{The mean and standard deviation of the MSE loss of the predicted variable $\theta$ in the synthetic experiment on the test set. The two-stage methods have better performance on the accuracy of prediction.}
    \label{tab:MSE1}
\end{table}

\begin{table}
    \centering
    \begin{tabular}{cccccc}
    & & \multicolumn{4}{c}{MSE of $\theta$}\\
    \hline 
    $N$ & Problem Size & ours($K=1.0$) & ours($K=5.0$) & ours($K=25.0$) & ours($K=125.0$)\\
    \hline
        100 & (40, 40, 0) & 0.068$\pm$0.008 & 0.091$\pm$0.037 & 0.188$\pm$0.089 & 0.204$\pm$0.079 \\
            & (40, 40, 20) & 0.094$\pm$0.031 & 0.204$\pm$0.074 & 0.244$\pm$0.105 & 0.343$\pm$0.200  \\
            & (80, 80, 0) & 0.129$\pm$0.049 & 0.257$\pm$0.182 & 0.641$\pm$0.476 & 0.816$\pm$0.351\\
            & (80, 80, 40) & 0.094$\pm$0.013 & 0.151$\pm$0.035 & 0.172$\pm$0.091 & 0.177$\pm$0.042\\
    \hline 
       1000 & (40, 40, 0) & 0.036$\pm$0.002 & 0.056$\pm$0.010 & 0.143$\pm$0.048 & 0.195$\pm$0.095\\
            & (40, 40, 20) & 0.060$\pm$0.010 & 0.127$\pm$0.051 & 0.172$\pm$0.071 & 0.193$\pm$0.078\\
            & (80, 80, 0) & 0.114$\pm$0.070 & 0.220$\pm$0.097 & 1.124$\pm$0.341 & 1.407$\pm$0.439\\
            & (80, 80, 40) & 0.039$\pm$0.002 & 0.060$\pm$0.006 & 0.078$\pm$0.010 & 0.080$\pm$0.008 \\
    \hline
       5000 & (40, 40, 0) & 0.026$\pm$0.001 & 0.033$\pm$0.004 & 0.076$\pm$0.025 & 0.089$\pm$0.037\\
            & (40, 40, 20) & 0.036$\pm$0.004 & 0.073$\pm$0.019 & 0.110$\pm$0.038 & 0.113$\pm$0.026\\
            & (80, 80, 0) & 0.152$\pm$0.079 & 0.097$\pm$0.040 &  0.703$\pm$0.177 & 0.895$\pm$0.205 \\
            & (80, 80, 40) & 0.026$\pm$0.001 & 0.032$\pm$0.003 & 0.038$\pm$0.004 & 0.040$\pm$0.007 \\
    \hline
    \end{tabular}
    \caption{The mean and standard deviation of the MSE loss of the predicted variable $\theta$ in the synthetic experiment for different values of $K$.}
    \label{tab:MSE2}
\end{table}

\textbf{Detailed behaviour of SPO+.} As mentioned in the main paper, SPO+ quickly becomes overfited in our experiments, shown by Table \ref{tab:epoches} as empirical evidence. In our experiments, early stopping starts at epoch $8$, and maximum number of run epochs is $40$; the results show that SPO+ stops much earlier than other methods under most settings. Figure \ref{fig:overfit} plots the timely comparison of test performance during training, where SPO+ get overfited quickly, although it performs better in the earlier period of training.

\begin{figure}[h]
    \centering
    \includegraphics[width=6cm]{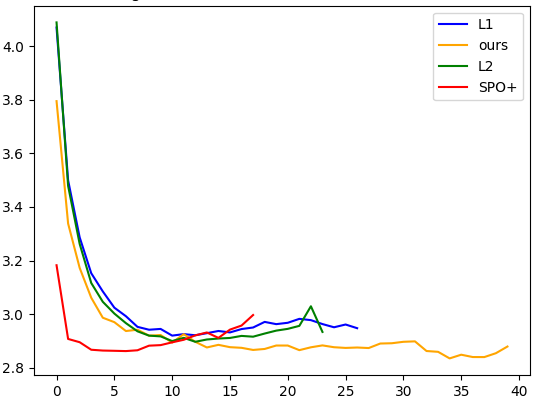}
    \caption{The average regret on test set of $K=25.0$, training set size $5000$ and problem size $(80, 80, 0)$ with respect to the number of epochs (methods other than ours does not reach $40$ in this figure, for all runs are early-stopped before epoch $40$). While SPO+ is better than two-stage in the best performance, it overfits rather quickly.}
    \label{fig:overfit}
\end{figure}

\begin{table}[h]
    \centering
    \begin{tabular}{ccccccc}
    & & \multicolumn{5}{c}{Average Episode}\\
    \hline 
    $N$ & Problem Size & L1 & L2 & SPO+ & DF & ours ($K=0.2$)\\
    \hline
        100 & (40, 40, 0) & 20.46$\pm$10.20 & 12.53$\pm$3.00 & 19.6$\pm$8.58 &29.53$\pm$10.24& 20.53$\pm$10.39 \\
            & (40, 40, 20) & 21.47$\pm$9.58 & 14.13$\pm$4.47 & 17.8$\pm$9.98 &25.46$\pm$11.41& 22.47$\pm$10.29\\
            & (80, 80, 0) & 22.6$\pm$5.78 & 22.6$\pm$8.52 & 20.07$\pm$7.23 &30.87$\pm$8.83& 24.53$\pm$8.02\\
            & (80, 80, 40) & 25.2$\pm$8.38 & 24.73$\pm$8.88 & 19.33$\pm$764 &24$\pm$9.06& 28.33$\pm$8.89 \\
    \hline 
       1000 & (40, 40, 0) & 14.67$\pm$2.61 & 13.2$\pm$1.74 & 10$\pm$2.04 & 27.93$\pm$7.42 & 13.6$\pm$2.53 \\
            & (40, 40, 20) & 13.2$\pm$2.37 & 13.4$\pm$2.50 & 10.33$\pm$1.40 & 24.6$\pm$8.14 & 10.67$\pm$1.23\\
            & (80, 80, 0) & 19.07$\pm$2.25 & 18.07$\pm$1.83 & 11.87$\pm$2.00 & 25.2$\pm$6.41 & 24.27$\pm$11.02 \\
            & (80, 80, 40) & 19.93$\pm$2.66 & 19$\pm$1.31 & 12.33$\pm$1.95 &25.4$\pm$9.22& 17.73$\pm$4.08\\
    \hline
       5000 & (40, 40, 0) & 16.47$\pm$2.59 & 16.27$\pm$1.33 & 11.87$\pm$2.10 & 23.33$\pm$7.34 & 11.8$\pm$2.60\\
            & (40, 40, 20) & 17.13$\pm$2.59 & 15.27$\pm$1.79 & 12.8$\pm$2.88 & 21.67$\pm$7.95 & 13.4$\pm$2.77\\
            & (80, 80, 0) & 20.67$\pm$3.20 & 20.8$\pm$2.18 & 13.7$\pm$2.16 & 29.33$\pm$7.22 & 30.07$\pm$9.48\\
            & (80, 80, 40) & 20.47$\pm$2.64 & 21$\pm$2.88 & 12.2$\pm$1.61 & 17.6$\pm$6.84& 18.53$\pm$5.26\\
    \hline
    \end{tabular}
    \caption{The mean and standard deviation of the number of epochs run in each instance (capped at $40$). Note that SPO+ is significantly more prone to overfitting in our experiment settings.}
    \label{tab:epoches}
\end{table}

\textbf{Statistical significance tests.} Table \ref{tab:pvalue01} shows the results of significance test (one-tailed paired $t$-test), with the assumption that the means of two distributions are the same. The results show that our method is significantly better than other methods under almost all settings.

\begin{table}[h]
    \centering
    \begin{tabular}{cccccc}
    & & \multicolumn{4}{c}{$t$-value}\\
    \hline 
    $N$ & Problem Size & L1 & L2 & SPO+ & DF\\
    \hline
        100 & (40, 40, 0) & 3.884 & 4.615 & 5.194 & 1.614\\
            & (40, 40, 20) & 5.173 & 5.248 & 6.836 & 1.569\\
            & (80, 80, 0) & 6.024 & 5.661 & 4.523 & 3.610\\
            & (80, 80, 40) & 2.308 & 1.724 & 3.622 & 1.936\\
    \hline 
       1000 & (40, 40, 0) & 8.438 & 5.271 & 4.860 & 1.115\\
            & (40, 40, 20) & 6.547  & 6.212 & 8.562 & 2.620\\
            & (80, 80, 0) & 12.620 & 10.925 & 7.979 & 2.751\\
            & (80, 80, 40) & 7.116 & 4.465 & 6.223 & 8.230\\
    \hline
       5000 & (40, 40, 0) & 7.130 & 7.808 & 7.546 & 1.175\\
            & (40, 40, 20) & 10.410 & 10.238 & 6.555 & 2.382\\
            & (80, 80, 0) & 8.361  & 7.502  & 6.874 & 0.869\\
            & (80, 80, 40) & 6.550 & 4.388 & 5.066 & 11.620\\
    \hline
    \end{tabular}
    \caption{The $t$-value of the one-tailed paired $t$-test between all other methods and our methods under optimal $K$.}
    \label{tab:pvalue01}
\end{table}

\begin{table}[h]
    \centering
    \begin{tabular}{cccccc}
    & &  \multicolumn{4}{c}{$p$-value}\\
    \hline 
    $N$ & Problem Size & L1 & L2 & SPO+ & DF \\
    \hline
        100 & (40, 40, 0) & 8.272 $\times 10^{-4}$ & 2.006$\times 10^{-4}$ & 6.809 $\times 10^{-5}$ & 0.058\\
            & (40, 40, 20) & 7.076$\times 10^{-5}$ & 6.162$\times 10^{-5}$ & 4.054$\times 10^{-6}$ & 0.064\\
            & (80, 80, 0) & 1.561$\times 10^{-5}$ & 2.936$\times 10^{-5}$ & 2.388$\times 10^{-4}$ & 5$\times 10^{-4}$\\
            & (80, 80, 40) & 0.018 & 0.053 & 0.001 & 0.032\\
    \hline 
       1000 & (40, 40, 0) & 3.662$\times 10^{-7}$ & 5.917$\times 10^{-5}$ & 1.263 $\times 10^{-4}$ & 0.137\\
            & (40, 40, 20) & 6.489$\times 10^{-6}$ & 1.133$\times 10^{-5}$ & 3.078$\times 10^{-7}$ & 0.007\\
            & (80, 80, 0) & 2.444$\times 10^{-9}$ & 1.544$\times 10^{-8}$ & 7.064$\times 10^{-7}$ & 0.005\\
            & (80, 80, 40) & 2.600$\times 10^{-6}$ & 2.667$\times 10^{-4}$ & 1.114$\times 10^{-5}$ & 2.937$\times 10^{-9}$\\
    \hline
       5000 & (40, 40, 0) & 2.545$\times 10^{-6}$ & 9.074$\times 10^{-7}$ & 1.342$\times 10^{-6}$ & 0.125\\
            & (40, 40, 20) & 2.834$\times 10^{-8}$& 3.487$\times 10^{-8}$ & 6.406$\times 10^{-6}$ & 0.012\\
            & (80, 80, 0) & 4.082$\times 10^{-7}$ & 1.435$\times 10^{-6 }$ & 3.818$\times 10^{-6}$ & 0.196\\
            & (80, 80, 40) & 6.460$\times 10^{-6}$ & 3.097$\times 10^{-4}$ & 8.606$\times 10^{-5} $ & 1.585$\times 10^{-12}$\\
    \hline
    \end{tabular}
    \caption{The $p$-value of the one-tailed paired $t$-test between all other methods and our methods under optimal $K$.}
    \label{tab:pvalue01}
\end{table}

\subsection{Portfolio Optimization}

\textbf{Statistical significance tests.} Table \ref{tab:pvalue02} shows the result of significance test (one-tailed paired $t$-test), with the assumption that the means of two distributions are the same. The results show that our method is significantly better than other methods under every setting.

\begin{table}[h]
    \centering
    \begin{tabular}{ccccccc}
          & \multicolumn{3}{c}{$t$-value} & \multicolumn{3}{c}{$p$-value}\\
          \hline
         \#Equities & L1 & L2 & DF & L1 & L2 & DF \\
         \hline
        50 & 10.329 & 10.628 & 6.141 & 3.213$\times 10^{-8}$ & 2.186$\times 10^{-8}$ & 1.279$\times 10^{-5}$\\
        100 & 21.414 & 23.995 & 6.200 & 2.125$\times 10^{-12}$ & 4.495$\times 10^{-13}$ & 1.157$\times 10^{-5}$\\
        150 & 13.402 & 14.503 & 5.690 & 1.119$\times 10^{-9}$ & 3.971$\times 10^{-10}$ & 2.789$\times 10^{-5}$\\
        200 & 13.617 & 13.104 & 4.329 & 9.086$\times 10^{-10}$ & 1.500$\times 10^{-9}$ & 3.465$\times 10^{-4}$\\
        250 & 13.904 & 14.483 & 3.205 &  6.915$\times 10^{-10}$ & 4.044$\times 10^{-10}$ & 0.003  \\
    \hline
    \end{tabular}
    \caption{The $t$-value and $p$-value of the one-tailed paired $t$-test between all other methods and our methods.}
    \label{tab:pvalue02}
\end{table}

\subsection{Resource Provisioning}

\textbf{Statistical significance tests.} Table \ref{tab:pvalue03} shows the result of significance test (one-tailed paired $t$-test), with the assumption that the means of two distributions are the same.

\begin{table}[h]
    \centering
    \begin{tabular}{ccccccc}
     & \multicolumn{3}{c}{$t$-value} & \multicolumn{3}{c}{$p$-value}\\
          \hline
        $\alpha_1/\alpha_2$ & L1 & L2 & Weighted L1 & L1 & L2 & Weighted L1\\
        \hline
        100 & 13.133 & 9.357 & 6.429 & 2.914$\times 10^{-9}$& 2.115$\times 10^{-7}$ & 1.576$\times 10^{-5}$  \\
        10 & 12.484 & 14.231 & 10.897 & 5.622$\times 10^{-9}$ & 1.493$\times10^{-6}$ & 3.190 $\times 10^{-8}$ \\
        0.1 & 1.184 & 5.073 & 11.001 & 0.127 & 8.486 $\times 10^{-5}$ & 1.416 $\times 10^{-8}$ \\
        0.01 & 0.535 & 1.998 & 5.083 & 0.300 & 0.032 & 8.33$\times 10^{-5}$ \\
    \hline
    \end{tabular}
    \caption{The $t$-value ($p$-value) of the one-tailed paired $t$-test between all other methods and our methods.}
    \label{tab:pvalue03}
\end{table}

\section{Computing Infrastructure}
All experiments are conducted on Linux Ubuntu 18.04 bionic servers with 256G memory and 1.2T disk space with no GPU, for GPU does not suit well with Gurobi.\footnote{see Gurobi's official support website: https://support.gurobi.com/hc/en-us/articles/360012237852-Does-Gurobi-support-GPUs-} Each server has 32 CPUs, which are Intel Xeon Platinum 8272CL @ 2.60GHz.

For the first experiment, we use a few minutes to get one set of data \footnote{Running all methods simultaneously under one particular parameter setting.} with training set size $100$, and about $4-5$ hours to get one set of data with training set size $5000$. For the second experiment, we use about an hour to get one set of data with $N=50$, and $2-3$ days to get one set of data with $N=250$. For the third experiment, we use around $6-7$ hours to obtain one set of data. Though our method is slower to train than two-stage methods, it is 2-3x faster to train than KKT-based decision-focused method.

\end{document}